\newif\ifpreprint
\newif\ifdefertext

\preprinttrue 
\defertextfalse

\ifpreprint
\documentclass[letterpaper]{article} 
\usepackage[preprint]{aaai2027}
\else
\documentclass[letterpaper]{article} 
\usepackage[submission]{aaai2027}
\fi  

\usepackage{environ}
\usepackage{etoolbox}

\newcommand{\deferrednotes}{}

\NewEnviron{defertext}[2]{%
  \ifdefertext
    \BODY
  \else
    #1%
    \gappto\deferrednotes{\subsection*{#2}}%
    \expandafter\gappto\expandafter\deferrednotes\expandafter{\BODY}%
  \fi
}

\usepackage{times}  
\usepackage{helvet}  
\usepackage{courier}  
\usepackage[hyphens]{url}  
\usepackage{graphicx} 
\usepackage{natbib}  
\usepackage{caption} 
\usepackage{algorithm}
\usepackage{algorithmic}
\usepackage{amsmath}
\usepackage{booktabs}

\usepackage{graphicx}
\usepackage[disable]{todonotes} 
\usepackage{amssymb}
\usepackage{dsfont} 
\newcommand{\mathbbm}[1]{\mathds{#1}} 
\usepackage{amsthm}
\usetikzlibrary{positioning,arrows.meta,calc}
\allowdisplaybreaks    
\theoremstyle{plain}
\newtheorem{theorem}{Theorem}
\newtheorem{proposition}{Proposition}

\newtheorem{lemma}{Lemma}  
\theoremstyle{definition}
\newtheorem{definition}{Definition}
\theoremstyle{remark}
\newtheorem{remark}{Remark}
\DeclareMathOperator*{\argmax}{arg\,max}

\newcommand{\Real}{\mathbb{R}}
\newcommand{\tuple}[1]{\langle #1 \rangle}

\usepackage{newfloat}
\usepackage{listings}
\DeclareCaptionStyle{ruled}{labelfont=normalfont,labelsep=colon,strut=off} 
\floatstyle{ruled}
\newfloat{listing}{tb}{lst}{}
\floatname{listing}{Listing}
\title{Reward Structure Shapes the Interaction Between Episodic Exploration and Neural Memory in Reinforcement Learning}

\author{Jai Malegaonkar\equalcontrib\corresponding, Rohan Patil\equalcontrib\corresponding, Henrik I. Christensen}

\affiliations{
Department of Computer Science and Engineering, UC San Diego, United States\\
\{jmalegaonkar, rpatil, hichristensen\}@ucsd.edu
}
\date{\today}

\begin{document}

\maketitle

\begin{abstract}
In partially observable reinforcement learning, agents face a dual bottleneck: they must explore to encounter rewarding states and retain that experience in memory to optimize their policies. Exploration bonuses and memory architectures are traditionally evaluated in isolation, leaving their interaction unmeasured, and standard notions of sparse reward conflate temporal signal density with what the reward actually supervises. We present a controlled study crossing episodic exploration bonuses with diverse neural memory architectures across three environments that vary how the content of memory is acquired. An identical bonus signal yields three distinct interaction patterns: it \emph{amplifies} architectural capacity differences where memory content must be actively discovered and retained unsupervised; \emph{equalizes} architectures to a shared ceiling where the content, once sought out, is a single reward-supervised cue; and is \emph{null} where the observation stream is purely scheduled. Controlled reward manipulations verify that these patterns track reward \emph{structure} rather than density: a dense reward neutralizes a bonus only if it directly supervises the required latent memory, and a small avoidable penalty on exploratory actions (leaving the optimum unchanged) induces policy convergence to suboptimal stationary states, which either bonus resolves. We then formalize reward sparsity with observation-anchored reward machines, separating \emph{structural} sparsity (an automaton reproduces the return without the task-required history) from \emph{potential} sparsity (the one-step reward misprices local exploratory actions); the resulting vocabulary organizes the three regimes by the retention burden each task exposes. Together, these results show exploration and memory are complements, not substitutes: a bonus induces exposure, and only memory converts exposure into return.
\end{abstract}

\section{Introduction}
\label{sec:intro}

An agent acting under partial observability cannot condition on the true state of the world. It must instead act on a summary of its interaction history $(o_0, a_0, \ldots, o_t)$~\citep{kaelbling1998planning}, which deep reinforcement learning approximates with a recurrent hidden state. Two conditions are necessary for this representation to be useful, and each depends on the other: a policy cannot reliably reach rewarding states without an adequate memory representation, and the recurrent state cannot learn to retain critical information without a reachable reward signal to provide gradients.

Exploration and memory are nonetheless studied in isolation. Memory benchmarks isolate recurrent capacity by holding the training signal fixed~\citep{morad2023popgym, pleines2023memorygym, smirnov2025rlbenchnet}, whereas exploration studies evaluate bonuses under a fixed architecture~\citep{henaff2022e3b, zhang2021noveld, yuan2024rlexplore}. Two questions therefore remain open: does the effect of an episodic bonus depend on the \emph{choice} of memory architecture, and what property of the task governs that dependence? Standard terminology provides limited insight into the second question, as the conventional notion of ``sparse reward'' measures signal frequency, conflating temporal density with what the reward actually supervises. We address both questions by crossing episodic bonuses with diverse memory architectures under controlled reward manipulations, and by formalizing reward sparsity in terms of what the reward supervises rather than how often it pays.

We contribute the following:
\begin{itemize}
    \item \textbf{A comprehensive evaluation of episodic bonuses across diverse memory architectures, yielding three distinct interaction patterns.} Holding the training stack constant, an identical episodic bonus \emph{amplifies} the performance spread between architectures where memory content must be actively discovered; \emph{equalizes} architectures to a shared performance ceiling where critical information is initially observable but temporally distant; and yields a \emph{null} effect where the observation stream is purely scheduled.

    \item \textbf{Causal verification that reward \emph{structure}, not reward \emph{density}, governs bonus efficacy.} Holding the environment, architectures, and hyperparameters fixed, we vary solely the target of the reward signal. A dense reward that directly supervises the required memory renders an exploration bonus redundant and often detrimental, whereas a density-matched reward evaluating task-irrelevant transitions preserves its positive effect. A small, avoidable penalty on exploratory actions---which leaves the theoretical optimum unchanged---further drives every architecture into suboptimal stationary policies, which either bonus escapes independently of its magnitude relative to the penalty.

    \item \textbf{A reward-machine formalization of reward sparsity.} We equip POMDPs with observation-anchored reward machines~\citep{toroicarte2018reward} to separate two properties the frequency-based notion conflates: \emph{structural} sparsity, where an automaton reproduces the expected return without retaining the task-required history, leaving the latent memory representation unsupervised; and \emph{potential} sparsity, where the one-step reward misprices local exploratory actions. Classifying each task under a reward-preserving abstraction separates the three empirical patterns by the retention burden each exposes.
\end{itemize}

Adjacent work has examined parts of this space: intrinsic rewards underperforming with LSTMs~\citep{yuan2024rlexplore}, exploration at fixed architecture~\citep{impact2025}, memory decoupled from credit assignment~\citep{ni2023decoupling}, and intrinsic motivation unified at the policy level~\citep{lidayan2025bamdp}. However, none evaluates architecture, bonus, and reward structure simultaneously. Exploration and memory are complementary, not interchangeable: state coverage is ineffective without the representational capacity to retain it, and high-capacity memory requires sufficient coverage to provide a meaningful training signal.

The paper proceeds as follows. We review the partially observable setting alongside foundational concepts of exploration bonuses, reward shaping, and reward machines. We then detail the experimental design across architectures, bonuses, environments, and reward variants. We present the three interaction patterns and demonstrate through controlled manipulations that they track reward structure rather than density.We subsequently formalize reward sparsity as an ordered diagnostic and conclude with a discussion of the results and the limitations of the proposed framework.
\section{Background}
\label{sec:preliminaries}

\subsection{POMDPs and Memory in RL}
\label{subsec:memory}
A finite partially observable Markov decision process (POMDP) is a tuple $\mathcal{M}=\langle S,A,O,p,\omega,r,\gamma,\mu\rangle$: finite state, action, and observation sets, transition kernel $p(s'\mid s,a)$, observation distribution $\omega(o\mid s')$, deterministic reward over traces $r:(S\times A)^*\times S\to\mathbb{R}$, discount $\gamma\in[0,1)$, and initial distribution $\mu$. An agent never observes $s_t$; it receives $o_t\sim\omega(\cdot\mid s_t)$ and acts on the history $h_t=(o_0,a_0,\ldots,o_t)$. An optimal policy is therefore a function of a sufficient statistic of $h_t$~\citep{kaelbling1998planning}.

Deep RL approximates this statistic with learned recurrent architectures, ranging from LSTMs and GRUs~\citep{hausknecht2015deep, kapturowski2019recurrent} to state-space and linear-recurrent models~\citep{gu2023mamba, lu2023structured}. We reserve \emph{memory} for this recurrent state, distinct from episodic replay buffers~\citep{badia2020never}. Prior benchmarks isolate memory capacity by holding the training signal fixed~\citep{morad2023popgym, pleines2023memorygym, smirnov2025rlbenchnet}, leaving the effect of an episodic bonus on the performance gaps between architectures unmeasured.

\subsection{Exploration Bonuses}
\label{subsec:episodic_bonuses}
Intrinsic bonuses augment extrinsic rewards with a novelty signal. \emph{Global} bonuses score a state against lifelong experience~\citep{bellemare2016unifying, pathak2017curiosity, burda2019exploration}; \emph{episodic} bonuses pay within-trajectory novelty and reset every episode~\citep{savinov2019episodic}, driving performance in procedurally generated environments~\citep{henaff2022e3b, henaff2023study}. We study two episodic bonuses alongside a no-bonus control.

E3B pays the elliptical novelty of the current observation from the regularized covariance of features $\phi$ trained by an inverse-dynamics model:
\begin{equation}
\label{eq:e3b}
    \begin{aligned}
        b^{\mathrm{e3b}}_t
        &= \phi(o_t)^{\top}\,C_{t-1}^{-1}\,\phi(o_t), \\
        C_t &= \sum_{i\le t} \phi(o_i)\,\phi(o_i)^{\top} + \lambda I,
    \end{aligned}
\end{equation}
with $C$ reset at episode boundaries~\citep{henaff2022e3b}. NovelD pays the positive difference in lifelong RND novelty $w(\cdot)$ across a transition, gated by an episodic first-visit indicator $N_{\mathrm{ep}}$~\citep{zhang2021noveld}:
\begin{equation}
\label{eq:noveld}
  b^{\mathrm{nvd}}_t
  = \max\!\big(w(o_{t+1})-\alpha\,w(o_t),\,0\big)\cdot\mathbbm{1}\big[N_{\mathrm{ep}}(o_{t+1})=1\big].
\end{equation}
In discrete navigation tasks, E3B's bonus approximates an episodic position count. The two methods therefore differ primarily in their temporal credit assignment: E3B distributes a graded score at every step, whereas NovelD's first-visit gate concentrates payment on a sparse subset of transitions. Both evaluate novelty over raw observations $o_t$, whereas the governing quantity under partial observability is a statistic of history, which the recurrent state approximates. Consequently, the exploration bonus and the memory architecture are structurally coupled.

Because episodic bonuses functionally modify the reward, the theory of policy invariance applies. \citet{ng1999policy} established that a shaping term preserves optimal behavior exactly when it is potential-based: $\gamma\Phi(s')-\Phi(s)$ for some potential $\Phi$ over states. \citet{wiewiora2003equivalence} related such shaping to an equivalent initialization of the value function. Neither E3B nor NovelD is potential-based, thus lacking invariance guarantees; these bonuses may alter the optimal policy. Our aligned reward variant tests this dynamic directly, and potential functions provide the formal basis for our definition of potential sparsity.

\subsection{Reward Machines}
\label{subsec:reward_machines}
A reward machine (RM) summarizes the reward-relevant history of a non-Markovian reward in a finite-state automaton over high-level propositions~\citep{toroicarte2018reward, toroicarte2022reward}. An RM is a tuple $\mathcal{R}=\langle U,u_0,\delta_u,\delta_r \rangle$: a finite state set $U$, an initial state $u_0\in U$, a state-transition function $\delta_u:U\times 2^{\mathcal{P}}\to U$ over a finite proposition set $\mathcal{P}$, and a reward-transition function $\delta_r:U\times 2^{\mathcal{P}}\to\mathbb{R}$.\footnote{We write $\bar{U}:=U\cup F$ for the union of non-terminal and terminal states and use $\bar{U}$ where the distinction is immaterial; \citet{toroicarte2018reward} keep the two strictly separate.} A Markov decision process with a reward machine (MDPRM) is $\mathcal{M'}=\langle S,A,p,\gamma,\mu,P,L,\mathcal{R}_{PSA} \rangle$, equipped with a labelling function $L:S\times A\times S\to 2^{P}$. Subscripts on $\mathcal{R}$ record the labelling domain: $\mathcal{R}_{PSA}$ over propositions, states, and actions; $\mathcal{R}_{POA}$ over propositions, observations, and actions.

The framework extends to POMDPs by anchoring labels to observations, with $L:O\times A\times O\to 2^{\mathcal{P}}$ advancing the machine along a trace. Writing $x_t$ for the RM state at step $t$ ($x_0=u_0$, $x_t=\delta_u(x_{t-1},L(o_{t-1},a_{t-1},o_t))$), an RM is \emph{perfect} for $\mathcal{M}$ w.r.t.\ $L$~\citep{toroicarte2019learning, toroicarte2023learning} iff
\begin{equation}
  \Pr\!\big(o_{t+1}, r_t\mid o_0, a_0, \dots, o_t, a_t\big)
  =
  \Pr\!\big(o_{t+1}, r_t\mid o_t, x_t, a_t\big).
  \label{eq:perfect_rm}
\end{equation}
Augmenting the observation with a perfect RM state renders the environment Markovian over $O\times U$. This transformation strictly isolates tasks where the native reward supervises memory-advancing transitions from those where it does not.
\section{Experimental Setup}
\label{sec:setup}

\paragraph{Common Design.}
Every experiment crosses six architectures (GRU, LSTM, RetNet, GatedDeltaNet, Mamba-2, Memoryless) with a no-bonus control (\emph{none}) and the episodic bonuses E3B (Eq.~\ref{eq:e3b}) and NovelD (Eq.~\ref{eq:noveld}), evaluated as a complete matrix across all three environments. This lineup spans standard gated RNNs (GRU, LSTM), retention and linear-attention models (RetNet, GatedDeltaNet), a state-space model (Mamba-2), and a memoryless control. We run online recurrent PPO with truncated backpropagation through time.

\paragraph{Bonus and memory: coupled, but separate.}
The exploration bonus is strictly unconditioned on the agent's memory (Figure~\ref{fig:coupling}). The bonus encoder $\phi$ is a separate flat MLP that observes only the instantaneous observation $o_t$, isolating it from the recurrent hidden state $h_t$. The bonus signal augments the reward \emph{before} advantage estimation ($r_t + \lambda b_t$), ensuring its gradient backpropagates through time into the recurrent cell alongside the task reward. Consequently, the bonus and memory are coupled during gradient updates but strictly separated in the forward pass. Any observed architecture-dependent effect therefore reflects variations in representational capacity given a \emph{shared} training signal, rather than variations in the signal received. The bonus supplies advantage on transitions where the extrinsic reward is flat; determining whether this alters the underlying latent encoding is beyond the scope of this study.

\begin{figure}[t]
\centering
\includegraphics[width=\columnwidth]{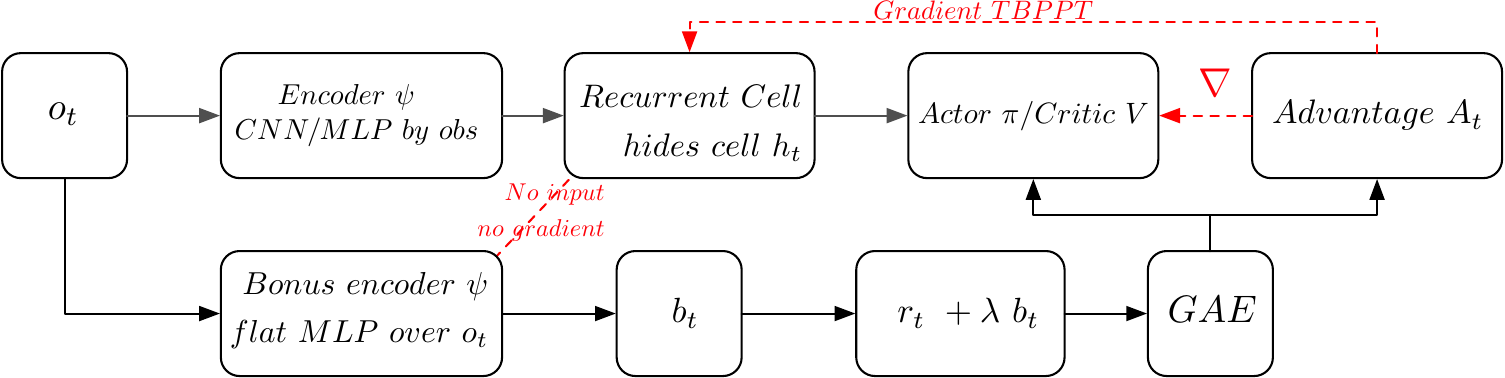}
\caption{Architectural separation of bonus and memory in the forward pass. The bonus encoder $\phi$ processes only the instantaneous observation $o_t$, bypassing the recurrent hidden state $h_t$. Because the bonus augments the reward prior to advantage estimation, its gradient updates the recurrent cell without the memory conditioning on the exploration signal.}
\label{fig:coupling}
\end{figure}

\paragraph{Environments.}
Three environments vary how the content of memory is acquired (full specifications in the appendix):
\begin{itemize}
    \item \textbf{MysteryPath-Grid}~\citep{pleines2023memorygym}: An invisible path must be actively discovered via probing. Off-path steps reset the agent's position but not the path itself; therefore, state coverage yields zero expected return unless the path is retained in memory across resets within the episode.
    \item \textbf{MiniGrid-MemoryS13}~\citep{minigrid}: A cue object near the start must be retained to solve a temporally distant binary T-junction (chance floor $0.50$). We utilize the restrictive $3\times3$ view, which forces the agent to actively seek the cue, as the discriminative setting (the default $7\times7$ view makes the cue trivial to observe).
    \item \textbf{TinyReproduce} (modified POPGym Autoencode~\citep{morad2023popgym}): A 10-symbol sequence is dictated on a fixed, unalterable schedule, and must subsequently be reproduced in reverse. Two exactly return-matched variants differ strictly in timing: \emph{Dense} pays $+1/k$ per correct token, while \emph{Sparse} pays the same total at the end of the episode; both terminate on the first incorrect token.
\end{itemize}
 
\paragraph{Reward Variants.}
On the two gridworlds, we manipulate the target of the reward signal while holding the environment, architectures, and hyperparameters fixed (calibration details in the appendix): \textbf{Sparse} pays $+1$ solely upon success. \textbf{Penalty} introduces an avoidable cost to exploratory moves ($-1/T_{\max}$ per off-path step in MysteryPath; a per-move cost in MemoryS13), leaving the theoretical optimal policy unchanged. \textbf{Aligned} (MysteryPath only) densely pays $+0.1$ for memory-advancing forward progress, serving as a negative control for the bonus. \textbf{Distractor} (both gridworlds) is density-matched to the aligned reward but fundamentally uninformative, paying $\varepsilon = 0.1/T_{\max}$ solely for re-entering known tiles; the discounted criterion $\varepsilon/(1-\gamma) < 1$ leaves the optimal policy unchanged.

\paragraph{Evaluation \& Budgets.}
The primary metric is deterministic-policy success rate over 20 evaluation episodes per checkpoint, reported as the tail mean over the final 20\% of checkpoints. MysteryPath ($T_{\max}{=}128$, $\gamma{=}0.995$) and MemoryS13 ($T_{\max}{=}845$, $\gamma{=}0.999$) run for 20M steps; TinyReproduce ($\gamma{=}0.99$) runs for 10M. Every cell is evaluated across $n{=}10$ seeds, reported as mean $\pm$ std, with all significance claims supported by 95\% unpaired seed-bootstrap intervals; learning curves with seed bands are in the appendix.~\citep{agarwal2021deep}. A single hyperparameter configuration is shared across all architectures (environment-specific overrides are applied identically to all cells), and all experimental runs utilize a bonus coefficient of $\lambda{=}0.03$ derived from a unified sweep. Full registry and calibration details are provided in the appendix.

\begin{table*}[!t]\centering
\caption{MysteryPath-Grid: deterministic-policy success rate. Values denote the tail mean over the final 20\% of evaluation checkpoints (mean $\pm$ std, $n{=}10$ seeds). \textbf{Bold} text denotes frozen cells ($.00$ success rate). E3B and NovelD bonuses are crossed with diverse memory architectures across four controlled reward manipulations.}
\label{tab:mpg}\small\setlength{\tabcolsep}{3pt}
\begin{tabular}{@{}l ccc ccc ccc ccc@{}}\toprule
 & \multicolumn{3}{c}{Sparse} & \multicolumn{3}{c}{Penalty} & \multicolumn{3}{c}{Aligned} & \multicolumn{3}{c}{Distractor} \\
\cmidrule(lr){2-4}\cmidrule(lr){5-7}\cmidrule(lr){8-10}\cmidrule(lr){11-13}
Cell & none & e3b & nvld & none & e3b & nvld & none & e3b & nvld & none & e3b & nvld \\\midrule
GRU & .14{\scriptsize$\pm$.01} & .18{\scriptsize$\pm$.02} & .18{\scriptsize$\pm$.02} & \textbf{.00}{\scriptsize$\pm$.00} & .17{\scriptsize$\pm$.01} & .13{\scriptsize$\pm$.01} & .31{\scriptsize$\pm$.03} & .18{\scriptsize$\pm$.02} & .26{\scriptsize$\pm$.02} & .03{\scriptsize$\pm$.05} & .18{\scriptsize$\pm$.02} & .18{\scriptsize$\pm$.01} \\
LSTM & .14{\scriptsize$\pm$.01} & .17{\scriptsize$\pm$.02} & .19{\scriptsize$\pm$.02} & \textbf{.00}{\scriptsize$\pm$.00} & .17{\scriptsize$\pm$.02} & .13{\scriptsize$\pm$.01} & .26{\scriptsize$\pm$.02} & .18{\scriptsize$\pm$.02} & .26{\scriptsize$\pm$.02} & .06{\scriptsize$\pm$.06} & .18{\scriptsize$\pm$.02} & .19{\scriptsize$\pm$.02} \\
RetNet & .16{\scriptsize$\pm$.03} & .56{\scriptsize$\pm$.05} & .53{\scriptsize$\pm$.03} & \textbf{.00}{\scriptsize$\pm$.00} & .55{\scriptsize$\pm$.02} & .57{\scriptsize$\pm$.04} & .75{\scriptsize$\pm$.05} & .61{\scriptsize$\pm$.05} & .78{\scriptsize$\pm$.04} & .07{\scriptsize$\pm$.06} & .55{\scriptsize$\pm$.03} & .53{\scriptsize$\pm$.05} \\
GatedDeltaNet & .17{\scriptsize$\pm$.02} & .63{\scriptsize$\pm$.04} & .59{\scriptsize$\pm$.06} & \textbf{.00}{\scriptsize$\pm$.00} & .58{\scriptsize$\pm$.07} & .59{\scriptsize$\pm$.06} & .67{\scriptsize$\pm$.10} & .65{\scriptsize$\pm$.08} & .82{\scriptsize$\pm$.04} & .02{\scriptsize$\pm$.06} & .62{\scriptsize$\pm$.05} & .57{\scriptsize$\pm$.05} \\
Mamba-2 & .20{\scriptsize$\pm$.02} & .34{\scriptsize$\pm$.04} & .30{\scriptsize$\pm$.03} & \textbf{.00}{\scriptsize$\pm$.00} & .31{\scriptsize$\pm$.04} & .25{\scriptsize$\pm$.03} & .44{\scriptsize$\pm$.03} & .31{\scriptsize$\pm$.03} & .44{\scriptsize$\pm$.05} & .02{\scriptsize$\pm$.05} & .34{\scriptsize$\pm$.05} & .29{\scriptsize$\pm$.03} \\
Memoryless & .06{\scriptsize$\pm$.01} & .00{\scriptsize$\pm$.00} & .04{\scriptsize$\pm$.01} & \textbf{.00}{\scriptsize$\pm$.00} & \textbf{.00}{\scriptsize$\pm$.00} & .01{\scriptsize$\pm$.00} & .05{\scriptsize$\pm$.01} & .01{\scriptsize$\pm$.01} & .05{\scriptsize$\pm$.01} & .01{\scriptsize$\pm$.02} & .00{\scriptsize$\pm$.00} & .04{\scriptsize$\pm$.01} \\
\bottomrule\end{tabular}\end{table*}

\section{Results}
\label{sec:experiments}

To evaluate the interaction between representational capacity and exploration, we crossed diverse memory architectures with episodic exploration bonuses across three distinct POMDP environments. We find that exploration bonuses do not affect memory architectures uniformly; rather, the interaction yields three distinct empirical patterns tracking the underlying reward structure of the environment. Furthermore, controlled manipulations of the reward structure verify that these patterns are governed by what the reward supervises, not merely its temporal density. Gains reported below represent within-cell changes in mean success against the respective no-bonus control, supported by 95\% unpaired seed-bootstrap intervals.

\begin{table*}[t]\centering
\caption{MiniGrid-MemoryS13: deterministic-policy success rate. Values denote the tail mean over the final 20\% of evaluation checkpoints (mean $\pm$ std, $n{=}10$ seeds). E3B and NovelD bonuses are crossed with memory architectures across restricted ($3\times3$) and default ($7\times7$) fields of view, alongside penalty and distractor reward manipulations.}
\label{tab:s13}\small\setlength{\tabcolsep}{3pt}
\begin{tabular}{@{}l ccc ccc ccc ccc@{}}\toprule
 & \multicolumn{3}{c}{$3\times3$ Sparse} & \multicolumn{3}{c}{$3\times3$ Penalty} & \multicolumn{3}{c}{$3\times3$ Distractor} & \multicolumn{3}{c}{$7\times7$ Sparse} \\
\cmidrule(lr){2-4}\cmidrule(lr){5-7}\cmidrule(lr){8-10}\cmidrule(lr){11-13}
Cell & none & e3b & nvld & none & e3b & nvld & none & e3b & nvld & none & e3b & nvld \\\midrule
GRU & .50{\scriptsize$\pm$.01} & .92{\scriptsize$\pm$.09} & .86{\scriptsize$\pm$.19} & .50{\scriptsize$\pm$.01} & .95{\scriptsize$\pm$.05} & .94{\scriptsize$\pm$.10} & .50{\scriptsize$\pm$.02} & .87{\scriptsize$\pm$.09} & .96{\scriptsize$\pm$.04} & .69{\scriptsize$\pm$.19} & .96{\scriptsize$\pm$.10} & .97{\scriptsize$\pm$.08} \\
LSTM & .50{\scriptsize$\pm$.02} & .98{\scriptsize$\pm$.02} & .94{\scriptsize$\pm$.10} & .50{\scriptsize$\pm$.01} & .94{\scriptsize$\pm$.06} & .98{\scriptsize$\pm$.02} & .50{\scriptsize$\pm$.01} & .99{\scriptsize$\pm$.02} & .99{\scriptsize$\pm$.02} & .76{\scriptsize$\pm$.22} & .96{\scriptsize$\pm$.07} & .98{\scriptsize$\pm$.03} \\
RetNet & .49{\scriptsize$\pm$.02} & .91{\scriptsize$\pm$.11} & .79{\scriptsize$\pm$.19} & .49{\scriptsize$\pm$.01} & .86{\scriptsize$\pm$.12} & .86{\scriptsize$\pm$.11} & .50{\scriptsize$\pm$.01} & .86{\scriptsize$\pm$.15} & .73{\scriptsize$\pm$.21} & .67{\scriptsize$\pm$.17} & .96{\scriptsize$\pm$.06} & .96{\scriptsize$\pm$.05} \\
GatedDeltaNet & .76{\scriptsize$\pm$.17} & .97{\scriptsize$\pm$.04} & .98{\scriptsize$\pm$.03} & .68{\scriptsize$\pm$.17} & .96{\scriptsize$\pm$.07} & .97{\scriptsize$\pm$.05} & .73{\scriptsize$\pm$.22} & .98{\scriptsize$\pm$.03} & .99{\scriptsize$\pm$.02} & .94{\scriptsize$\pm$.12} & .93{\scriptsize$\pm$.15} & .99{\scriptsize$\pm$.02} \\
Mamba-2 & .86{\scriptsize$\pm$.14} & .76{\scriptsize$\pm$.23} & .91{\scriptsize$\pm$.06} & .79{\scriptsize$\pm$.24} & .84{\scriptsize$\pm$.19} & .85{\scriptsize$\pm$.12} & .74{\scriptsize$\pm$.21} & .73{\scriptsize$\pm$.31} & .81{\scriptsize$\pm$.21} & .87{\scriptsize$\pm$.15} & .93{\scriptsize$\pm$.11} & .96{\scriptsize$\pm$.03} \\
Memoryless & .35{\scriptsize$\pm$.24} & .50{\scriptsize$\pm$.01} & .50{\scriptsize$\pm$.01} & .50{\scriptsize$\pm$.01} & .48{\scriptsize$\pm$.02} & .50{\scriptsize$\pm$.01} & .50{\scriptsize$\pm$.01} & .49{\scriptsize$\pm$.01} & .50{\scriptsize$\pm$.01} & .50{\scriptsize$\pm$.01} & .48{\scriptsize$\pm$.02} & .50{\scriptsize$\pm$.01} \\
\bottomrule\end{tabular}\end{table*}

\begin{table*}[t]\centering
\caption{TinyReproduce ($k{=}10$, reverse): exact-recall success rate. Values denote the tail mean over the final 20\% of evaluation checkpoints (mean $\pm$ std, $n{=}10$ seeds). E3B and NovelD bonuses are crossed with memory architectures across sparse and dense reward variants.}
\label{tab:tiny}\small\setlength{\tabcolsep}{6pt}
\begin{tabular}{@{}l ccc ccc@{}}\toprule
 & \multicolumn{3}{c}{Sparse} & \multicolumn{3}{c}{Dense} \\
\cmidrule(lr){2-4}\cmidrule(lr){5-7}
Cell & none & e3b & nvld & none & e3b & nvld \\\midrule
GRU & .90{\scriptsize$\pm$.02} & .90{\scriptsize$\pm$.02} & .90{\scriptsize$\pm$.01} & .91{\scriptsize$\pm$.02} & .90{\scriptsize$\pm$.02} & .90{\scriptsize$\pm$.02} \\
LSTM & .91{\scriptsize$\pm$.02} & .93{\scriptsize$\pm$.02} & .92{\scriptsize$\pm$.02} & .91{\scriptsize$\pm$.02} & .92{\scriptsize$\pm$.01} & .91{\scriptsize$\pm$.03} \\
RetNet & .00{\scriptsize$\pm$.00} & .00{\scriptsize$\pm$.00} & .00{\scriptsize$\pm$.00} & .00{\scriptsize$\pm$.00} & .00{\scriptsize$\pm$.00} & .00{\scriptsize$\pm$.00} \\
GatedDeltaNet & .33{\scriptsize$\pm$.08} & .31{\scriptsize$\pm$.10} & .40{\scriptsize$\pm$.15} & .35{\scriptsize$\pm$.09} & .32{\scriptsize$\pm$.11} & .39{\scriptsize$\pm$.07} \\
Mamba-2 & .01{\scriptsize$\pm$.01} & .01{\scriptsize$\pm$.01} & .01{\scriptsize$\pm$.01} & .01{\scriptsize$\pm$.01} & .01{\scriptsize$\pm$.01} & .01{\scriptsize$\pm$.01} \\
Memoryless & .00{\scriptsize$\pm$.00} & .00{\scriptsize$\pm$.00} & .00{\scriptsize$\pm$.00} & .00{\scriptsize$\pm$.00} & .00{\scriptsize$\pm$.00} & .00{\scriptsize$\pm$.00} \\
\bottomrule\end{tabular}\end{table*}

\subsection{Exploration Does Not Help Memory Uniformly}
\label{subsec:three-patterns}

On MysteryPath-Grid, all memory architectures exhibit poor baseline performance without an exploration bonus ($0.14$--$0.20$; the memoryless control sits at $0.06$). The introduction of E3B does not provide a uniform performance increase (Table~\ref{tab:mpg}). Instead, the within-cell gains selectively order the architectures: standard gated RNNs exhibit negligible improvements (LSTM $+0.03$ $[+0.02,+0.05]$, GRU $+0.04$ $[+0.03,+0.06]$), whereas linear-attention and retention models demonstrate substantial performance increases (RetNet $+0.40$ $[+0.37,+0.44]$, GatedDeltaNet $+0.46$ $[+0.44,+0.49]$), with an intermediate gain for the state-space model Mamba-2 ($+0.14$). These three cells maintain disjoint seed distributions with and without the bonus, whereas GRU and LSTM distributions overlap heavily. We designate this the \emph{amplification} pattern: the exploration signal disproportionately amplifies the underlying sequence-modeling capacity of the architecture. A coefficient sweep confirms this is a structural phenomenon rather than a hyperparameter artifact: across an order of magnitude of $\lambda$, GRU's performance remains low and fails to scale ($0.175 \to 0.133$) while RetNet's scales monotonically ($0.300 \to 0.475$). Conversely, the memoryless control actively declines under E3B ($0.062 \to 0.002$) with disjoint distributions, demonstrating empirically that state coverage without recurrent representational capacity strictly degrades performance.

On MiniGrid-MemoryS13 ($3\times3$), critical information is initially observable and the architectural responses invert (Table~\ref{tab:s13}). Without a bonus, GRU, LSTM, and RetNet average $0.49$--$0.50$, making them statistically indistinguishable from chance at a binary junction. Under E3B, every seed in these three cells reaches at least $0.68$, generating significant within-cell improvements ($+0.42$, $+0.48$, and $+0.42$, respectively); NovelD produces comparable gains ($+0.36$, $+0.44$, and $+0.30$). Conversely, models that independently converge above chance without a bonus demonstrate marginal or zero improvement, with Mamba-2's interval spanning zero ($-0.10$ $[-0.26,+0.06]$). We designate this the \emph{equalization} pattern: the bonus dictates whether low-capacity cells learn the task at all, effectively pulling them to the performance ceiling rather than further benefiting already-competent models.

On TinyReproduce, environment dynamics are purely scheduled and no sequence information can be actively sought. Consequently, no bonus produces a substantive within-cell effect (Table~\ref{tab:tiny}). GRU and LSTM converge on the task independently and remain unaffected; GatedDeltaNet plateaus at $0.30$--$0.40$ regardless of the applied bonus. RetNet, Mamba-2, and the Memoryless control fail to learn exact reverse recall, aligning with known copying gaps documented for state-space and linear-attention models~\citep{jelassi2024repeat}. All measured bonus effects lie strictly within $\pm0.08$. The single interval excluding zero is LSTM under E3B on the sparse variant ($+0.018$ $[+0.001,+0.034]$), representing a statistically negligible fraction of its baseline performance. This constitutes the \emph{null} pattern.

\subsection{Reward Structure vs. Reward Density}
\label{subsec:not-density}

To isolate whether these interactions are driven by reward density or structural task dependencies, we manipulated what the reward explicitly supervises while holding frequency and hyperparameters fixed.

The \emph{aligned} reward on MysteryPath densely pays $+0.1$ for memory-advancing forward progress. Under this structure, RetNet and GatedDeltaNet reach $0.75$ and $0.67$ natively (Table~\ref{tab:mpg}). However, adding E3B degrades performance for four of the five memory architectures: GRU drops by $-0.13$, LSTM by $-0.08$, RetNet by $-0.14$, and Mamba-2 by $-0.13$ (all four intervals excluding zero), whereas GatedDeltaNet is essentially unchanged ($0.67 \to 0.65$). Because E3B provides a dense novelty signal at every step, it functionally interferes with the dense extrinsic supervision. NovelD, constrained by its first-visit gating mechanism, interferes far less: its effects are neutral for LSTM, RetNet, and Mamba-2, mildly negative for GRU ($0.31 \to 0.26$), and positive for GatedDeltaNet ($0.67 \to 0.82$).

Conversely, the \emph{distractor} reward matches the aligned variant's density but exclusively pays for uninformative state visitations (re-entering known tiles). This useless density induces policy convergence to suboptimal states, collapsing every memory architecture's no-bonus baseline to between $0.02$ and $0.07$. Despite this dense signal, E3B successfully restores RetNet and GatedDeltaNet to their sparse-reward performance levels ($0.55$ and $0.62$, compared to $0.56$ and $0.63$ on sparse). This dissociation shows that, in these tasks, a dense extrinsic reward neutralizes an exploration bonus only when it directly supervises the memory the task requires; matching its density with an uninformative signal confers no such effect.

\subsection{Pricing Exploration Can Hinder Optimization}
\label{subsec:penalty}

The \emph{penalty} reward variant on MysteryPath imposes a $-1/T_{\max}$ cost per off-path step. Because falls are avoidable, the theoretical mathematical optimum is identical to the sparse reward. Despite this equivalence, the penalty induces policy convergence to a $0.00$ success rate across all architectures (Table~\ref{tab:mpg}). Trajectory data confirms a stationary, suboptimal policy: over the final 80\% of training, median falls per episode reach $0.00$, episode lengths sit exactly at $T_{\max}{=}128$, and the critic's loss approaches $0.00$. The policies remain highly stochastic (exhibiting high policy entropy) but exclusively within verified-safe, zero-cost states.

Both E3B and NovelD resolve this exploratory stagnation, restoring RetNet and GatedDeltaNet to their sparse-reward performance levels ($0.55$--$0.60$). This recovery occurs independently of the specific bonus magnitude: E3B delivers $2.6\times$ the per-step penalty, while NovelD delivers only $0.3\times$, yet both recover GatedDeltaNet equally ($0.58$--$0.59$). This optimization failure is strictly dependent on the environmental state space; the matched penalty on MemoryS13 does not cause a collapse (Table~\ref{tab:s13}) because no equivalent absorbing zero-cost state exists.
%
%

\section{Reward Sparsity \& Reward Machines}
\label{sec:sparsity}

Conventional definitions of reward sparsity rely strictly on the frequency of non-zero payments, measuring temporal density rather than structural dependencies. This temporal metric fails to explain our empirical results: the base rewards in MysteryPath, MemoryS13, and TinyReproduce are identically sparse by frequency, yet an identical episodic bonus amplifies, equalizes, or nullifies architecture differences across them. We therefore formalize two distinct notions of sparsity using observation-anchored reward machines. \emph{Structural} sparsity evaluates whether the reward can be reproduced by an automaton that omits the history required to solve the task. \emph{Potential} sparsity evaluates whether the immediate one-step reward optimally ranks local actions. While \citet{toroicarte2023learning} defines perfect reward machines for POMDPs, no corresponding formulation equips a POMDP with an arbitrary (not necessarily perfect) reward machine.

\begin{definition}[POMDPRM]
\label{def:pomdprm}
A POMDP with a simple reward machine (POMDPRM) is a tuple
$\mathcal{T} = \tuple{S, A, O, p, \omega, \gamma, \mu, P, L,
\mathcal{R}_{POA}}$ where $S, A, O, p, \omega, \gamma, \mu$ are as in a
POMDP, $P$ is a finite set of propositional symbols,
$L : O \times A \times O \to 2^{P}$ is a labelling function, and
$\mathcal{R}_{POA} = \tuple{U, u_0, \delta_u, \delta_r}$ is an RM with
finite $U$, $\delta_u : U \times 2^{P} \to U$ and
$\delta_r : U \times 2^{P} \to \Real$. The reward process of
$\mathcal{T}$ is given by $x_0 = u_0$,
$x_{t+1} = \delta_u\big(x_t, L(o_t, a_t, o_{t+1})\big)$, and
$\tilde r_t = \delta_r\big(x_t, L(o_t, a_t, o_{t+1})\big)$.
\end{definition}

\begin{definition}[Encoding]
\label{def:encode}
Given a POMDP $\mathcal{M} = \tuple{S, A, O, p, \omega, r, \gamma,
\mu}$,
\begin{itemize}
    \item MDPRM $\tuple{S, A, p, \gamma, \mu, P, L,
    \mathcal{R}_{PSA}}$, where $\mathcal{R}_{PSA} = \tuple{U, u_0,
    \delta_u, \delta_r}$, \emph{encodes} $\mathcal{M}$ $\iff$ on all
    reachable trajectories under every policy,
    $r_t = \delta_r\big(x_t, L(s_t, a_t, s_{t+1})\big)$ for all $t$,
    where $x_t$ is the RM state at $t$;
    \item POMDPRM $\tuple{S, A, O, p, \omega, \gamma, \mu, P, L,
    \mathcal{R}_{POA}}$, where $\mathcal{R}_{POA} = \tuple{U, u_0,
    \delta_u, \delta_r}$, \emph{encodes} $\mathcal{M}$ $\iff$ for
    every observation-history policy $\pi$,
    $$\mathbb{E}^\pi\Big[\sum_{t\ge 0} \gamma^t r_t\Big] =
    \mathbb{E}^\pi\Big[\sum_{t\ge 0} \gamma^t \tilde r_t\Big].$$
\end{itemize}
\end{definition}

The RM of a POMDPRM is not strictly required to be perfect: the expected machine reward on an observed transition is defined as the conditional expectation of the environment reward given the observation. While encoders are not guaranteed to exist globally (the appendix demonstrates a three-state POMDP requiring an unbounded number of machine states), they exist and are computable provided the observation stream induces only finitely many posteriors over the hidden state.

\begin{definition}[Structural sparsity]
\label{def:structural}
Given a POMDP $\mathcal{M} = \tuple{S, A, O, p, \omega, r, \gamma,
\mu}$, $r$ is \emph{structurally sparse} if no POMDPRM encodes
$\mathcal{M}$, or if there exists a POMDPRM
$\tuple{S, A, O, p, \omega, \gamma, \mu, P, L, \mathcal{R}_{POA}}$
which encodes $\mathcal{M}$ such that $\mathcal{R}_{POA}$ is not
perfect w.r.t.\ $L$ (Eq.~\ref{eq:perfect_rm}). $r$ is \emph{structurally
dense} if at least one POMDPRM encodes $\mathcal{M}$ and, for all
POMDPRMs $\tuple{S, A, O, p, \omega, \gamma, \mu, P, L,
\mathcal{R}_{POA}}$ which encode $\mathcal{M}$, $\mathcal{R}_{POA}$ is
perfect w.r.t.\ $L$.
\end{definition}

Structural density indicates that every automaton consistent with the return inherently encodes the memory required by the task. Conversely, structural sparsity indicates that an automaton can reproduce the return without achieving perfection. MysteryPath and MemoryS13 are both structurally sparse, but via distinct mechanisms. In MysteryPath, the one-state machine (Figure~\ref{fig:rm-axis}a) encodes the return while entirely ignoring the path; the reward signal never supervises the memory required to solve the task. In MemoryS13, the corresponding machine (Figure~\ref{fig:rm-axis}b) tracks the cue and encodes the return, yet fails perfection on raw observations due to visual aliasing (e.g., corridor cells share identical views, rendering next-observation predictions dependent on unobserved positions). Here, dynamics aliasing---rather than the reward structure itself---prevents perfection. Conversely, TinyReproduce is structurally dense in its dictation abstraction: the play observation remains constant following both correct and incorrect tokens, forcing every encoder to carry the fully dictated sequence (Figure~\ref{fig:rm-axis}c). Formal proofs are provided in the appendix, including the analysis of a reward-preserving, structurally dense abstraction of MemoryS13 that strictly removes dynamics aliasing.

The second notion of sparsity applies to arbitrary encoders; perfection is not assumed prior to Theorem~\ref{thm:greedy}. 
\begin{definition}[Greedy RM policy]
\label{def:greedy}
Given a POMDPRM $\tuple{S, A, O, p, \omega, \gamma, \mu, P, L,
\mathcal{R}_{POA}}$ and a fixed full-support reference policy
$\pi_0(a \mid o, u) = 1/|A|$, the \emph{greedy RM policy} $\pi$ is
uniquely given by
$$\pi(\cdot \mid o, u) = \argmax_a \sum_{o' \in O}
\Pr\nolimits_{\pi_0}(o' \mid o, u, a)\, \delta_r(u, \sigma),$$
where $\sigma = L(o, a, o')$ and tie-breaking among maximizing actions is uniformly random.
\end{definition}

Here, $\Pr_{\pi_0}(o' \mid o, u, a)$ denotes the next-observation distribution along trajectories generated by $\pi_0$; it coincides with the canonical kernel of Eq.~\ref{eq:perfect_rm} under perfect machines. Fixing $\pi_0$ strictly defines $\pi$.

\begin{definition}[Potential sparsity]
\label{def:potential}
Suppose POMDPRM $\mathcal{T} = \tuple{S, A, O, p, \omega, \gamma, \mu,
P, L, \mathcal{R}_{POA}}$ encodes the POMDP
$\tuple{S, A, O, p, \omega, r, \gamma, \mu}$, where
$\mathcal{R}_{POA} = \tuple{U, u_0, \delta_u, \delta_r}$. Let
$J(\pi) := \mathbb{E}^\pi\big[\sum_{t\ge 0} \gamma^t r_t\big]$ denote
the expected return in $\mathcal{M}$. For $\phi : U \times O \to \Real$
define the shaped one-step value
$$q^\phi(o, u, a) = \sum_{o' \in O}
\Pr\nolimits_{\pi_0}(o' \mid o, u, a)\hat{r}(u, o, o', a)$$
where $\hat{r}(u, o, o', a)=\delta_r\big(u,
L(o, a, o')\big) + \gamma\,\phi(u', o') - \phi(u, o), u' = \delta_u\big(u, L(o, a, o')\big)$, and let $\pi^\phi = \arg\max_a q^\phi(o, u, a)$. If there exists a $\phi$
such that $J(\pi) < J(\pi^\phi)$ where $\pi$ is greedy RM policy for $\mathcal{T}$, then $r$ is \emph{potentially sparse
w.r.t.\ $\mathcal{T}$}; else $r$ is \emph{potentially dense w.r.t.\
$\mathcal{T}$}.
\end{definition}

The correction term $\gamma\,\phi(u', o') - \phi(u, o)$ maintains the potential-based form of \citet{ng1999policy}, defined over the state-observation pairs $(u, o)$ rather than machine edges. The shaped table serves strictly to select one-step policies, whose true returns $J$ are subsequently evaluated. Because a coarse automaton cannot register a misranking that its state space fails to represent, potential sparsity is exclusively evaluated in settings where the reward is structurally dense and all encoders are perfect.

\begin{theorem}[Dense rewards need no lookahead]
\label{thm:greedy}
For a POMDP $\mathcal{M} = \tuple{S, A, O, p, \omega, r, \gamma, \mu}$,
if $r$ is structurally dense, POMDPRM $\mathcal{T} = \tuple{S, A, O, p,
\omega, \gamma, \mu, P, L, \mathcal{R}_{POA}}$ encodes $\mathcal{M}$
(hence is perfect, by Definition~\ref{def:structural}), and $r$ is
potentially dense w.r.t.\ $\mathcal{T}$, then the greedy RM policy over
$\mathcal{T}$ is optimal for $\mathcal{M}$ among all
observation-history policies.
\end{theorem}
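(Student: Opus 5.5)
The idea is to reduce the statement to a Bellman-optimality argument in the product process over $O\times U$, and then to use potential density to identify the greedy policy with the optimal one. The key move is choosing the potential $\phi$ to be the optimal value function of that product process.

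\textbf{Step 1: reduce to an MDP on $O\times U$.} Since $r$ is structurally dense and $\mathcal{T}$ encodes $\mathcal{M}$, the machine $\mathcal{R}_{POA}$ is perfect w.r.t.\ $L$ (Definition~\ref{def:structural}). By Eq.~\ref{eq:perfect_rm}, the pair $(o_t,x_t)$ is then a sufficient statistic for the joint law of $(o_{t+1},r_t)$ given the history. Consequently the process $(o_t,x_t)$ is a finite MDP $\bar{\mathcal M}$ with kernel $\Pr(o'\mid o,u,a)$ and deterministic machine update $u'=\delta_u(u,L(o,a,o'))$. Under perfection this kernel does not depend on the behaviour policy, so it coincides with $\Pr_{\pi_0}(o'\mid o,u,a)$; this is the remark after Definition~\ref{def:greedy}. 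Because $\mathcal{T}$ encodes $\mathcal{M}$, every observation-history policy has the same expected return under $r$ and under $\tilde r$. By the standard MDP result that Markov stationary policies attain the optimum over history-dependent ones, $\sup_\pi J(\pi)$ over observation-history policies equals the optimal value $V^*$ of $\bar{\mathcal M}$. That value is attained by any policy that is greedy with respect to
\[
Q^*(o,u,a)=\sum_{o'}\Pr(o'\mid o,u,a)\big[\delta_r(u,\sigma)+\gamma V^*(u',o')\big].
\]
One point needs care: the expectation of $\tilde r$ on a transition must match the expectation of $r$. This follows from the encoding identity holding for every policy, including deterministic policies started from arbitrary reachable pairs, together with perfection.

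\textbf{Step 2: choose $\phi=V^*$.} Substituting into Definition~\ref{def:potential} gives
\[
q^{V^*}(o,u,a)=Q^*(o,u,a)-V^*(u,o).
\]
The subtracted term does not depend on $a$, so $\pi^{V^*}$ is an argmax of $Q^*$ and is therefore optimal: $J(\pi^{V^*})=V^*$ from $\mu$.

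\textbf{Step 3: apply potential density.} Potential density means that no $\phi$ satisfies $J(\pi)<J(\pi^\phi)$. In particular $J(\pi)\ge J(\pi^{V^*})=\sup_{\pi'}J(\pi')$, so the greedy RM policy is optimal among all observation-history policies.

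\textbf{Main obstacle.} The delicate step is Step 1, specifically the justification that $\Pr_{\pi_0}$ equals the policy-independent kernel. This requires every $(o,u)$ pair reachable under the candidate policies to also be reachable under the uniform full-support $\pi_0$, so that the conditional probabilities are well defined. Full support of $\pi_0$ secures this. Pairs that are unreachable do not affect $J$ and can be assigned arbitrarily.

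A second subtlety is tie-breaking. $\pi^{V^*}$ and the greedy policy randomize uniformly among maximizers, but any mixture over argmax actions of $Q^*$ is still optimal. Finally, $J(\pi^\phi)$ in Definition~\ref{def:potential} is evaluated in $\mathcal{M}$ from $\mu$, not statewise. This means optimality is only claimed from the initial distribution, which is exactly what the theorem asserts.
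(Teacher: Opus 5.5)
Your proposal is correct and follows the paper's proof essentially step for step. Perfection collapses the problem to a finite MDP on $O\times U$ whose kernel equals the reference kernel; encoding plus optimality of stationary policies gives $\sup_\pi J(\pi)=J^\star$; choosing $\phi=V^\star$ turns $q^{\phi}$ into $Q^\star-V^\star$, so $\pi^{V^\star}$ is optimal; and potential density then forces $J(\pi)\ge J(\pi^{V^\star})$. The one thing to drop is your aside that the per-transition expectation of $\tilde r$ must match that of $r$. It is not needed, since only whole-return equality from $\mu$ is used, and it does not follow from Definition~\ref{def:encode}: that definition says nothing about starts from arbitrary reachable pairs and permits encoders that shift reward across time, which is why the paper's proof deliberately avoids any step-by-step equality.
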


Perfection collapses the POMDP to a finite MDP over the pairs $(o, u)$. In this formulation, shaping with the optimal value function converts the one-step reward into the optimal advantage; thus, potential density implies the unshaped greedy policy is already optimal. Theorem~\ref{thm:greedy} strictly characterizes the induced planning problem rather than the convergence dynamics of actor-critic optimization. Notably, the two reward variants of TinyReproduce diverge on potential sparsity with respect to the sequence machine (appendix). 



\begin{figure*}[t]
\centering
\begin{minipage}[b]{0.195\textwidth}\centering
\includegraphics[width=\linewidth]{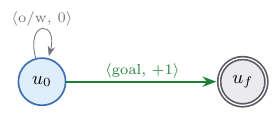}\\[3pt]
{\small (a) MysteryPath}
\end{minipage}\hfill
\begin{minipage}[b]{0.26\textwidth}\centering
\includegraphics[width=\linewidth]{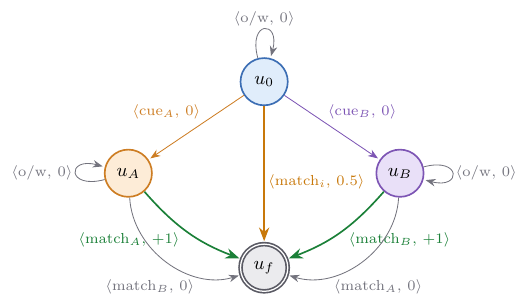}\\[3pt]
{\small (b) MiniGrid-MemoryS13}
\end{minipage}\hfill
\begin{minipage}[b]{0.4\textwidth}\centering
\includegraphics[width=\linewidth]{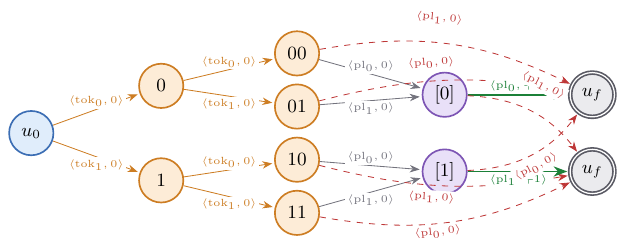}\\[3pt]
{\small (c) TinyReproduce ($k = 2$, reward on full success only)}
\end{minipage}
\caption{\textbf{Reward machines for the three environments.} Transitions are labeled $\langle\varphi,c\rangle$, where $\varphi$ denotes the label condition and $c$ denotes the reward. The symbol $\mathrm{o/w}$ matches any remaining label, and $u_f$ indicates the terminal state. \textbf{(a)}~The MysteryPath machine, featuring a single non-terminal state and issuing $+1$ upon reaching the visible goal. \textbf{(b)}~The MemoryS13 machine, which stores the initially observed cue. The transition $\mathrm{match}_A$ yields $+1$ from $u_A$ and $0$ from $u_B$, while a cue-blind transition earns $0.5$. \textbf{(c)}~The TinyReproduce machine, which stores the dictated sequence. During replay, correctness is verified against stored targets, with $+1$ issued solely upon full sequence completion.}
\label{fig:rm-axis}
\end{figure*}


\begin{defertext}{}{Encoders need not exist in general}
Encoding is demanding even in expectation. The following construction
shows that a Markovian reward over three states can require more
machine states than any finite automaton has.

\begin{proposition}
\label{prop:horizon}
There is a POMDP $\mathcal{M}$ with $|S| = 3$, $|O| = 3$, $|A| = 3$ and
a Markovian reward, such that (i) in unbounded horizon, no finite RM
over observations matches its expected return for all
observation-history policies; and (ii) under any horizon $H$,
\emph{every} RM over observations that does so has at least $H - 1$
states.
\end{proposition}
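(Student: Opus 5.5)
I would build a POMDP in which the observation stream carries no information, so the belief over the hidden state evolves deterministically and passes through infinitely many distinct values. Matching the expected return for every policy then forces the machine to track that belief.

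\textbf{Construction.} Take $S=\{s_0,s_1,s_\star\}$. The state $s_0$ drifts to $s_1$ with probability $1/2$ at each step and stays put otherwise; $s_1$ is absorbing. Both $s_0$ and $s_1$ emit the same observation $o_\bot$. Take actions $A=\{\mathsf{wait},\mathsf{probe},\mathsf{stop}\}$:
\begin{itemize}
\item $\mathsf{wait}$ continues the process with reward $0$;
\item $\mathsf{probe}$ moves to the terminal absorbing $s_\star$, emitting a distinct observation $o_\star$, with Markovian reward $r(s_0,\mathsf{probe})=1$, $r(s_1,\mathsf{probe})=0$;
\item $\mathsf{stop}$ also ends in $s_\star$ (third observation $o_s$) with constant reward $c$, included so the counts $|S|=|O|=|A|=3$ hold and deviation payoffs can be compared.
\end{itemize}
Under any observation-history policy the observations before probing are identically $o_\bot$. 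A policy is therefore essentially a (possibly randomized) choice of the probing time $t$. Its expected return is $\sum_t \Pr(\text{probe at } t)\,\gamma^t 2^{-t}$, since $\Pr(s_t=s_0)=2^{-t}$.

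\textbf{Lower bound via distinguishability.} Let $\mathcal{R}$ be any RM over observations that matches the expected return for all policies. Consider the deterministic policies $\pi_t$ that wait $t$ times and then probe. All of them produce the same label sequence up to step $t$, namely $L(o_\bot,\mathsf{wait},o_\bot)$ repeated. The machine state $x_t$ is therefore a deterministic function of $t$ alone, and by the perfect-RM update it equals $\delta_u^t(u_0,\sigma_w)$.

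The reward the machine issues at probing time is $\delta_r(x_t,L(o_\bot,\mathsf{probe},o_\star))$. Since this is the only nonzero reward along the trajectory, matching the return of $\pi_t$ forces
\[
\delta_r(x_t,\sigma_p)=2^{-t}.
\]
These values are pairwise distinct for different $t$, so $x_0,x_1,\dots$ must all be distinct machine states. The waiting rewards must be $0$ as well: the policies $\pi_t$ pin down the cumulative reward prefix-by-prefix, and differences between consecutive horizons isolate each step.

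With an unbounded horizon this gives infinitely many states, proving (i). With horizon $H$, the usable probing times are $t=0,\dots,H-2$ (the probe must occur before the episode ends and produce its reward), giving at least $H-1$ distinct states, which proves (ii).

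\textbf{Main obstacle.} The delicate step is arguing that $\delta_r(x_t,\sigma_p)$ is pinned exactly, rather than only in aggregate with the waiting-step rewards. I would handle this by induction on $t$:
\begin{itemize}
\item compare $\pi_t$ with the policy that waits $t$ steps and then takes $\mathsf{stop}$ (or, at the horizon, simply waits out) to extract the machine's accumulated wait reward;
\item subtract that accumulated reward from the return of $\pi_t$, leaving exactly $\delta_r(x_t,\sigma_p)\gamma^t=\gamma^t 2^{-t}$.
\end{itemize}
I also need labels to be functions only of $(o,a,o')$, which the observation-anchored definition of a POMDPRM guarantees. Beyond that, I need to check that $\gamma>0$ so the discounting does not wash out the distinctions between the values $2^{-t}$.
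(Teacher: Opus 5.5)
Your construction takes a different route from the paper's. The paper uses a static hidden coin with noisy peeks. Along the all-$o^1$ trace the posterior is $\sigma(k\lambda)$, and the two complementary guesses $g_0,g_1$ separate the machine states. You instead make the observations uninformative and let the hidden state drift, so the belief $\Pr(s_t=s_0)=2^{-t}$ is a deterministic function of elapsed time and the machine is forced to act as a clock.

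Your version buys simplicity. There is only one pre-probe observation trace, so the stopping-time policies $\pi_t$ constrain the machine along that trace directly, with no Bayes computation. The paper's $\pi_k^i$, as literally stated, have unconditional return $\tfrac12\gamma^{k-1}$, and pinning rewards on the all-$o^1$ trace implicitly needs policies that deviate only on that history. The paper's example, in exchange, shows that accumulating evidence about a latent variable that never changes already forces unbounded memory. Your $\mathsf{stop}$ action plays exactly the role of the paper's second guess.

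The step you flag as the main obstacle is resolved incorrectly, though the conclusion survives. Neither $\delta_r(x_t,\sigma_p)=2^{-t}$ nor vanishing wait rewards is forced. For example, pay $1$ on the first wait step and $-1/\gamma$ on the second, and lower both $\delta_r(x_1,\sigma_p)$ and $\delta_r(x_1,\sigma_s)$ by $1/\gamma$. Every policy's machine return is unchanged. Likewise, comparing with stop-at-$t$ cannot ``extract'' the accumulated wait reward $W_t:=\sum_{j<t}\gamma^j\delta_r(x_j,\sigma_w)$, because the machine's stop reward $\delta_r(x_t,\sigma_s)$ is itself unknown; it need not equal $c$.

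What the subtraction does give, since the common prefix $W_t$ cancels whatever it is and after dividing by $\gamma^t>0$, is
\[ \delta_r(x_t,\sigma_p)-\delta_r(x_t,\sigma_s)=2^{-t}-c . \]
The left side is a function of $x_t$ alone, so distinct right sides force distinct states, and your pigeonhole count then goes through. This is the same difference invariant that actually drives the paper's two-guess argument.

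Two smaller fixes are needed. First, make probe and stop end the episode, so that nothing after them contributes; your ``only nonzero reward'' claim otherwise ignores post-probe machine payments. If $s_\star$ merely absorbs, replace the two rewards by the machine's continuation values under fixed post-probe actions. Those are state functions at unbounded horizon, but under truncation at $H$ they would also depend on the remaining time. Second, $\omega$ depends only on the arriving state, so $s_\star$ cannot emit $o_\star$ after probe and $o_s$ after stop. Leave the third observation unused; the labels already see the action.
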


\begin{proof}
Let $\mathcal{M}$ have states $S = \{s_0, s_1, s_d\}$, actions
$A = \{c, g_0, g_1\}$, and observations $O = \{o^0, o^1, o^\bot\}$.
The start state is $s_0$ or $s_1$ with equal probability. Action $c$
keeps the state fixed, while $g_0$ and $g_1$ move to the absorbing
state $s_d$. In state $s_z$ the observation equals $o^z$ with
probability $q$ and the other symbol with probability $1-q$, for a
fixed $\tfrac12 < q < 1$, and $s_d$ always shows $o^\bot$. The reward
of guess $g_i$ is $1$ if the state was $s_i$ and $0$ otherwise, and
all other rewards are $0$. In plain terms, a hidden coin, noisy peeks,
and a guess that ends the episode. By construction the reward is
Markovian.

For any policy whose first $k-1$ steps choose action $c$,
\begin{align*}
\Pr(s_1 \mid o_0 = \dots = o_{k-1} = o^1)
&= \frac{\frac12 q^k}{\frac12 q^k + \frac12 (1-q)^k}\\
&= \sigma(k\lambda),
\end{align*}
where $\sigma(x) := \frac{e^x}{1+e^x}$ and
$\lambda := \ln\frac{q}{1-q} > 0$. Since
$\sigma'(x) = \sigma(x)(1 - \sigma(x)) > 0$, each additional matching
peek makes the bettor strictly more confident, without bound in $k$.

Suppose a POMDPRM $\mathcal{T}$ with
$\mathcal{R}_{POA} = \tuple{U, u_0, \delta_u, \delta_r}$ encodes
$\mathcal{M}$. Consider the two policies $\pi_k^0$ and $\pi_k^1$ that
take $c$ for the first $k-1$ steps and then take $g_0$ and $g_1$
respectively, and consider the trace on which all observations of the
first $k-1$ steps equal $o^1$. Let $x_0, x_1, x_2, \dots$ be the
machine states along that trace, with $x_0 = u_0$ by definition. For
$k = 1$ the expected returns of $\pi_1^0$ and $\pi_1^1$ are
$1 - \sigma(\lambda)$ and $\sigma(\lambda)$, so matching them forces
$\delta_r\big(x_0, L(o^1, g_0, o^\bot)\big) = 1 - \sigma(\lambda)$ and
$\delta_r\big(x_0, L(o^1, g_1, o^\bot)\big) = \sigma(\lambda)$.

We claim that $x_0, \dots, x_{k-1}$ are pairwise distinct with
$\delta_r\big(x_{k-1}, L(o^1, g_i, o^\bot)\big)$ equal to
$1 - \sigma(k\lambda)$ for $i = 0$ and $\sigma(k\lambda)$ for $i = 1$,
by induction on $k$. Suppose it holds up to $k$. First,
$\delta_r\big(x_j, L(o^1, c, o^1)\big) = 0$ for $0 \le j \le k-1$, as
otherwise the expected returns of $\pi_j^0, \pi_j^1$ would not match.
Now $x_k = \delta_u\big(x_{k-1}, L(o^1, c, o^1)\big)$. If $x_k = x_j$
for some $j \le k-1$, the guess rewards at $x_k$ and $x_j$ coincide;
but the returns of $\pi_{k+1}^0, \pi_{k+1}^1$ pin the former to
$1 - \sigma\big((k{+}1)\lambda\big), \sigma\big((k{+}1)\lambda\big)$
while those of $\pi_{j+1}^0, \pi_{j+1}^1$ pin the latter to
$1 - \sigma\big((j{+}1)\lambda\big), \sigma\big((j{+}1)\lambda\big)$,
and $\sigma$ is strictly increasing, a contradiction.

For any $k + 1 > |U|$ the trace exhibits $k$ distinct machine states,
so by the pigeonhole principle no finite RM encodes the reward at
unbounded horizon. If the maximum horizon is $H$ and the RM encodes
the reward, the states $x_0, \dots, x_{H-2}$ are pairwise distinct, so
$|U| \ge H - 1$.
\end{proof}
\end{defertext}

\begin{defertext}{}{The belief filter is a perfect encoder}
The impossibility above disappears when the observations leave only
finitely many posteriors over the hidden pair. In that case an
encoding machine exists, is computable, and is perfect.

\begin{proposition}[Belief filter]
\label{prop:filter}
Let $\mathcal{M}$ be a POMDP with $\gamma \in (0,1)$ such that some
MDPRM $\tuple{S, A, p, \gamma, \mu, P', L', \mathcal{R}_{P'SA}}$, with
$\mathcal{R}_{P'SA} = \tuple{U', u_0', \delta_u', \delta_r'}$, encodes
$\mathcal{M}$, and suppose the set of posteriors over
$Z := S \times U'$ reachable at positive probability is finite. Then
the belief filter over $Z$, whose reward on each transition is the
average of the latent reward given the observable past, is an
explicitly computable POMDPRM that encodes $\mathcal{M}$ and is
perfect w.r.t.\ its labelling.
\end{proposition}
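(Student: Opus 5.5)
The construction uses the finitely many posteriors themselves as machine states. Let $\mathcal{B}\subset\Delta(Z)$ be the finite set of posteriors over $Z=S\times U'$ reachable with positive probability. Let $b_0$ be the posterior of $(s_0,u_0')$ given $o_0$ (with $u_0'$ fixed and $s_0\sim\mu$ conditioned on $o_0$). We then define the POMDPRM as follows.
\begin{itemize}
  \item The propositions are the transitions themselves, $P:=O\times A\times O$, with $L(o,a,o'):=\{(o,a,o')\}$.
  \item The machine states are $U:=\mathcal{B}$, plus one sink absorbing zero-probability transitions.
  \item The update $\delta_u(b,\{(o,a,o')\})$ is the Bayes filter. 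Push $b$ through $p(\cdot\mid s,a)$, advance the latent machine by $\delta_u'(u,L'(s,a,s'))$, and condition on $\omega(o'\mid s')$.
  \item The reward $\delta_r(b,\{(o,a,o')\})$ is $\mathbb{E}\big[\delta_r'(u,L'(s,a,s'))\mid b,a,o'\big]$, computed from the joint law of $(s,u,s')$ under $b$, $p$, and $\omega(o'\mid s')$.
\end{itemize}
One subtlety needs to be settled first. The initial observation must be absorbed before the first transition. I would handle it by letting $u_0$ be a pre-state whose first update reads $o_0$, or equivalently by indexing the posterior on the current observation. Everything here is a finite table of rational-arithmetic operations over finite $S,U',O,A$, which gives explicit computability. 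Closure of $\mathcal{B}$ under the update on positive-probability transitions holds by definition of reachability.

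\textbf{Encoding.} Fix any observation-history policy $\pi$. The key fact is that $\pi$'s action depends only on $h_t$. Hence, conditional on $h_t$, the joint law of $(s_t,x'_t)$ is exactly the filter state $b_t$, where $x'_t$ denotes the latent RM state. This law does not depend on $\pi$, because actions are $h_t$-measurable. I would prove this by induction on $t$ using Bayes' rule, and it holds whenever $\Pr^\pi(h_t)>0$. Next, since the MDPRM encodes $\mathcal{M}$, we have $r_t=\delta_r'(x'_t,L'(s_t,a_t,s_{t+1}))$ almost surely. By the tower property, $\mathbb{E}^\pi[r_t\mid h_t,a_t,o_{t+1}]=\tilde r_t$. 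Taking expectations and summing with $\gamma^t$ gives $\mathbb{E}^\pi[\sum_t\gamma^t r_t]=\mathbb{E}^\pi[\sum_t\gamma^t\tilde r_t]$. Interchanging sum and expectation is justified because rewards are bounded (finitely many values of $\delta_r'$) and $\gamma<1$.

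\textbf{Perfection.} Let $x_t=b_t$. Conditional on the full history, the joint law of $(o_{t+1},r_t)$ is obtained from $b_t$ and $a_t$ alone. The next-state law is $\sum_{s,u}b_t(s,u)p(s'\mid s,a_t)$, the next observation is drawn via $\omega$, and the reward is then the deterministic function $\tilde r_t=\delta_r(b_t,L(o_t,a_t,o_{t+1}))$. So
\[
\Pr(o_{t+1},r_t\mid h_t,a_t)=\Pr(o_{t+1},r_t\mid o_t,x_t,a_t),
\]
which is Eq.~\ref{eq:perfect_rm}.

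The main obstacle is interpretive. Eq.~\ref{eq:perfect_rm} could be read with $r_t$ as the environment reward rather than the machine reward. For the environment reward, the same sufficiency argument still works: $r_t$ is a function of $(s_t,x'_t,s_{t+1})$, and the law of that triple given $h_t,a_t$ depends only on $b_t$ and $a_t$. So I would prove the stronger statement that $(o_{t+1},r_t)$ given history depends only on $(b_t,a_t)$, which covers both readings. I would also check that zero-probability transitions routed to the sink do not break either identity; they do not, since those transitions carry measure zero under every policy.
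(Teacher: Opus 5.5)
Your proposal is correct and follows the paper's proof essentially step for step: posteriors over $Z$ serve as machine states with the Bayes update, and the machine reward is the posterior mean of the latent reward. Encoding follows from the policy-independence of the posterior plus the tower property, and perfection from the one-step sufficiency of $(b_t,a_t)$. You are slightly more careful than the paper on two points: you absorb $o_0$ through the first label, and you cover both readings of $r_t$ in Eq.~\ref{eq:perfect_rm}. For computability, you should add, as the paper does, that $\mathcal{B}$ is obtained by forward closure from the first-step posteriors, which terminates because $\mathcal{B}$ is finite.
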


\begin{proof}
Write $z_t := (s_t, u_t)$ for the pair of the hidden state and the
latent machine state, where
$u_{t+1} = \delta_u'\big(u_t, L'(s_t, a_t, s_{t+1})\big)$. The MDPRM
hypothesis makes $z_t$ a Markov chain and makes
$r_t = \delta_r'\big(u_t, L'(s_t, a_t, s_{t+1})\big)$ a fixed function
of $(z_t, a_t, s_{t+1})$ on every trajectory.

The machine tracks the Bayes posterior over this pair. For an
observable history $h$ of positive probability, let $b_h(z)$ be the
posterior probability of $z_t = z$ given $h$. This posterior does not
depend on the policy that produced $h$, because the probability of any
latent path together with $h$ factors into environment terms times the
policy terms $\prod_j \pi_j(a_j \mid h_j)$, the policy terms depend on
$h$ alone, and they cancel between the numerator and the denominator
of the conditional. By the same cancellation, the predicted
probability of the next observation,
$\Pr(o_{t+1} = o' \mid h_t, a_t)$, is a fixed function
$N(b_{h_t}, a_t, o')$ of the current posterior, obtained by
propagating $b_{h_t}$ through $p$ and $\omega$; and one step of Bayes
updating carries $b_h$ to a posterior $F(b_h, a, o')$ that again
depends only on $(b_h, a, o')$, obtained by propagating $b_h$ through
$p$ and the latent machine update, weighting by $\omega(o' \mid s')$,
and renormalizing by $N$. Consequently every reachable posterior
arises from a first-step posterior (the same formula with $\mu$ in
place of $b$) by iterating $F$, and by hypothesis the set
$\widehat{B}$ of reachable posteriors is finite.

The machine is the filter itself. Choose $P$ so that some injection
$\mathrm{code} : O \times A \times O \to 2^{P}$ exists and set
$L := \mathrm{code}$. The states are the initial $\iota$, the
posteriors in $\widehat{B}$, and a sink $\bot$ for updates of
probability zero, which loops with reward $0$; the transitions apply
$F$; and the reward on a transition is the posterior mean of the
latent reward,
\begin{align*}
\delta_r\big(b, &\mathrm{code}(o, a, o')\big)\\
  &:= \frac{1}{N(b, a, o')} \sum_{(s,u) \in Z} \sum_{s' \in S}
  b(s,u)\, p(s' \mid s, a)\\
  &\qquad\times \omega(o' \mid s')\,
  \delta_r'\big(u, L'(s, a, s')\big),
\end{align*}
with $\mu$ in place of $b$ out of $\iota$. Every table is computable
in exact arithmetic by forward closure from the first-step posteriors,
terminating because $\widehat{B}$ is finite.

The machine encodes. By induction on the update rule, on every history
of positive probability the machine state equals the posterior,
$x_t = b_{h_t}$, so its reward is
$\tilde r_t = \mathbb{E}\big[r_t \mid h_t, a_t, o_{t+1}\big]$. Taking
total expectations over the countably many triples
$(h_t, a_t, o_{t+1})$ gives
$\mathbb{E}^\pi[\tilde r_t] = \mathbb{E}^\pi[r_t]$ for every $t$ and
policy; all rewards are bounded by the largest entry of $\delta_r'$,
so both discounted sums converge absolutely and the returns agree.

The machine is perfect. Given $h_t$ and $a$, the joint distribution of
$(o_{t+1}, r_t)$ is obtained by propagating $b_{h_t}$ one step through
$p$, $\omega$, and the latent reward table, so it is a fixed function
of $(b_{h_t}, a) = (x_t, a)$. This is Eq.~\ref{eq:perfect_rm}.
\end{proof}
\end{defertext}

\begin{defertext}{}{The reference kernel of the greedy RM policy}
\begin{remark}[The reference kernel, and a caution about coarse
machines]
\label{rem:generalkernel}
Definitions~\ref{def:greedy} and~\ref{def:potential} apply to any
POMDPRM $\mathcal{T}$ encoding $\mathcal{M}$, perfect or not. Fix the
uniform reference policy $\pi_0$ and let
$d(o, u) := \sum_{t \ge 0} \gamma^t \Pr^{\pi_0}(o_t = o, x_t = u)$ be
the discounted occupancy of the pair $(o, u)$. For pairs with
$d(o, u) > 0$, let $\Pr_{\pi_0}(o' \mid o, u, a)$ be the weighted
average of the history conditionals $\Pr(o_{t+1} = o' \mid h_t, a)$
over all times $t$ and histories $h_t$ with $(o_t, x_t) = (o, u)$,
each weighted by $\gamma^t \Pr^{\pi_0}(h_t) / d(o, u)$; for
$d(o, u) = 0$ let the kernel be uniform. The history conditional does
not depend on the policy, by the cancellation argument of the
belief-filter proof applied to the plain POMDP, so the kernel is well
defined and depends only on the reference policy and $\gamma$.

When $\mathcal{T}$ is perfect, $\Pr(o_{t+1} = o' \mid h_t, a)$ is
constant across the histories in each $(o, u)$ cell (sum
Eq.~\ref{eq:perfect_rm} over reward values), so the kernel equals that
common value. It is then independent of the reference policy, the
discounting, and $t$, and coincides with the canonical
$\Pr(o_{t+1} \mid o_t, x_t, a_t)$. Since the full-support $\pi_0$
reaches every pair that any policy reaches, positive occupancy
coincides with reachability.

A caution accompanies the generality. A machine too coarse cannot
register a misranking its state does not represent. On the one-state
MysteryPath machine of Proposition~\ref{prop:nonperfect}, both actions
induce the same successor distribution from every pair, so every
$\phi$-greedy policy ties everywhere and $r$ comes out potentially
dense w.r.t.\ it, even though this reward undervalues exactly the
probing actions. Potential density w.r.t.\ a non-perfect encoder
therefore certifies nothing, and the classification only reads it
where the reward is structurally dense, where every encoder is perfect
and the kernel is canonical.
\end{remark}
\end{defertext}

\begin{defertext}{}{Classification of the MysteryPath abstraction}
The recipe from the main text asks for one machine that forgets and
still encodes. The one-state goal machine is that witness, and the two
histories ``went left, fell'' and ``went right, fell'' show it is not
perfect.

\begin{proposition}[Encoding does not imply perfection; the
MysteryPath abstraction is structurally sparse]
\label{prop:nonperfect}
There is a POMDP $\mathcal{M}$ (a one-cell abstraction of MysteryPath)
and a POMDPRM $\mathcal{T}$ encoding $\mathcal{M}$ whose RM has a
single non-terminal state, reproduces the reward step by step, and is
not perfect w.r.t.\ its labelling; moreover no POMDPRM with a single
non-terminal state that encodes $\mathcal{M}$ is perfect. By
Definition~\ref{def:structural}, $r$ is structurally sparse.
\end{proposition}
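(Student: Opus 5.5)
I will build an explicit one-cell abstraction of MysteryPath and exhibit the one-state goal machine as the witness. The POMDP $\mathcal{M}$ has a start cell $s_0$ with a hidden safe direction $d\in\{\ell,\rho\}$, drawn uniformly at episode start and fixed thereafter. The agent chooses $a\in\{\ell,\rho\}$. If $a=d$ it reaches the goal $s_g$, visible as $o^g$, with reward $1$, and the episode terminates. If $a\neq d$ it falls and is reset to $s_0$ with reward $0$. The direction $d$ is not reset, mirroring ``off-path steps reset the agent's position but not the path.'' The start cell always emits the same observation $o^s$, so falling is observationally indistinguishable from starting. The reward is Markovian in the observation: it is $1$ exactly on transitions entering $o^g$.

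Take $P=\{\mathrm{goal}\}$ and $L(o,a,o')=\{\mathrm{goal}\}$ iff $o'=o^g$. The machine has one non-terminal state $u$, with $\delta_r(u,\{\mathrm{goal}\})=1$ leading to $u_f$, and reward $0$ on every other label. Encoding then holds step by step: on every trajectory $\tilde r_t=r_t$, so the expected discounted returns agree for every observation-history policy.

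Non-perfection comes from the two histories $h=(o^s,\ell,o^s)$ (``went left, fell'') and $h'=(o^s,\rho,o^s)$ (``went right, fell''). Both end in the pair $(o^s,u)$. After $h$ we know $d=\rho$, so $\Pr(o_{t+1}=o^g\mid h,a=\rho)=1$. After $h'$ we know $d=\ell$, so the same probability is $0$. Equation~\ref{eq:perfect_rm} requires these conditionals to depend only on $(o_t,x_t,a_t)=(o^s,u,\rho)$. They differ, so the machine is not perfect. Both histories have positive probability under any full-support policy, which handles the ``reachable'' qualifier.

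For the stronger claim, let $\mathcal{T}$ be any encoding POMDPRM with a single non-terminal state. Every history that has not yet terminated maps to that same non-terminal state, with the terminal state absorbing everything else. The only freedom is in $L$ and $\delta_r$, and neither can separate $h$ from $h'$ in the conditioning of Eq.~\ref{eq:perfect_rm}, because the conditioning set is $(o_t,x_t,a_t)$ and both histories give $(o^s,u,\rho)$. The next-observation marginal alone therefore already violates perfection, whatever the reward table is. Structural sparsity then follows directly from Definition~\ref{def:structural}, since an encoder that is not perfect exists.

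The step needing the most care is this last one. I must make precise that a single non-terminal state forces $x_t$ to coincide on $h$ and $h'$. A transition into the terminal state at the fall step cannot occur in an encoder: after a fall the episode continues with possible future reward $1$, and a terminal sink, which must carry zero further reward, would break return-matching for the policy that tries $\ell$ then $\rho$. That policy has expected return $\tfrac12+\tfrac12\gamma$, which the sink would cut off. Once terminality is excluded, $x_t$ is the same non-terminal state on both histories and the argument closes.
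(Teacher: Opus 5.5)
Your proposal is correct and follows essentially the paper's proof. The one-state machine that pays on arrival at the goal observation is the step-by-step encoding witness, and the histories ``went left, fell'' and ``went right, fell'' land in the same $(o_t,x_t,a_t)$ cell with opposite next-observation distributions. The only differences are cosmetic: the paper gives the fallen state its own observation $o_F$ and uses a two-step done chain so that $\delta_u\equiv u$.

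Your extra step excluding a jump to a terminal state at the fall is worthwhile, since the paper skips it by simply asserting that $x_t$ is constant. However, the single policy ``$\ell$ then $\rho$'' does not justify it. A machine could pay $\gamma$ on the fall label itself and still match $\tfrac12+\tfrac12\gamma$. Compare instead two policies that share the first fall and differ afterward, e.g.\ $\ell$ then $\rho$ versus $\ell$ forever, and do the same symmetrically for a first move $\rho$. Their environment returns differ by $\tfrac12\gamma$, while a machine that has already terminated assigns them identical returns.
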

\begin{proof}
A hidden trap side $z \in \{L, R\}$ is drawn uniformly. The states are
$(\mathrm{start}, z)$ with observation $o_S$, $(\mathrm{fallen}, z)$
with observation $o_F$, $\mathrm{done}_1$ with observation $o_G$, and
$\mathrm{done}_2$ with observation $o_\bot$, and the actions are
$\{a_L, a_R\}$. From either start-type or fallen-type state with
hidden $z$, action $a_z$ leads to $(\mathrm{fallen}, z)$ with reward
$0$, and action $a_{\bar z}$ leads to $\mathrm{done}_1$ with reward
$+1$. $\mathrm{done}_1$ moves to $\mathrm{done}_2$ under any action,
$\mathrm{done}_2$ is absorbing, and all rewards from the done states
are $0$.

The machine $\mathcal{T}$ has one non-terminal state $u$, with
$P = \{\mathsf G\}$, $L(o,a,o') = \{\mathsf G\}$ iff $o' = o_G$,
$\delta_u \equiv u$, $\delta_r(u, \{\mathsf G\}) = 1$, and
$\delta_r(u, \varnothing) = 0$. The environment gives $+1$ exactly on
transitions into $\mathrm{done}_1$, whose arrival observation is
$o_G$, and $o_G$ occurs at no other time, so
$\tilde r_t = \mathbbm{1}[o_{t+1} = o_G] = r_t$ on every trajectory
and $\mathcal{T}$ encodes.

Now let $\mathcal{T}'$ be any encoder with a single non-terminal
state. Its RM state $x_t$ is constant, so perfection would force
$\Pr(o_{t+1}, r_t \mid h_t, a_t)$ to depend on $h_t$ only through
$o_t$. The histories $h = (o_S, a_L, o_F)$ and $h' = (o_S, a_R, o_F)$
both have positive probability under the uniform policy and share
$(o_t, x_t, a_t) = (o_F, u, a_L)$. But $h$ reveals $z = L$, so $a_L$
falls again with probability $1$, while $h'$ reveals $z = R$, so $a_L$
reaches the goal with probability $1$. The conditional distributions
differ, so no single-state machine is perfect.
\end{proof}
\end{defertext}

\begin{defertext}{}{Classification of the MemoryS13 cue abstraction}
The raw environment is \emph{not} structurally dense. The cue machine
of Figure~\ref{fig:rm-axis}(b) encodes the raw environment as well,
yet on the raw views it is not perfect, because inside a corridor
whose cells share a view the next observation depends on the
unobserved position, and by Definition~\ref{def:structural} one
non-perfect encoder makes the reward structurally sparse. That failure
belongs to the dynamics, since the reward never distinguishes corridor
positions. The abstraction below removes exactly this view-aliasing
and nothing else, and the density claim is about the abstraction. Here
the two-history recipe runs in reverse. Any machine that tries to
forget the cue and still match the returns of policies that differ
only in the final pick is caught, since matching pins its remaining
discounted reward at the shared state to two different values at once.

\begin{lemma}[Density certificate for the cue abstraction]
\label{lem:cueperfect}
Let $\mathcal{M}_{\mathrm{cue}}$ be the cue abstraction of
MiniGrid-MemoryS13. A hidden cue $c \in \{A,B\}$ is uniform; the
locations are start ($o_S$), cue room ($o_C^A$ or $o_C^B$, revealing
$c$), junction ($o_J$), and end ($o_E$, then absorbing $o_\bot$). At
the start, $\mathrm{peek}$ moves to the cue room and every other
action to the junction; every action leaves the cue room for the
junction; at the junction $\mathrm{pick}_i$ moves to the end with
reward $\mathbbm{1}[i = c]$ while other actions self-loop with reward
$0$; all other rewards are $0$. Then every POMDPRM that encodes
$\mathcal{M}_{\mathrm{cue}}$ is perfect w.r.t.\ its labelling, so the
abstraction's reward is structurally dense. The three-state cue
machine of Figure~\ref{fig:rm-axis}(b) is such an encoder, and it does
not reproduce the reward step by step.
\end{lemma}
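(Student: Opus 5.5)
The plan has three parts. First show that every encoder of $\mathcal{M}_{\mathrm{cue}}$ must keep the cue in its machine state after the cue room. Then show that this bookkeeping is already enough for perfection. Last, exhibit the three-state cue machine and note that it encodes only in expectation.

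For the first part, fix an arbitrary POMDPRM $\mathcal{T}$ that encodes $\mathcal{M}_{\mathrm{cue}}$ with labelling $L$. Consider the two histories $h^A=(o_S,\mathrm{peek},o_C^A,a,o_J)$ and $h^B=(o_S,\mathrm{peek},o_C^B,a,o_J)$, and call the machine states they reach $x^A$ and $x^B$. Form the policies that follow $h^A$ or $h^B$ and then play $\mathrm{pick}_A$ or $\mathrm{pick}_B$. Each history is followed by the same pick regardless of the cue, so the policies depend only on observations and are admissible. The true returns are $\gamma^2\cdot 1$ on the matching branch and $0$ on the other. Encoding then pins down the discounted machine reward collected from $x^c$ on the continuations $(o_J,\mathrm{pick}_i,o_E,\dots)$.

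The main obstacle is that the machine could, in principle, shift reward onto the earlier transitions or the later absorbing ones, so the pinning is not immediate. I would handle this by comparing policies that share the prefix and differ only in the final pick. Their return difference isolates the reward from $x^c$ onward. In particular, the continuation value $V(x^c,\mathrm{pick}_A)-V(x^c,\mathrm{pick}_B)$ must equal $\gamma^{-2}\cdot(+1)$ for $c=A$ and $\gamma^{-2}\cdot(-1)$ for $c=B$, up to the common discount factor. These two differences cannot hold at the same state, so $x^A\neq x^B$. The same argument with extra self-loops at $o_J$ shows that every state reached along the junction loop also separates the cue. Applying it to histories that skip the peek gives a state whose pick difference is $0$, which separates it from both $x^A$ and $x^B$.

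For perfection I would check Eq.~\ref{eq:perfect_rm} case by case. At $o_S$, $o_C^c$, $o_E$ and $o_\bot$, the next observation and the reward are deterministic functions of $(o_t,a_t)$, or of the revealed cue in the cue room. At $o_J$, the conditional law of $(o_{t+1},r_t)$ given the history depends only on the posterior over $c$. That posterior is point mass on $A$ or on $B$ if the agent peeked, and uniform otherwise. The first part shows that the machine state determines which of these three posteriors holds, so the required equality follows. Hence every encoder is perfect, and the reward is structurally dense by Definition~\ref{def:structural}.

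Finally, I would read the cue machine of Figure~\ref{fig:rm-axis}(b) as having states $u_0$, $u_A$, $u_B$, with the cue-blind edge paying $0.5$. I would verify that it encodes by computing expectations branch by branch, using the fact that a cue-blind pick succeeds with probability $1/2$. It fails to reproduce the reward step by step because on a non-peek trajectory it pays $0.5$ where the true reward is $0$ or $1$.
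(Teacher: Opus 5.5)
Your proposal is correct and follows essentially the same route as the paper, which also isolates a single junction history $h$ by comparing two policies that agree everywhere except the pick played at $h$, derives $\Psi(x(h)) = \Pr(c{=}A\mid h)-\Pr(c{=}B\mid h)\in\{1,-1,0\}$ so that the machine state fixes the cue class, and then checks Eq.~\ref{eq:perfect_rm} cell by cell before exhibiting the cue machine as the encoding witness. Two harmless slips. First, at $o_S$ the action $\mathrm{peek}$ leads to $o_C^A$ or $o_C^B$ at random, so that transition is not deterministic; the $o_S$ cell still satisfies the condition because $o_S$ occurs only at $t=0$ with the empty history. Second, the pick difference measured from the pick is exactly $\pm1$, since the factor $\gamma^{2}\Pr(h)$ appears on both sides and cancels, not $\gamma^{-2}(\pm1)$.
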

\begin{proof}
Fix any encoder $\mathcal{T}$. For a positive-probability history $h$
ending at the junction, let $\kappa(h) \in \{A, B, \varnothing\}$
record whether $h$ contains $o_C^A$, contains $o_C^B$, or contains
neither, and let $x(h)$ be the machine state after reading the labels
of $h$.

We first show that $x(h)$ determines $\kappa(h)$. After a pick the
future is deterministic, so the machine's discounted reward from the
pick onward is a function of the state at the pick and of which pick
was played; write $G_i(x)$ for this quantity when $\mathrm{pick}_i$ is
played from state $x$, and $\Psi(x) := G_A(x) - G_B(x)$. Compare the
two deterministic policies that produce $h$ along its prefix, act
identically at every other history, and at $h$ play $\mathrm{pick}_A$
and $\mathrm{pick}_B$ respectively. Away from $h$ they generate
identical trajectories and label streams, so both return differences
are supported on the event that $h$ occurs. On that event the
environment returns differ by
$\gamma^{|h|}\Pr(h)\big(\Pr(c{=}A \mid h) - \Pr(c{=}B \mid h)\big)$
while the machine returns differ by $\gamma^{|h|}\Pr(h)\,\Psi(x(h))$,
the shared prefix cancelling. Encoding equates the two returns for
both policies, so
$$\Psi\big(x(h)\big) = \Pr(c{=}A \mid h) - \Pr(c{=}B \mid h),$$
which equals $1$, $-1$, or $0$ according as $\kappa(h)$ is $A$, $B$,
or $\varnothing$. Since $\Psi$ is a function of the state alone, two
junction histories with different classes cannot share a machine
state.

Perfection follows cell by cell. In a cell $\{o_t = o, x_t = u\}$ with
$o = o_S$ the only history is the empty one; with $o = o_C^X$ every
history has revealed $c = X$; with $o = o_J$ all histories share
$\kappa$ by the previous paragraph, and given $\kappa$ the pick reward
is Bernoulli with mean $\Pr(c = i \mid \kappa)$ while everything else
is deterministic; with $o \in \{o_E, o_\bot\}$ the process is
deterministic with reward $0$. In every case the conditional
distribution of $(o_{t+1}, r_t)$ is constant across the histories in
the cell, which is Eq.~\ref{eq:perfect_rm}.

Finally, the cue machine encodes $\mathcal{M}_{\mathrm{cue}}$, since
its reward on every transition is the average environment reward given
the history and the tower property gives equal expected returns for
every policy; it does not reproduce the reward step by step, since
from its no-cue state a pick is rewarded a fixed $\tfrac12$ while the
environment gives $0$ or $1$.
\end{proof}

\begin{proof}[Computation (potential sparsity of the cue abstraction)]
Take the cue machine, with rewards $1$ and $0$ for matching and
mismatching picks from $u_A, u_B$ and $\tfrac12$ for either pick from
$u_0$. At $(o_S, u_0)$ all one-step machine rewards are $0$, so the
greedy policy mixes uniformly over the four actions, peeking with
probability $\tfrac14$ and reaching the junction cue-blind with
probability $\tfrac34$; at $(o_J, u_0)$ it picks uniformly; at
$(o_J, u_X)$ it plays $\mathrm{pick}_X$; at the cue room all actions
give $0$ and lead to the junction. Hence
$$J(\pi) = \tfrac34 \cdot \gamma \cdot \tfrac12
+ \tfrac14 \cdot \gamma^2 = \tfrac38\gamma + \tfrac14\gamma^2.$$
With $\phi = V^\star$ as in the proof of Theorem~\ref{thm:greedy},
$J(\pi^{V^\star}) = J^\star = \gamma^2$ for $\gamma > \tfrac12$, since
peek, go, pick correctly then beats guessing immediately, worth
$\gamma \cdot \tfrac12$, and
$$J(\pi^{V^\star}) - J(\pi) = \tfrac38\gamma\,(2\gamma - 1) > 0,$$
so the abstraction's reward is potentially sparse w.r.t.\ the cue
machine.
\end{proof}
\end{defertext}

\begin{defertext}{}{Classification of the TinyReproduce abstraction}
TinyReproduce invites a shortcut. A mistake ends the episode, so one
might hope a machine could simply reward surviving play steps and
forget the tokens. The shortcut is not available, because episode end
is a consequence of correctness, not an input the machine can read.
The labelling function sees only $(o_t, a_t, o_{t+1})$, the play
observation is one constant view, and the final step shows that same
view whether the token was right or wrong. Two dictations that agree
on the tokens queried so far but disagree on the current target give,
under the same played token, identical $(o, a, o')$ and hence
identical labels with different rewards. The only way to reward a
token correctly is to already know it, and knowing every token at its
query means carrying the dictated sequence through the play phase, as
the sequence machine of Figure~\ref{fig:rm-axis}(c) does. The lemma
shows every encoder is forced into this shape.

\begin{lemma}[Density certificate for the dictation abstraction]
\label{lem:tinydense}
Consider the dictation abstraction of TinyReproduce matching the
implementation. A sequence $s \in \Sigma^k$ ($k \ge 2$) drawn
uniformly is shown token by token at times $0, \dots, k{-}1$ (the
observation at time $j$ is $s_j$, and watch-phase actions do not
affect the state), then queried in reverse. The observation at every
later step is one constant $o_{\mathrm{play}}$; an incorrect token
moves the process to an absorbing state, completion does too, and the
absorbing state also shows $o_{\mathrm{play}}$, so no observation
distinguishes a right token, a wrong token, or the end of the episode.
The dense variant gives $+1/k$ on each correct token; the sparse
variant gives $\tau/k$ on the transition into the absorbing state,
where $\tau$ is the number of correct tokens. Then, in the dense
variant, and in the sparse variant whenever $\gamma > (k-1)/k$, every
POMDPRM that encodes $\mathcal{M}$ resolves the dictated sequence, in
that its state at the end of the watch phase determines $s$, and is
perfect w.r.t.\ its labelling. Hence the abstraction's reward is
structurally dense in both variants.
\end{lemma}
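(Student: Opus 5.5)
The plan follows the two-history recipe of Lemma~\ref{lem:cueperfect}, but applied at every play-phase history rather than only at the junction. The key structural fact is that once the watch phase ends, every observation is $o_{\mathrm{play}}$. For any encoder $\mathcal{T}$ and any positive-probability history $h$ of length at least $k$, the labels after $h$ are therefore $L(o_{\mathrm{play}}, b, o_{\mathrm{play}})$ for the played tokens $b$. So the machine's discounted reward from $h$ onward, under a finite action sequence $\alpha$, is a deterministic function $g(x(h), \alpha)$ of the machine state and $\alpha$ alone.

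On the environment side, $h$ reveals $s$ completely, because the watch observations are the tokens themselves. It also reveals the actions played so far, so it fixes the number $\tau(h)$ of correct tokens and whether the process has already absorbed. The environment's conditional return after $h$ is therefore a deterministic function $f(\mathrm{st}(h), \alpha)$ of this latent status $\mathrm{st}(h) = (s, \tau, \text{absorbed?})$.

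\textbf{Step 1 (localisation).} Take two deterministic policies that agree everywhere except on the subtree below $h$. Below $h$ they play open-loop sequences $\alpha$ and $\alpha'$; this is legitimate because the observations below $h$ are constant. As in the proof of Lemma~\ref{lem:cueperfect}, both return differences are supported on the event $h$, and the shared prefix cancels. Encoding then gives
\[
g\big(x(h),\alpha\big) - g\big(x(h),\alpha'\big) = f\big(\mathrm{st}(h),\alpha\big) - f\big(\mathrm{st}(h),\alpha'\big)
\]
for all $\alpha, \alpha'$. Consequently, if two histories share a machine state, the difference profiles $\alpha \mapsto f(\mathrm{st},\alpha) - f(\mathrm{st},\alpha_0)$ of their statuses must coincide.

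\textbf{Step 2 (the profile determines the status).} I would show that the map from status to difference profile is injective; I expect this to be the main obstacle.
\begin{itemize}
\item For an absorbed status the profile is identically $0$.
\item For a live status with $m \ge 1$ remaining targets, the profile is nonconstant. In the dense variant, playing the correct next token rather than a wrong one changes $f$ by at least $1/k$. Iterating this identifies the remaining targets one by one.
\item In the sparse variant, playing $j$ further correct tokens and then a wrong one yields $f = \gamma^{j}(\tau + j)/k$, up to a common factor. This is strictly increasing in $j$ iff $\gamma(\tau+j+1) > \tau+j$ for all $\tau + j \le k-1$. The worst case $\tau + j = k-1$ is exactly the hypothesis $\gamma > (k-1)/k$.
\item Under that hypothesis the unique maximiser of $f(\mathrm{st},\cdot)$ is the fully correct completion, so the profile recovers the remaining targets. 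The value differences then recover $\tau$, since they scale with it.
\end{itemize}
Applying this at the history ending the watch phase, where $\tau = 0$ and $m = k$, shows that $x$ determines $s$. Applying it at later play histories shows that $x_t$ determines the remaining targets, $\tau$, and absorption. The care needed is entirely in the sparse bookkeeping, and in checking that every compared completion terminates within the horizon so that the closed forms hold.

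\textbf{Step 3 (perfection, cell by cell).} I would check Eq.~\ref{eq:perfect_rm} in each cell $\{o_t = o, x_t = u\}$.
\begin{itemize}
\item In a watch-phase cell, $o_{t+1}$ is a fresh uniform token independent of the history (or $o_{\mathrm{play}}$ at the last watch step), and the reward is $0$. The conditional law is therefore constant across the cell.
\item In a play cell, Step 2 says $u$ pins down the current target, $\tau$, and absorption. Given $a_t$, the pair $(o_{t+1}, r_t)$ is then deterministic and constant across the cell: $o_{t+1} = o_{\mathrm{play}}$, and $r_t$ is $1/k$ or $0$ in the dense variant, or $0$ or $\tau'/k$ in the sparse variant.
\end{itemize}
Hence every encoder is perfect. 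The sequence machine of Figure~\ref{fig:rm-axis}(c), suitably reweighted, encodes $\mathcal{M}$, so at least one encoder exists, and Definition~\ref{def:structural} yields structural density in both variants.
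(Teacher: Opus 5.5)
Your proposal is correct and follows the paper's proof closely. Your two-policy localisation is the paper's identity $(\star)$, and the cell-by-cell check of Eq.~\ref{eq:perfect_rm} is the same; your profile-injectivity step (remaining targets from the argmax cylinder, then $\tau$ and absorption) stands in for the paper's ``$2F$'' contradiction and its position/target pinning, and reading the targets first avoids the paper's tacit assumption, at ``correct once, then wrong'', that the two pairs share a current target. One caveat, which the paper's proof has too: your watch-phase bullet holds only for cells taken at a fixed $t$, because an encoder may reuse one state at watch time $k-1$ and at an earlier watch time showing the same token (the label $L(s_{k-1},a,o_{\mathrm{play}})$ already signals the phase switch), so a cell pooled over times, as in the paper's kernel remark and Theorem~\ref{thm:greedy}, can mix a uniform next token with $o_{\mathrm{play}}$.
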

\begin{proof}
Since the play and absorbing observations coincide, every label from
time $k$ on is $\ell_a := L(o_{\mathrm{play}}, a, o_{\mathrm{play}})$,
a function of the action alone; and since those observations carry no
information, a deterministic policy's actions from time $k$ on form a
fixed word $w \in \Sigma^{\omega}$ determined by the dictation it
watched. Conversely, every map $s \mapsto w(s)$, combined with any
watch behavior $\alpha$, is a deterministic policy, and every
dictation has positive probability.

Fix an encoder $\mathcal{T}$. For a machine state $\xi$ and a word
$w$, let
$G(\xi, w) := \sum_{j \ge 0} \gamma^j\, \delta_r(\xi_j, \ell_{w_j})$
with $\xi_0 = \xi$ and $\xi_{j+1} = \delta_u(\xi_j, \ell_{w_j})$, the
machine's discounted reward from time $k$ on, and let $E(s, w)$ be the
environment's. Writing $\rho(s,w)$ for the number of initial positions
at which $w$ matches the queried reversal of $s$, the dense variant
gives $E = \tfrac1k \sum_{j < \rho} \gamma^j$, and the sparse variant
gives $E = \gamma^{\rho}\,\rho/k$ on failure ($\rho < k$) and
$\gamma^{k-1}$ on completion. In both variants $E$ is strictly
increasing in $\rho$ for fixed $s$, in the sparse variant exactly when
$\gamma > (k-1)/k$, the same monotonicity as
Prop.~\ref{prop:redundancy}. Call the full-success value $F$; every
partial run is worth strictly less than $F$.

\emph{Step 1 (pinning).} Fix a watch behavior $\alpha$ and let $x(s)$
be the machine state at time $k$ under dictation $s$, a deterministic
function of $s$. Applying Definition~\ref{def:encode} to the policy
$(\alpha, w(\cdot))$ and to the same policy with the word changed at a
single dictation, then subtracting, gives
\begin{equation*}
G\big(x(s), w\big) - G\big(x(s), w'\big) = E(s, w) - E(s, w')
\tag{$\star$}
\end{equation*}
for all $s$, $w$, and $w'$. Suppose $x(s) = x(s'')$ with $s \ne s''$. By $(\star)$ the function
$w \mapsto E(s, w) - E(s'', w)$ is constant. Evaluating it at the
all-correct words $w_s$ and $w_{s''}$ and subtracting gives
$E(s'', w_s) + E(s, w_{s''}) = 2F$. But playing the correct word for
one dictation under the other fails at their first disagreement, so
both terms are strictly below $F$, a contradiction. Hence $x(\cdot)$
is one-to-one for every watch behavior.

\emph{Step 2 (perfection).} The same subtraction applied to
continuations pins every later state. For a machine state $\xi$
reached at a play time by some (dictation, action-prefix) pair,
$(\star)$ gives that any two pairs reaching $\xi$ have tail-reward
functions differing by a constant. A dead pair's tail is identically
$0$ while an alive pair's tail takes at least two values (all-correct
earns the full remainder, wrong-now earns $0$ or the current partial
amount), so dead and alive pairs never share a state. Now take two
alive pairs at positions $i$ and $i''$. In the sparse variant,
evaluating the constant difference at ``wrong now'' gives
$(i - i'')/k$ and at ``correct once, then wrong'' gives
$\gamma\,(i{+}1 - i''{-}1)/k$, and equating the two forces $i = i''$.
In the dense variant, ``wrong now'' earns $0$ for both pairs, so the
constant is $0$ and the tails are equal outright; evaluating at ``all
correct'' gives $\tfrac1k \sum_{j < k-i} \gamma^j = \tfrac1k
\sum_{j < k-i''} \gamma^j$, which again forces $i = i''$. With equal
positions the constant is $0$ in both variants, so the tail-reward
functions are identical, and this forces identical remaining targets,
since playing the correct word of one pair under the other would
otherwise make the two tails differ. Every reachable play-phase cell
$(o_{\mathrm{play}}, \xi, a)$ therefore contains pairs sharing
aliveness, position, and remaining targets, and the distribution of
$(o_{t+1}, r_t)$ on it is fixed, with next observation
$o_{\mathrm{play}}$ and the deterministic reward the variant assigns.
Watch-phase cells are constant as well, since the tokens of a uniform
$s$ are independent and uniform. Every cell satisfies
Eq.~\ref{eq:perfect_rm}, so the machine is perfect; since
$\mathcal{T}$ was arbitrary, the reward is structurally dense.
\end{proof}

\begin{remark}[Greedy behavior and the gate order]
\label{rem:tinygreedy}
W.r.t.\ the sequence machine (states are the remaining targets and the
position), the dense variant is potentially dense. The one-step reward
is $\tfrac1k$ for the correct token and $0$ otherwise at every
position, so the greedy RM policy is optimal and
Theorem~\ref{thm:greedy} predicts the null outcome directly. The
sparse variant is potentially sparse. At position $i \ge 1$ the
one-step reward is $i/k$ for a wrong token and $0$ for the correct
one, so the greedy policy fails on purpose, and $\phi = V^\star$
repairs it. Dense and potentially sparse would alone predict a helpful
bonus; the observed null is given by Prop.~\ref{prop:redundancy},
whose gate is read first.
\end{remark}
\end{defertext}

\begin{defertext}{}{Proof of Proposition \ref{prop:redundancy}}
\begin{proof}[Proof of Proposition~\ref{prop:redundancy}]
Fix the dictated sequence $s$ and a policy, and let $\tau$ be the
(random) number of correct tokens. The within-episode observation
stream is a deterministic function of $(s, \tau)$, since dictation is
scheduled, the reproduction observation is the constant
$o_{\mathrm{play}}$, and the episode ends at the first error or at
completion. Moreover, for the same $s$, the stream under $\tau$ is an
exact prefix of the stream under any $\tau' > \tau$, because the two
runs coincide through the step at which the token at position $\tau$
is played and the shorter run simply ends. Since each $b_t$ is a
nonnegative function of the prefix $o_{0:t}$, the cumulative
discounted bonus under $\tau'$ equals that under $\tau$ plus
nonnegative terms, so the intrinsic return is nondecreasing in $\tau$
for each fixed $s$.

The extrinsic return is strictly increasing in $\tau$ for each fixed
$s$. In the dense variant each additional correct token adds a
positive discounted term. In the sparse variant the terminal reward is
$\gamma^{\,t(\tau)}\tau/k$ with $t(\tau)$ increasing by one per
additional token, and
$\gamma^{\,t(\tau)+1}(\tau{+}1)/k > \gamma^{\,t(\tau)}\tau/k$ for all
$\tau < k$ exactly when $\gamma(\tau+1) > \tau$, which holds for all
$\tau \le k-1$ iff $\gamma > (k-1)/k$.

Now compare any two policies. A policy that fails to achieve
$\tau = k$ with positive probability is, on that event, strictly worse
extrinsically and weakly worse intrinsically than the policy that
corrects the corresponding tokens and agrees elsewhere. So for every
$\lambda \ge 0$ the optimal policies of $r + \lambda b$ and of $r$
coincide, and all optimal policies of $r$ have identical intrinsic
return.

\emph{Remark.} The hypothesis $b_t = g_t(o_{0:t})$ covers E3B exactly,
since its features and covariance are computed from the observations
received so far. NovelD scores the arrival observation $o_{t+1}$, so
the comparison acquires a single boundary term at the divergence step;
the conclusion then holds up to $\lambda$ times one first-visit bonus
term, consistent with the at-most-$\pm0.08$ effects observed. The
statement concerns the objective, not optimization dynamics, as does
Theorem~\ref{thm:greedy}.
\end{proof}
\end{defertext}

\begin{defertext}{}{Proof of Theorem \ref{thm:greedy}}
\begin{proof}[Proof of Theorem~\ref{thm:greedy}]
Structural density makes $\mathcal{T}$ perfect, so by
Remark~\ref{rem:generalkernel} the kernel of
Definition~\ref{def:greedy} equals the canonical
$\bar P(o' \mid o, u, a) := \Pr(o_{t+1}{=}o' \mid o_t{=}o, x_t{=}u,
a_t{=}a)$, which does not depend on the policy or on $t$ on reachable
pairs. Since $x_{t+1}$ is a deterministic function of
$(x_t, o_t, a_t, o_{t+1})$, the pair process is a finite MDP
$\mathcal{Y}$ on $Y = O \times U$ with reward
$\bar R(y, a, y') := \delta_r(u, \sigma)$, discount $\gamma$, and
initial distribution induced by $(\mu, \omega, u_0)$. Let $V^\star$,
$Q^\star$, $J^\star$ be its optimal value function, action-value
function, and optimal return, which exist and satisfy the Bellman
optimality equations because $\mathcal{Y}$ is finite and $\gamma < 1$.

First, $\sup_\pi J(\pi) = J^\star$. The machine state is a
deterministic function of the observation history, so every
observation-history policy induces a policy for $\mathcal{Y}$ with the
same machine return
$\tilde V(\pi) := \mathbb{E}^\pi[\sum_t \gamma^t \tilde r_t]$, and
conversely; and in a finite discounted MDP, history-dependent policies
do not outperform stationary ones, so
$\sup_\pi \tilde V(\pi) = J^\star$. By Definition~\ref{def:encode},
$\tilde V(\pi) = J(\pi)$ for every $\pi$, and the claim follows. No
step-by-step equality between $\tilde r_t$ and $r_t$ is used or
available, since encoding is an equality of expected returns, and
expected returns are all that optimal behavior depends on.

Second, take $\phi := V^\star$, a function on $U \times O$ as
Definition~\ref{def:potential} requires. At every pair $(y, a)$,
\begin{align*}
\sum_{o'} \bar P(o' \mid y, a)\big[\bar R(y, a, y')
  &+ \gamma V^\star(y')\big] - V^\star(y)\\
  &= Q^\star(y, a) - V^\star(y),
\end{align*}
so $\pi^{V^\star}$ is uniform over $\argmax_a Q^\star(y, \cdot)$, and
any randomization over the optimal-action set of a finite MDP is
optimal, hence $J(\pi^{V^\star}) = J^\star$.

Third, suppose the greedy RM policy $\pi$ had $J(\pi) < J^\star$. Then
$J(\pi) < J(\pi^{V^\star})$, and $\phi = V^\star$ would witness
potential sparsity w.r.t.\ $\mathcal{T}$, contradicting the
hypothesis. Hence $J(\pi) = J^\star = \sup_{\pi'} J(\pi')$, and the
greedy RM policy is optimal for $\mathcal{M}$.

\emph{What happened.} Perfection collapses the POMDP to a finite MDP
over the pairs (observation, machine state). In that MDP, shaping with
the optimal value function converts the one-step reward into the
optimal advantage, so myopic greed becomes optimal. A potentially
dense reward is one for which no such correction can help, which is
only possible when myopic greed was optimal already.
\end{proof}
\end{defertext}
\section{Discussion}
\label{sec:discussion}


We evaluated six memory architectures and two episodic bonuses under a single training stack across three environments. The identical bonus yielded three distinct outcomes: amplification on MysteryPath, equalization on MemoryS13, and a null effect on TinyReproduce.

\paragraph{Structural Determinants of Interaction Patterns.}
These divergence patterns isolate the post-exploration retention burden. The two sparsity properties separate them sequentially: structural sparsity applies first, while potential sparsity applies only when the reward is structurally dense. In MysteryPath, the bonus induces state coverage, but this yields zero return unless the high-dimensional path is retained. Structural sparsity (Definition~\ref{def:structural}) formalizes this lack of memory supervision: the abstraction admits a one-state machine that reproduces the return while omitting the path. Since retention is never reward-supervised, the bonus strictly multiplies the sequence-modeling capacity a cell already possesses (amplification). Conversely, the MemoryS13 cue abstraction is structurally dense but potentially sparse (Definition~\ref{def:potential}): the agent must actively discover the cue, but the subsequent retention burden is only a single reward-supervised bit. The bonus supplies the missing local advantage to seek the cue; since standard cells easily carry one bit, the bonus determines \emph{whether} convergence occurs rather than \emph{how well} (equalization). TinyReproduce's dictation abstraction is structurally dense and potentially dense: every encoder must carry the full sequence, and the observation stream is action-independent. Theorem~\ref{thm:greedy} accounts for this null effect; joint density implies the unshaped greedy policy is already optimal, leaving no exploratory deficit to correct.

\paragraph{Reward Manipulations and Exploratory Stagnation.}
Controlled variants confirm that reward structure, rather than density, drives bonus efficacy. The aligned reward (which directly supervises the required memory) renders the bonus redundant or detrimental, whereas the density-matched, uninformative distractor reward preserves its full restorative effect. Separately, the penalty variant isolates an optimization failure: a return-matched cost on exploratory actions drives architectures into stationary policies confined to zero-cost states. Either exploration bonus overcomes this suboptimal convergence regardless of its relative magnitude to the penalty (E3B pays $2.6\times$ the per-step cost and NovelD only $0.3\times$, yet both recover GatedDeltaNet equally). This indicates that recovery involves breaking stagnation rather than outbidding the penalty. Reward sparsity determines \emph{where} a bonus is effective, whereas reward pricing determines \emph{whether} exploration occurs at all.

\paragraph{Limitations.}
The formal results cover the regimes unevenly. Theorem~\ref{thm:greedy} accounts for the null pattern, whereas the amplify/equalize distinction rests on empirical retention-burden analysis. Additionally, every classification applies strictly to reward-preserving abstractions. The diagnostic requires reasoning over encoding machines, which may not exist globally or scale to high-dimensional inputs. Empirically, this study is bounded to discrete-action domains, a single on-policy algorithm, and observation-based episodic bonuses. Because the forward pass strictly separates the bonus and memory, we measure behavioral outcomes rather than latent encoding alterations. Evaluating these dependencies under global bonuses, off-policy learners, and continuous environments remains for future work.

\section{Conclusion}
\label{sec:conclusion}

The interaction between episodic exploration bonuses and neural memory architectures is task-dependent: an identical exploration signal amplifies capacity differences, equalizes performance, or yields a null effect according to the environment's reward structure. Controlled manipulations confirm that a dense reward neutralizes an exploration bonus only when it directly supervises the necessary latent representation, and that pricing exploratory actions can halt optimization even where the theoretical optimum is unchanged. Observation-anchored reward machines organize these regimes by what the reward supervises rather than how often it pays. Structural sparsity (an automaton reproduces the return while omitting the required memory) determines whether retention is supervised, dictating how severely architectural capacity gates the gain. Where the reward is structurally dense, potential sparsity (the one-step reward misprices local exploratory actions) determines whether an exploratory deficit exists. Practitioners must therefore evaluate what the reward inherently supervises and whether the architecture can retain the information the bonus exposes. An exploration bonus induces necessary state coverage, but only adequate representational capacity can convert that coverage into optimal expected return.
\newpage

\bibliography{aaai2027}

\appendix
\clearpage
\newpage
\ifpreprint
\section{Environment Details}
\label{app:envs}

\paragraph{Overview.}
We use three environments that vary \emph{how memory content is acquired} while keeping the training stack identical. Table~\ref{tab:envspec} summarizes the observation spaces, action spaces, horizons, and training budgets.

\begin{table*}[htbp]
\centering\small
\caption{Environment specifications. ``Flat dim'' is the flattened observation size provided to the encoder. MysteryPath uses a Nature-DQN convolutional encoder; the MiniGrid one-hot tensors and the TinyReproduce vectors are processed by an MLP encoder. Every configuration in the suite evaluates $n{=}10$ seeds.}
\label{tab:envspec}
\begin{tabular}{@{}lccccc@{}}
\toprule
Environment & Observation & Flat dim & Actions & $T_{\max}$ & Budget \\
\midrule
MysteryPath-Grid            & $84\times84\times3$ & 21{,}168 & 4 & 128 & 20M \\
MemoryS13 ($3\times3$)      & $3\times3\times20$  & 180      & 7 & 845 & 20M \\
MemoryS13 ($7\times7$)      & $7\times7\times20$  & 980      & 7 & 845 & 20M \\
TinyReproduce ($k{=}10$)    & $\mathbb{R}^{6}$    & 6        & 4 & 19  & 10M \\
\bottomrule
\end{tabular}
\end{table*}

\paragraph{MysteryPath-Grid \citep{pleines2023memorygym}.}
A continuous, invisible path connects a fixed origin to a goal. The agent observes its own position and the visible goal, but the path itself remains hidden. Stepping off the path results in a \emph{fall}, resetting the agent to the origin without altering the path's layout. Because the path is procedurally generated each episode, memory content must be discovered \emph{within} each episode through trial and error. An optimal agent retraces known safe steps and systematically probes unknown edges. Consequently, exploration incurs the cost of falls, and this exploration yields no benefit unless the verified path is retained in memory across resets. The action space consists of the four cardinal directions. Because falls are entirely avoidable, the penalty reward variant is strictly return-matched to the sparse baseline.

\paragraph{MiniGrid-MemoryS13 \citep{minigrid}.}
The agent spawns near a cue object (e.g., a key or a ball), must observe it, traverse a corridor, and select the matching object at a distant T-junction. Here, memory content exists from the start; the agent must merely find and retain it. Observations are one-hot encoded MiniGrid tensors. To appropriately handle categorical channels (object type, color, state) rather than continuous pixel intensities, a wrapper expands the raw $(N,N,3)$ grid into a $(N,N,20)$ binary format. We evaluate both the default $7\times7$ view and a restricted $3\times3$ view. The restricted view forces the agent to actively \emph{seek} the cue, introducing a mild exploration challenge, and thus serves as our primary discriminative setting.

\paragraph{The S13 reward wrapper.}
MiniGrid's default reward applies horizon discounting, which confounds task success with completion speed and is not strictly sparse. To ensure a fair comparison against MysteryPath using matched rewards and metrics, our wrapper provides three modes:
\begin{itemize}\itemsep2pt
  \item \textbf{native:} The default reward of $1 - 0.9\,t/T_{\max}$ for reaching the correct object, and $0$ otherwise. This is retained for baseline reference only.
  \item \textbf{matched sparse:} $+1$ for finding the matching object, and $0$ otherwise. This flat-sparse variant provides a direct match to MysteryPath's sparse reward.
  \item \textbf{matched penalty:} Identical to the matched sparse mode, but subtracts $p$ for every non-\texttt{nop} action. This introduces an \emph{unconditional} per-move cost absent from the native discount (which applies only upon success and cannot induce a policy freeze). MiniGrid's \texttt{done} action (index 6) is cost-free, acting as a zero-cost sanctuary. We set $p = 1/T_{\max} = 1/845$, serving as the horizon-normalized equivalent of MysteryPath's $-1/128$ off-path penalty.
\end{itemize}
Regardless of the reward mode, the wrapper computes \texttt{info["is\_success"]} based purely on environment geometry (\texttt{agent\_pos == success\_pos}). This guarantees that the reported success rate is a clean, binary metric that remains identical across all reward variants.

\paragraph{TinyReproduce (modified POPGym Autoencode, \citealp{morad2023popgym}).}
A sequence of $k$ tokens from an alphabet of $v$ symbols is dictated on a strict schedule, and the agent must reproduce it in reverse order. Observations are formatted as $[\,\mathbb{1}_{\text{watch}},\ \mathbb{1}_{\text{play}},\ \text{onehot}(v)\,]$, with a discrete action space over the $v$ symbols. For $k{=}10$ and $v{=}4$, the episode spans $2k-1 = 19$ steps. The environment begins by revealing the first token, followed by $k{-}1$ additional watch steps, and concludes with $k$ reproduction steps. The task fundamentally forces memory retention without any exploration demands, serving as a pure null-regime baseline.

\paragraph{TinyReproduce instrumentation and the \texttt{rm\_state} interface.}
The environment exposes its ground-truth reward-machine state at every step, enabling empirical verification of our structural classification:
\begin{itemize}\itemsep2pt
  \item \texttt{info["phase"]} $\in \{$\texttt{watch}, \texttt{play}$\}$.
  \item \texttt{info["rm\_state"]}: During the \emph{watch} phase, this contains the prefix revealed so far (\texttt{tuple(seq[:shown])}). During the \emph{play} phase, it contains the remaining sequence to be emitted, \emph{in emission order}. This is the exact minimal sufficient statistic of the history, making it the perfect RM state.
  \item \texttt{info["n\_correct"]} and \texttt{info["is\_correct"]} track step-wise accuracy, while \texttt{info["success"]} and \texttt{info["is\_success"]} indicate if the agent successfully reproduced all $k$ tokens.
\end{itemize}

\section{Hyperparameter Registry}
\label{app:hp}
\begin{table*}[!htpb]
\centering
\caption{Shared defaults (all environments) and per-architecture kwargs. Entries listed in the per-environment tables override these baseline settings.}
\label{tab:hp-shared}
\small
\begin{tabular}{@{}ll ll@{}}
\toprule
\multicolumn{2}{c}{Optimization} & \multicolumn{2}{c}{Rollout / PPO} \\
\cmidrule(lr){1-2}\cmidrule(lr){3-4}
learning rate      & $1\times10^{-4}$ & parallel envs      & 16 \\
epochs per update  & 4                & steps per env      & 512 \\
clip range         & 0.2              & discount $\gamma$  & 0.995 \\
entropy coef       & 0.008            & GAE $\lambda$      & 0.95 \\
value coef         & 1.0              & TBPTT chunk        & 64 \\
max grad norm      & 0.5              & chunks per batch   & 32 \\
target KL          & 0.05             & bonus coef $\lambda_{\text{int}}$ & 0.03 \\
encoder dim        & 128              & eval every         & 10 rollouts \\
encoder hidden     & 256              & eval episodes      & 20 \\
\midrule
\multicolumn{4}{c}{Architecture kwargs} \\
\cmidrule(lr){1-4}
GRU, LSTM, Memoryless & (defaults) & RetNet        & 4 heads \\
GatedDeltaNet & assoc.\ size 64 & Mamba-2 & 4 heads, $d_{\text{state}}$ 64, expand 2 \\
\bottomrule
\end{tabular}
\end{table*}

All runs are managed by a single declarative registry. A shared set of hyperparameters (Table~\ref{tab:hp-shared}) serves as the baseline across all experiments. The per-environment tables strictly detail deviations from this baseline. We do not use architecture-specific hyperparameters anywhere in the suite.

\begin{table*}[htbp]
\centering
\caption{MysteryPath-Grid. No hyperparameter overrides; the shared baseline from Table~\ref{tab:hp-shared} applies completely.}
\label{tab:hp-mpg}
\small
\begin{tabular}{@{}ll@{}}
\toprule
Environment id      & \texttt{MysteryPath-Grid-v0} \\
Horizon $T_{\max}$  & 128 \\
Discount $\gamma$   & 0.995 \\
Budget / seeds      & 20M steps, $n=10$ \\
Architectures       & GRU, LSTM, RetNet, GatedDeltaNet, Mamba-2, Memoryless \\
Training signals    & none, E3B, NovelD \\
HP overrides        & (shared) \\
\midrule
\multicolumn{2}{@{}l}{\emph{Reward variants}} \\
Sparse      & $+1$ on success \\
Penalty     & $+1$ on success, $-0.008$ ($-1/T_{\max}$) per off-path step \\
Aligned     & $+0.1$ per step of forward progress along the confirmed path \\
Distractor  & $+\varepsilon$ per revisited tile, $\varepsilon = 0.1/T_{\max} \approx 7.8\times10^{-4}$ \\
\bottomrule
\end{tabular}
\end{table*}

\begin{table*}[htbp]
\centering
\caption{MiniGrid-MemoryS13. The delayed cross-corridor reward necessitates a longer credit assignment horizon than the shared baseline provides. We tuned these four overrides specifically for this environment and applied them uniformly to all architectures.}
\label{tab:hp-s13}
\small
\begin{tabular}{@{}ll@{}}
\toprule
Environment id      & \texttt{MiniGrid-MemoryS13-v0} \\
Horizon $T_{\max}$  & 845 ($5 \cdot 13^2$) \\
Budget / seeds      & 20M steps, $n=10$ \\
Architectures       & GRU, LSTM, RetNet, GatedDeltaNet, Mamba-2, Memoryless \\
Training signals    & none, E3B, NovelD \\
\midrule
\multicolumn{2}{@{}l}{\emph{HP overrides} (all cells)} \\
discount $\gamma$   & 0.999 \\
GAE $\lambda$       & 0.98 \\
TBPTT chunk         & 32 \\
learning rate       & $3\times10^{-4}$ \\
\midrule
\multicolumn{2}{@{}l}{\emph{Reward variants}} \\
Sparse $3\times3$   & flat $+1$ on the matching object, $3\times3$ egocentric view \\
Sparse $7\times7$   & as above, default $7\times7$ view \\
Penalty $3\times3$  & flat $+1$, $-1/T_{\max} \approx -1.2\times10^{-3}$ per movement action \\
Distractor $3\times3$ & flat $+1$, $+\varepsilon$ per revisited cell, $\varepsilon = 0.1/T_{\max} \approx 1.2\times10^{-4}$ \\
\bottomrule
\end{tabular}
\end{table*}

\begin{table*}[htbp]
\centering
\caption{TinyReproduce. We use a single learning rate across \emph{all} architectures. Because the shared $10^{-4}$ rate caused several cells to floor near ${\approx}0.00$, the entire suite runs at $10^{-3}$ without any architecture-specific exceptions. The learning rate is the sole override; the discount factor relies on the shared $\gamma = 0.995$ baseline.}
\label{tab:hp-tiny}
\small
\begin{tabular}{@{}ll@{}}
\toprule
Environment id      & \texttt{TinyReproduce-v0} \\
Sequence            & $k = 10$ symbols, alphabet $v = 4$, reverse order \\
Horizon $T_{\max}$  & $2k-1 = 19$ \\
Budget / seeds      & 10M steps, $n=10$ \\
Architectures       & GRU, LSTM, RetNet, GatedDeltaNet, Mamba-2, Memoryless \\
Training signals    & none, E3B, NovelD \\
\midrule
\multicolumn{2}{@{}l}{\emph{HP overrides}} \\
All cells           & learning rate $10^{-3}$ (only override; $\gamma = 0.995$ shared) \\
\midrule
\multicolumn{2}{@{}l}{\emph{Reward variants} (exactly return-matched; differ only in timing)} \\
Dense  & $+1/k$ at each correctly reproduced token \\
Sparse & lump sum $(\#\text{correct})/k$ at episode end \\
\bottomrule
\end{tabular}
\end{table*}

\section{Reward-Variant Calibration}
\label{app:calibration}

\paragraph{Distractor: the discounted criterion.}
The distractor rewards the agent with $\varepsilon$ every time it revisits a tile. If a policy ignores the main task to farm these revisits, the theoretical maximum it can earn is $\varepsilon \sum_{t\ge0}\gamma^t = \varepsilon/(1-\gamma)$. For the optimal policy to remain unchanged, this farmed total must stay strictly below the $+1$ success payout. Evaluating the configurations: on MysteryPath, $\varepsilon/(1-\gamma) = (0.1/128)/0.005 \approx 0.156 < 1$. On MemoryS13, it is $(0.1/845)/0.001 \approx 0.118 < 1$. In both scenarios, the margin of safety is approximately $6$--$8\times$. This guarantees that farming revisits can never become the optimal strategy, even prior to factoring in the constraint that a tile must first be visited before it can pay out again.

\paragraph{Penalty: return-matching.}
Because the per-step penalty is $-1/T_{\max}$, the maximum cumulative penalty an agent can accrue over a full episode is $-1$. Since falls (MysteryPath) and moves beyond the free \texttt{done} action (MemoryS13) are entirely avoidable, the optimal trajectory for the sparse variant yields the exact same return under the penalty variant. The theoretical optimum is therefore mathematically unchanged; only the optimization landscape shifts, successfully isolating the exploration pricing dynamic.

\section{Bonus-Coefficient Sweep}
\label{app:lambda}

We use a coefficient of $\lambda_{\text{int}} = 0.03$ across all reported runs for both E3B and NovelD, regardless of the environment or reward variant. This value was established via the sweep in Table~\ref{tab:lambda}. To maintain consistency, NovelD and the non-MysteryPath environments inherit this value. Table~\ref{tab:lambda-lit} compares our chosen value against coefficients utilized in prior literature.

\begin{table}[htbp]
\centering\small
\caption{Bonus-coefficient sweep on MysteryPath (E3B) showing mean final success. The response scales monotonically with $\lambda$ for architectures that successfully convert the bonus into task success. It remains flat for GRU across an entire order of magnitude, indicating that GRU's lack of response is not a tuning failure.}
\label{tab:lambda}
\begin{tabular}{@{}lccc@{}}
\toprule
Cell & $\lambda{=}0.003$ & $\lambda{=}0.01$ & $\lambda{=}0.03$ \\
\midrule
GRU           & 0.175 & 0.120 & 0.133 \\
RetNet        & 0.300 & 0.370 & 0.475 \\
GatedDeltaNet & 0.225 & 0.280 & 0.292 \\
\bottomrule
\end{tabular}
\vspace{2pt}\par\footnotesize\raggedright
Note: $n$ represents hyperparameter combinations, not seeds. Each $\lambda$ is evaluated across the same six (learning rate, chunk length) pairings drawn from $\{10^{-4}, 2.5\times10^{-4}\} \times \{32, 64, 128\}$, ensuring $\lambda$ is not confounded by them ($\lambda{=}0.01$ covers 5 of the 6). Because this sweep utilized learning rates and chunk lengths differing from the final registry, it serves strictly to confirm tuning and a monotonic response, rather than acting as a matched-HP ablation. The final-registry results (Table 1 of the main text) outperform these sweep values due to an optimized configuration. We pull the success metric directly from \texttt{eval/mean\_reward}, which maps perfectly to the success rate on sparse MysteryPath.
\end{table}

\begin{table*}[htbp]
\centering\small
\caption{Published intrinsic-reward coefficients. Direct comparisons are difficult due to differing bonus scaling techniques. Notably, only E3B and our implementation normalize the bonus via a running standard deviation prior to applying the scale.}
\label{tab:lambda-lit}
\begin{tabular}{@{}llccl@{}}
\toprule
Method & Environment & Coef. & Norm.\ & Swept over \\
\midrule
E3B \citep{henaff2022e3b}    & MiniHack & 1.0 & yes & $10^{-4}$--$10$ \\
E3B                          & VizDoom  & $3\times10^{-6}$ & no & $3\times10^{-7}$--$10^{-4}$ \\
E3B                          & Habitat  & 0.1 & --- & $10^{-4}$--$1.0$ \\
NovelD \citep{zhang2021noveld} & MiniGrid & 0.1 & no & $\{0.01, 0.05, 0.1\}$ \\
NovelD                       & NetHack  & 100 & no & --- \\
\midrule
\textbf{Ours}                & MysteryPath, S13, Tiny & \textbf{0.03} & yes & $\{0.003, 0.01, 0.03\}$ \\
\bottomrule
\end{tabular}
\vspace{2pt}\par\footnotesize\raggedright
E3B does not report MiniGrid experiments; MiniHack provides the closest comparison and, similar to our setup, applies the coefficient to a running-std-normalized bonus. NovelD's coefficient applies to an unnormalized RND-difference bonus, complicating direct comparison. 
\end{table*}

\paragraph{Bracketing $\lambda$ from above.}
While Table~\ref{tab:lambda} confirms $\lambda = 0.03$ as the strongest tested value, we ran an upper bracket test at $\lambda = 0.1$ (a $3.3\times$ increase) on the two cells most responsive to the bonus (Table~\ref{tab:lambda-hi}). Both architectures completely collapsed, confirming that $\lambda = 0.03$ represents a true interior optimum rather than an unexplored edge.

\begin{table}[htbp]
\centering\small
\caption{Upper bracket test on $\lambda$. Evaluated using the sparse MysteryPath variant with E3B, measuring success at 5M steps. The $\lambda{=}0.03$ column reports mean $\pm$ std over the $n{=}10$ reported seeds, while $\lambda{=}0.1$ tracks a single seed per cell. Increasing the coefficient by $3.3\times$ completely destroys performance.}
\label{tab:lambda-hi}
\begin{tabular}{@{}lcc@{}}
\toprule
Cell & $\lambda{=}0.03$ ($n{=}10$) & $\lambda{=}0.1$ ($n{=}1$) \\
\midrule
RetNet        & $0.223 \pm 0.074$ & $0.00$ \\
GatedDeltaNet & $0.227 \pm 0.058$ & $0.02$ \\
\bottomrule
\end{tabular}
\vspace{2pt}\par\footnotesize\raggedright
These bracket runs utilize a single seed and terminate at 5M steps, and are not intended to replace a full sweep arm. However, the performance collapse is a full order of magnitude larger than the seed variance at $\lambda{=}0.03$, proving the drop is not an artifact of limited run time. Viewed alongside Table~\ref{tab:lambda}, the reported coefficient is bracketed on both sides.
\end{table}

\section{Learning Curves and Training Diagnostics}
\label{app:curves}

The learning dynamics across all environments and architectures illustrate the three patterns directly, bypassing reliance on final tail means alone. Curves represent the deterministic evaluation success rate (20 episodes per evaluation, taken every 10 rollouts), with the shaded band representing a 95\% seed-bootstrap confidence interval on the mean.

\begin{figure*}[htbp]\centering
\includegraphics[width=\textwidth]{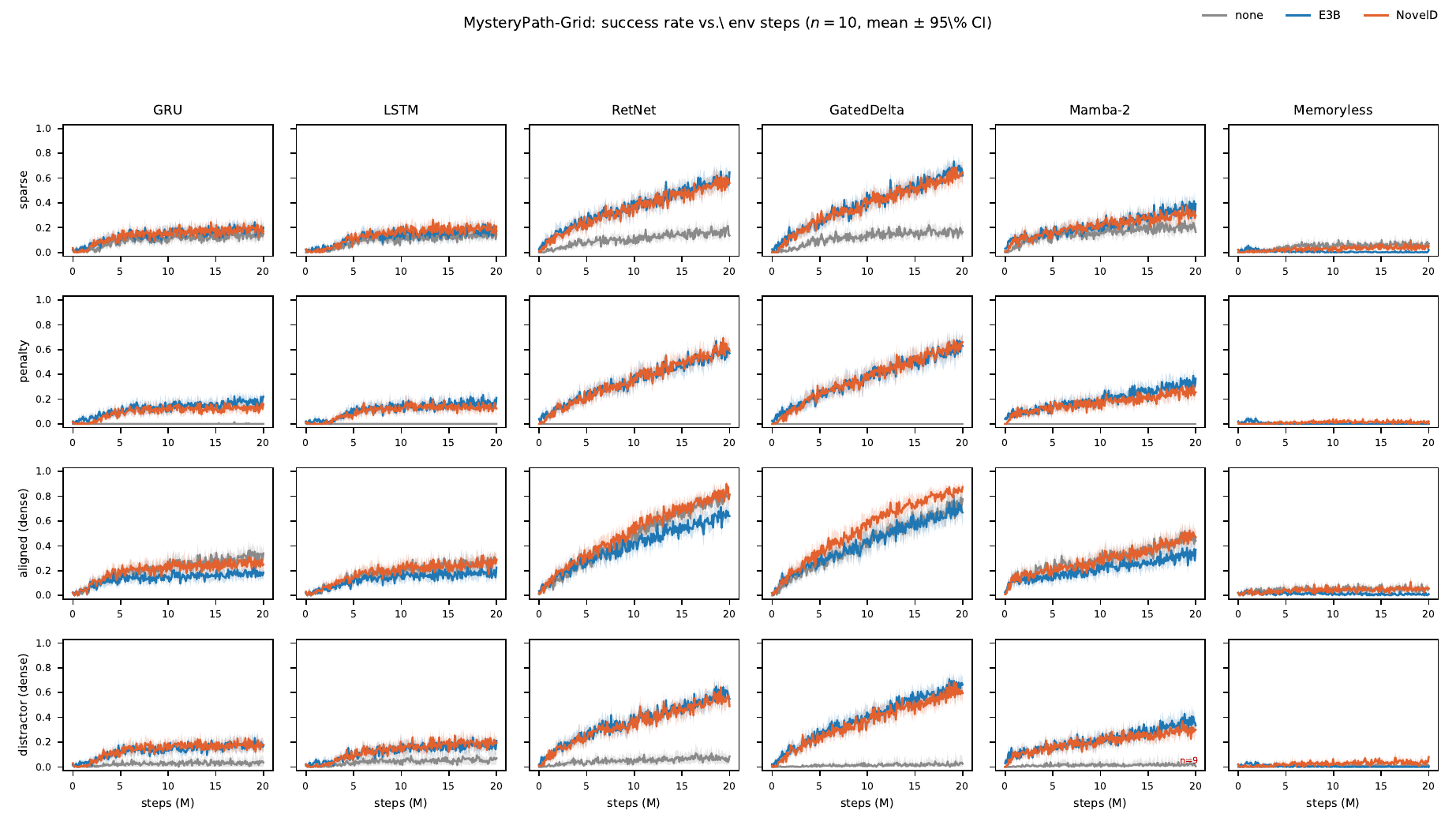}
\caption{MysteryPath-Grid across all four reward variants (rows) $\times$ six architectures (columns), $n{=}10$, showing the mean with a 95\% seed-bootstrap CI. Amplification is evident in the sparse row: the bonus separates RetNet, GatedDeltaNet, and Mamba-2 from a floored no-bonus baseline, while GRU and LSTM show minimal movement. The penalty row highlights the freeze: without a bonus, every architecture flatlines at zero, but bonuses successfully restore learning.}
\label{fig:curves-mpg}
\end{figure*}

\begin{figure*}[htbp]\centering
\includegraphics[width=\textwidth]{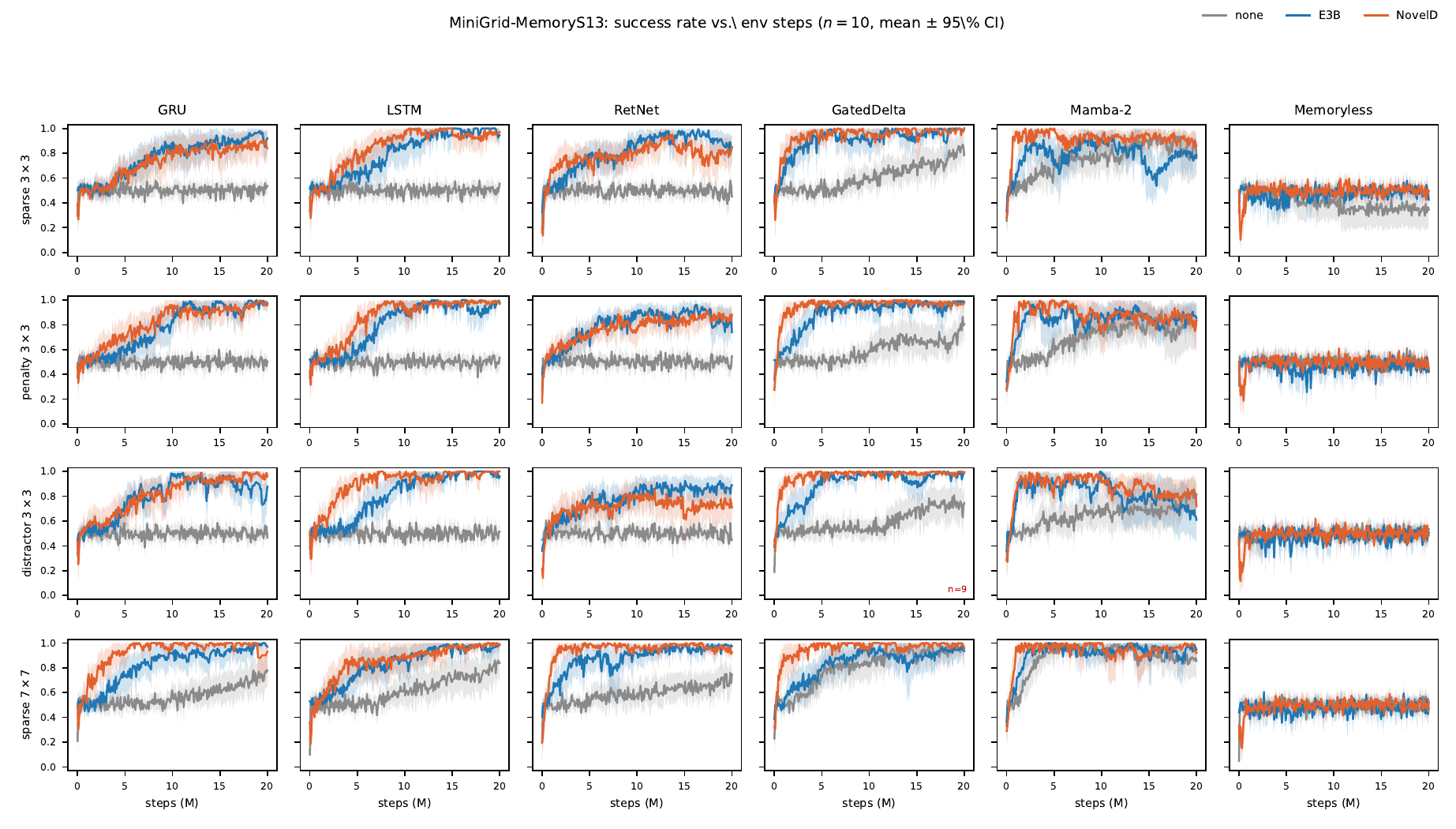}
\caption{MiniGrid-MemoryS13 across four reward variants (rows) $\times$ six architectures (columns), $n{=}10$, showing the mean with a 95\% seed-bootstrap CI. Equalization is visible in the sparse $3\times3$ row: GRU, LSTM, and RetNet are stuck at the cue-blind $0.50$ mark without a bonus, but reach the ceiling with one. GatedDeltaNet and Mamba-2 gain proportionally less. The wide confidence bands without a bonus reflect the bimodal ``seed lottery'' discussed in Section~\ref{app:faq}.}
\label{fig:curves-s13}
\end{figure*}

\begin{figure*}[htbp]\centering
\includegraphics[width=\textwidth]{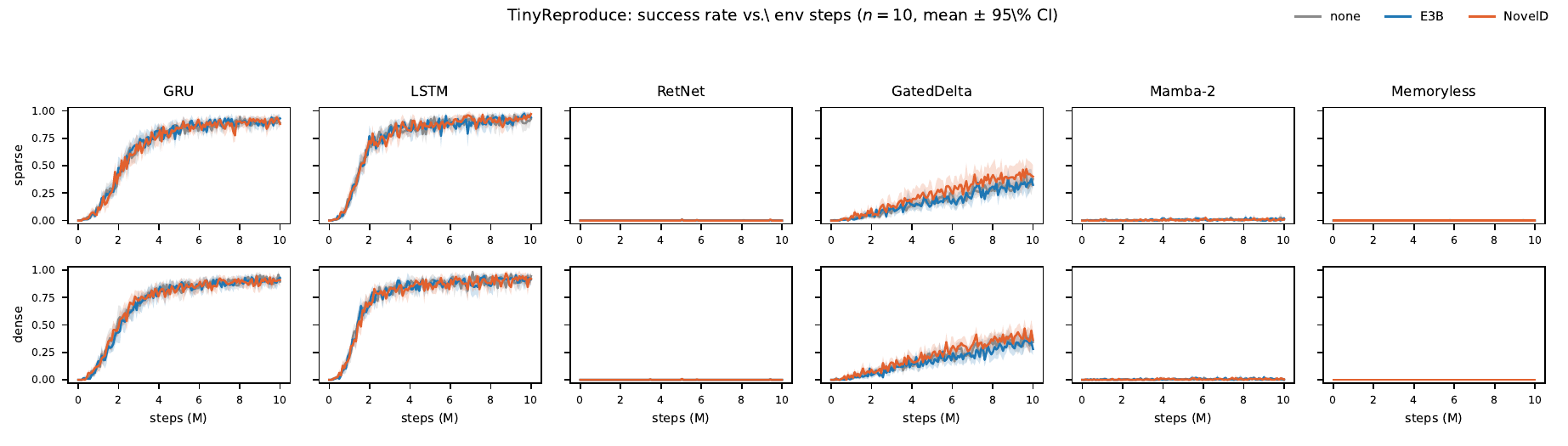}
\caption{TinyReproduce across both reward variants (rows) $\times$ six architectures (columns), $n{=}10$, showing the mean with a 95\% seed-bootstrap CI. The null pattern is consistent: all three signals overlap perfectly in every panel. The only observed differences are architectural (GRU and LSTM solve the task; RetNet and Mamba-2 do not), and adding a bonus does not alter the outcome.}
\label{fig:curves-tiny}
\end{figure*}

\begin{figure}[htbp]\centering
\includegraphics[width=\linewidth]{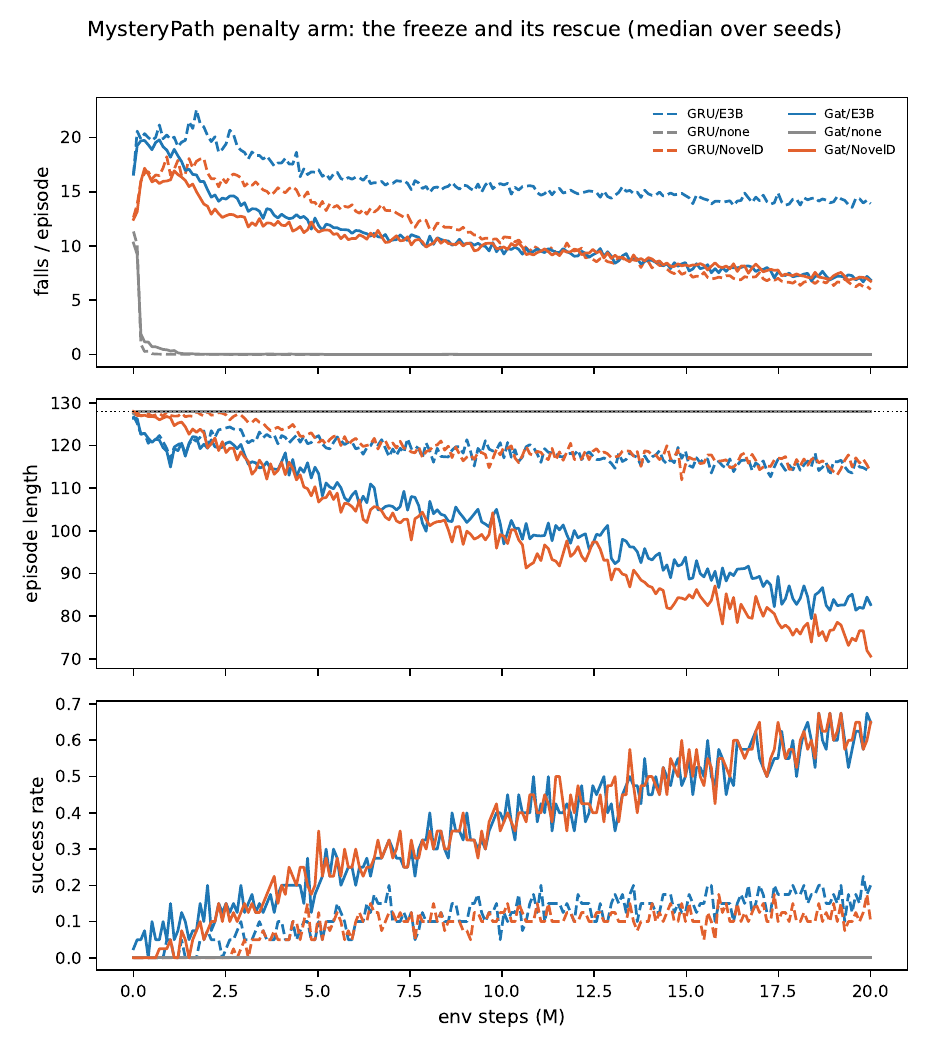}
\caption{The penalty-variant freeze on MysteryPath (median across seeds). Without a bonus, falls per episode drop to zero, episode length is pinned at $T_{\max}{=}128$, and the success rate remains at zero. The agent abandons the task to avoid penalties. Policy entropy does \emph{not} collapse, indicating a sampling failure rather than premature convergence. The agent mixes within the safe region but never observes the sparse $+1$ reward. Both bonuses successfully break this freeze.}
\label{fig:penalty-diag}
\end{figure}

\begin{table}[htbp]\centering\small
\caption{MysteryPath penalty arm: falls per episode and terminal success (mean $\pm$ std over seeds). Without a bonus, the agent drives falls to zero, runs out the clock, and never reaches the goal. The bonus arms trade falls for task success, proving the freeze is a policy that stops paying the exploration cost, not one that forgot how to learn.}
\label{tab:falls}
\begin{tabular}{@{}llcc@{}}\toprule
Arch & Bonus & Falls / episode & Success \\\midrule
GRU & none & $0.0\pm0.0$ & $0.00\pm0.00$ \\
GRU & E3B & $14.0\pm0.7$ & $0.22\pm0.08$ \\
GRU & NovelD & $6.1\pm0.6$ & $0.16\pm0.09$ \\
\midrule
GatedDeltaNet & none & $0.0\pm0.0$ & $0.00\pm0.00$ \\
GatedDeltaNet & E3B & $7.0\pm0.8$ & $0.63\pm0.17$ \\
GatedDeltaNet & NovelD & $6.7\pm0.6$ & $0.66\pm0.10$ \\
\bottomrule\end{tabular}
\end{table}

\paragraph{The freeze is a sampling failure, not a value-estimation failure.}
The timing of the first success rules out the possibility that the agent reaches the goal but fails to propagate its value backward. Across every no-bonus seed in the penalty arm, the goal is \emph{never} reached during the 20M steps. The $+1$ reward is never observed, leaving nothing to propagate. Upon introducing a bonus, the agent hits the goal within the first few thousand steps.

\begin{table}[htbp]\centering\small
\caption{The first environment step where any evaluation episode reached the goal (MysteryPath penalty arm, per seed). ``Never'' indicates the goal was not reached during the 20M-step run. The no-bonus policy simply never acquires the data required to begin learning.}
\label{tab:firstgoal}
\begin{tabular}{@{}llc@{}}\toprule
Arch & Bonus & First success (env steps, per seed) \\
\midrule
GRU & none   & never, never, never \\
GRU & E3B    & 2{,}774 \quad 409 \quad 409 \\
GRU & NovelD & 3{,}481 \quad 337 \quad 9{,}376 \\
\midrule
GatedDeltaNet & none   & never, never, never \\
GatedDeltaNet & E3B    & 1{,}984 \quad 445 \quad 445 \\
GatedDeltaNet & NovelD & 2{,}105 \quad 4{,}703 \quad 5{,}569 \\
\bottomrule\end{tabular}
\end{table}

\paragraph{The rescue is not arithmetic.}
The bonus does not simply overpower the penalty. While E3B delivers $2.6\times$ the penalty scale, NovelD only delivers $0.3\times$, meaning the penalty heavily outweighs the NovelD bonus (Table~\ref{tab:magnitude}). Nevertheless, NovelD successfully rescues the freeze (Table~\ref{tab:falls}), proving the rescue relies on more than the net positive sign of the per-step reward.

\begin{table}[htbp]\centering\small
\caption{The delivered per-step intrinsic reward $\lambda \cdot b$ compared to the per-step penalty $1/T_{\max}$, averaged over training. The logged bonus is the raw, pre-coefficient value; the delivered quantity is $\lambda$ times that ($\lambda = 0.03$). The final column shows the ratio of this bonus to the penalty magnitude.}
\label{tab:magnitude}
\begin{tabular}{@{}llccc@{}}\toprule
Environment / variant & Bonus & raw $b$ & $\lambda b$ & vs.\ penalty \\
\midrule
MPG / sparse   & E3B    & 0.689 & 0.0207 & $2.6\times$ \\
MPG / sparse   & NovelD & 0.067 & 0.0020 & $0.3\times$ \\
MPG / penalty  & E3B    & 0.678 & 0.0203 & $2.6\times$ \\
MPG / penalty  & NovelD & 0.075 & 0.0023 & $0.3\times$ \\
\midrule
S13 / sparse $3\times3$ & E3B    & 0.648 & 0.0194 & $2.5\times$ \\
S13 / sparse $3\times3$ & NovelD & 0.016 & 0.0005 & $0.1\times$ \\
S13 / penalty $3\times3$ & E3B    & 0.647 & 0.0194 & $2.5\times$ \\
S13 / penalty $3\times3$ & NovelD & 0.017 & 0.0005 & $0.1\times$ \\
\midrule
Tiny / sparse  & E3B    & 0.498 & 0.0149 & --- \\
Tiny / sparse  & NovelD & 0.000 & 0.0000 & --- \\
Tiny / dense   & E3B    & 0.472 & 0.0142 & --- \\
Tiny / dense   & NovelD & 0.000 & 0.0000 & --- \\
\bottomrule\end{tabular}
\vspace{2pt}\par\footnotesize\raggedright
In TinyReproduce, NovelD's delivered bonus is numerically zero. Its episodic first-visit gate never triggers because the play-phase observation is a single constant view. Here, the null result for NovelD is structurally guaranteed, serving as an independent confirmation of redundancy.
\end{table}

\begin{figure*}[htbp]\centering
\includegraphics[width=\textwidth]{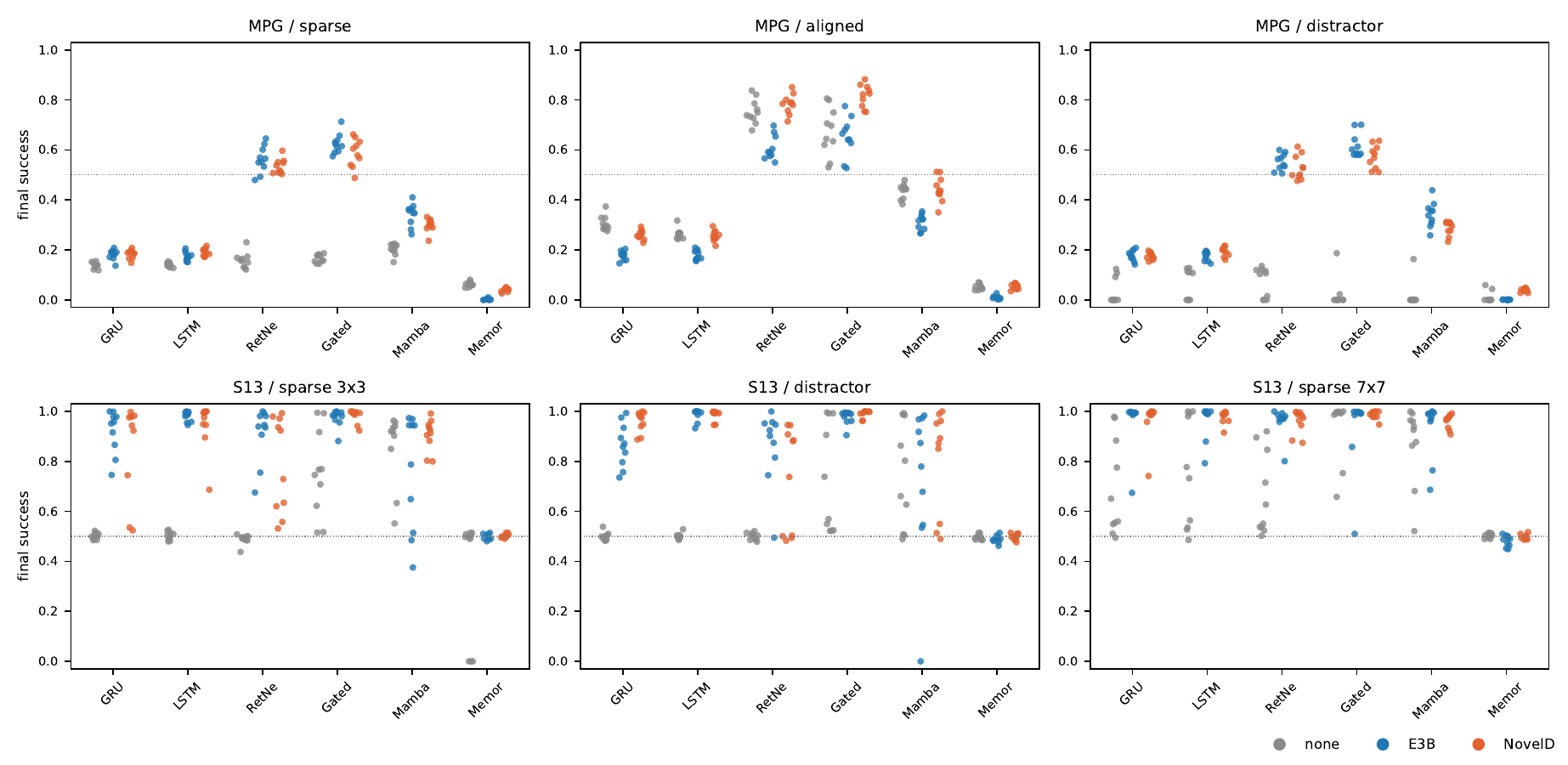}
\caption{Per-seed tail-mean success for every cell in the suite. The bimodal nature of MemoryS13 necessitates the scale-free summary in Section~\ref{app:stats}. Seeds either land at the cue-blind $0.50$ mark or hit the ceiling; a mean midway between the two misrepresents actual runs.}
\label{fig:seed-strip}
\end{figure*}

\section{Statistical Analysis of the Architecture--Bonus Interaction}
\label{app:stats}

This section quantifies the Architecture$\times$Bonus interaction claims (amplification, equalization, and null). The primary metric is the per-seed tail-mean success rate over the final 20\% of training. We exclude the memoryless control to avoid artificially inflating the between-architecture spread statistic.

\paragraph{Factorial analysis.}
Table~\ref{tab:anova} details a two-way ANOVA for each environment. Due to highly heterogeneous cell variances and bounded, occasionally bimodal outcomes (Fig.~\ref{fig:seed-strip}), we pair each interaction $F$ with a permutation $p$-value. Permuting bonus labels within each architecture ($2\times10^4$ resamples) preserves the architecture main effect while nullifying the bonus effect. The permutation values offer a robust, non-parametric confirmation.

\begin{table*}[htbp]
\centering\small
\caption{Two-way ANOVA (Architecture $\times$ Bonus) on per-seed tail means across the five memory architectures ($n{=}10$ per cell, $\mathrm{df} = 4, 1, 4, 90$). Both gridworlds demonstrate a massive, statistically significant interaction under both bonuses. TinyReproduce shows no interaction, perfectly confirming the null effect.}
\label{tab:anova}
\begin{tabular}{@{}llccc@{}}
\toprule
& & \multicolumn{3}{c}{$F$ \quad ($p_{\mathrm{param}}$) \quad [$p_{\mathrm{perm}}$]} \\
\cmidrule(lr){3-5}
Environment & Effect & E3B & & NovelD \\
\midrule
MysteryPath & Architecture         & 271.4 \;\;($2.6{\times}10^{-49}$) & & 274.6 \;\;($1.6{\times}10^{-49}$) \\
(sparse)    & Bonus                & 1300.1 \;($2.8{\times}10^{-55}$) & & 1356.7 \;($4.6{\times}10^{-56}$) \\
            & Arch.\ $\times$ Bonus & \textbf{226.4} \;[$0.0002$]      & & \textbf{234.7} \;[$0.0002$] \\
\midrule
MemoryS13   & Architecture         & 7.6 \;\;($2.4{\times}10^{-5}$)   & & 17.9 \;\;($7.7{\times}10^{-11}$) \\
($3\times3$)& Bonus                & 162.7 \;($7.0{\times}10^{-22}$)  & & 137.3 \;($8.4{\times}10^{-20}$) \\
            & Arch.\ $\times$ Bonus & \textbf{22.1} \;[$0.00005$]      & & \textbf{8.0} \;[$0.021$] \\
\midrule
Tiny        & Architecture         & 2308.1 \;($9.4{\times}10^{-90}$) & & 1414.2 \;($2.6{\times}10^{-80}$) \\
(sparse)    & Bonus                & 0.01 \;\;($0.93$)                & & 2.6 \;\;($0.11$) \\
            & Arch.\ $\times$ Bonus & 0.35 \;[$0.61$]                  & & 1.7 \;[$0.23$] \\
\midrule
Tiny        & Architecture         & 1984.0 \;($7.9{\times}10^{-87}$) & & 2751.1 \;($3.7{\times}10^{-93}$) \\
(dense)     & Bonus                & 0.42 \;\;($0.52$)                & & 0.6 \;\;($0.43$) \\
            & Arch.\ $\times$ Bonus & 0.73 \;[$0.58$]                  & & 1.2 \;[$0.29$] \\
\bottomrule
\end{tabular}
\vspace{2pt}\par\footnotesize\raggedright
Permutation $p$-values are reported exclusively for the interaction term. The TinyReproduce architecture main effect is enormous, while both bonus terms are indistinguishable from zero: the cells perform differently, but the bonus does not affect them.
\end{table*}

\paragraph{Effect sizes.}
With $n{=}10$, the gridworld interactions are overwhelmingly significant; thus, we focus on effect sizes using seed-level bootstrap intervals ($10^4$ resamples). These intervals properly capture the confirmatory null by measuring the absence of an effect.

\paragraph{Amplification and equalization as dispersion shifts.}
Let $V_{\mathrm{cond}}$ represent the variance across the five architecture means. Amplification predicts $V_{\mathrm{bonus}}/V_{\mathrm{none}} > 1$, equalization predicts ${<}1$, and the null predicts ${=}1$.

\begin{table*}[htbp]
\centering\small
\caption{Between-architecture variance of cell means, no bonus vs.\ bonus, with seed-level bootstrap 95\% CIs on the ratio. The bonus increases dispersion on MysteryPath roughly fifty-fold, shrinks it on MemoryS13 by four-fold, and leaves it untouched on TinyReproduce.}
\label{tab:dispersion}
\begin{tabular}{@{}lcccccc@{}}
\toprule
& & \multicolumn{2}{c}{E3B} & \multicolumn{2}{c}{NovelD} \\
\cmidrule(lr){3-4}\cmidrule(lr){5-6}
Environment & $V_{\mathrm{none}}$ & $V_{\mathrm{e3b}}$ & Ratio [95\% CI]
            & $V_{\mathrm{nvld}}$ & Ratio [95\% CI] \\
\midrule
MysteryPath (sparse)    & 0.0007 & 0.0440 & 63.2 [42.1, 101.6]
                                 & 0.0363 & 52.1 [34.6, 83.8] \\
MemoryS13 ($3\times3$)  & 0.0300 & 0.0080 & 0.27 [0.06, 0.84]
                                 & 0.0057 & 0.19 [0.06, 0.59] \\
TinyReproduce (sparse)  & 0.2046 & 0.2099 & 1.03 [0.99, 1.06]
                                 & 0.2059 & 1.01 [0.98, 1.04] \\
TinyReproduce (dense)   & 0.2070 & 0.2086 & 1.01 [0.98, 1.04]
                                 & 0.2037 & 0.98 [0.95, 1.02] \\
\bottomrule
\end{tabular}
\vspace{2pt}\par\footnotesize\raggedright
Variance is calculated across the five memory architectures' cell means ($\mathrm{ddof}{=}1$). The TinyReproduce intervals rule out a dispersion shift of more than $\pm6\%$, confirming a precisely measured absence of an effect.
\end{table*}

\paragraph{Per-cell effect sizes.}
Table~\ref{tab:poi} outlines the probability of improvement $\Pr(Y_{\mathrm{bonus}} > Y_{\mathrm{none}})$ \citep{agarwal2021deep} across all seed pairings.

\begin{table*}[htbp]
\centering\small
\caption{Probability of improvement under each bonus vs.\ no bonus, per cell (E3B / NovelD). Values near $1$ mean the bonus helps on essentially every seed pairing; values near $0.5$ indicate no effect.}
\label{tab:poi}
\begin{tabular}{@{}lcccc@{}}
\toprule
Cell & MysteryPath & MemoryS13 ($3\times3$) & Tiny (sparse) & Tiny (dense) \\
\midrule
GRU           & 0.95 / 0.97 & 1.00 / 1.00 & 0.47 / 0.56 & 0.37 / 0.35 \\
LSTM          & 0.98 / 1.00 & 1.00 / 1.00 & 0.73 / 0.68 & 0.70 / 0.56 \\
RetNet        & 1.00 / 1.00 & 1.00 / 1.00 & 0.40 / 0.45 & 0.55 / 0.55 \\
GatedDeltaNet & 1.00 / 1.00 & 0.85 / 0.90 & 0.38 / 0.61 & 0.41 / 0.54 \\
Mamba-2       & 1.00 / 1.00 & 0.48 / 0.55 & 0.45 / 0.44 & 0.64 / 0.54 \\
\bottomrule
\end{tabular}
\end{table*}

On MemoryS13, Mamba-2 ($0.48$ under E3B) and GatedDeltaNet ($0.85$) show minimal improvement because they already clear the cue-blind optimum without a bonus. Equalization closes the spread from below: the bonus lifts the stuck cells and does practically nothing for those already succeeding. On TinyReproduce, the per-cell values scatter around $0.5$, indicating a genuine null result.

\paragraph{A scale-free reading of MemoryS13.}
Because the seed distribution on MemoryS13 is bimodal, a binary summary provides a more faithful picture. We count a seed as successful if its tail mean exceeds $0.6$.

\begin{table}[htbp]
\centering\small
\caption{MemoryS13 ($3\times3$ sparse): number of seeds out of ten successfully learning the task (tail mean $> 0.6$). This summary is immune to monotonic rescaling of the success rate.}
\label{tab:s13binary}
\begin{tabular}{@{}lccc@{}}
\toprule
Cell & none & E3B & NovelD \\
\midrule
GRU           & 0/10 & 10/10 & 8/10 \\
LSTM          & 0/10 & 10/10 & 10/10 \\
RetNet        & 0/10 & 10/10 & 8/10 \\
GatedDeltaNet & 8/10 & 10/10 & 10/10 \\
Mamba-2       & 9/10 & 7/10  & 10/10 \\
\bottomrule
\end{tabular}
\end{table}

The three failing cells learn on every seed (E3B) or most seeds (NovelD) once a bonus is introduced. The two successful cells mostly maintain their performance. This captures the equalization pattern without relying on the numerical scale.

\paragraph{A caution on the raw scale.}
MemoryS13 compression is sensitive to the choice of scale (e.g., a logit transform yields an E3B dispersion ratio near $1.0$). Including the memoryless control eliminates compression because the control stays flat while memory cells hit the ceiling. Thus, the equalization claim applies to the raw success-rate scale among memory architectures, supported by Table~\ref{tab:s13binary}. Amplification on MysteryPath holds regardless of scale.

Table~\ref{tab:sep} makes this dependence explicit on the standard-deviation scale. MysteryPath widens under both bonuses across sparse and distractor variants. MemoryS13's compression is significant among the five memory cells ($-0.075$ and $-0.090$) but vanishes when the memoryless control is added ($-0.003$).

\begin{table*}[htbp]\centering\small
\caption{Architecture separation index ($\mathrm{SEP}$). $\Delta$SEP compares the bonus arm against \emph{none}, with a 95\% CI. A positive value means the bonus widens the performance gap; a negative value means it compresses it.}
\label{tab:sep}
\begin{tabular}{@{}lcccc@{}}\toprule
Env / reward & SEP(none) & SEP(e3b) & SEP(nvld) & $\Delta$SEP(e3b) 95\% CI \\
\midrule
\multicolumn{5}{@{}l}{\emph{5 memory cells}} \\
MPG / sparse & 0.024 & 0.188 & 0.170 & $+0.164$ [$+0.153$, $+0.175$] \\
MPG / aligned & 0.196 & 0.206 & 0.245 & $+0.010$ [$-0.009$, $+0.030$] \\
MPG / distractor & 0.016 & 0.184 & 0.167 & $+0.169$ [$+0.146$, $+0.178$] \\
S13 / sparse $3\times3$ & 0.155 & 0.080 & 0.068 & $-0.075$ [$-0.128$, $-0.013$] \\
S13 / distractor & 0.116 & 0.095 & 0.106 & $-0.021$ [$-0.085$, $+0.059$] \\
S13 / sparse $7\times7$ & 0.104 & 0.014 & 0.010 & $-0.090$ [$-0.127$, $-0.035$] \\
\midrule
\multicolumn{5}{@{}l}{\emph{6 cells incl.\ Memoryless}} \\
MPG / sparse & 0.043 & 0.221 & 0.196 & $+0.178$ [$+0.170$, $+0.188$] \\
MPG / aligned & 0.241 & 0.235 & 0.281 & $-0.007$ [$-0.024$, $+0.011$] \\
MPG / distractor & 0.020 & 0.218 & 0.192 & $+0.198$ [$+0.180$, $+0.207$] \\
S13 / sparse $3\times3$ & 0.173 & 0.170 & 0.159 & $-0.003$ [$-0.049$, $+0.041$] \\
S13 / distractor & 0.112 & 0.170 & 0.177 & $+0.058$ [$+0.016$, $+0.108$] \\
S13 / sparse $7\times7$ & 0.142 & 0.175 & 0.176 & $+0.033$ [$+0.003$, $+0.055$] \\
\bottomrule\end{tabular}
\vspace{2pt}\par\footnotesize\raggedright
On the MPG \emph{aligned} row, the bonus stops widening the spread ($+0.010$, interval crosses zero). Amplification turns off exactly where the reward is no longer structurally sparse.
\end{table*}

\paragraph{The aligned variant acts as an internal control for amplification.}
The core amplification claim states a bonus widens the gap between memory architectures \emph{only when the reward is structurally sparse}. The aligned variant tests this while holding the environment constant. Changing only the reward from sparse to dense-aligned causes amplification to vanish entirely ($\Delta$SEP plummets from $+0.164$ to $+0.010$). Conversely, the distractor variant (dense but structurally sparse) behaves like the sparse variant ($+0.169$). Reward structure, not the environment itself, dictates the pattern.

\paragraph{Reward structure as a core factor, holding the environment constant.}
We run a comprehensive three-way factorial—Architecture $\times$ Bonus $\times$ Reward structure—within MysteryPath to isolate reward structure from environmental confounds.

\begin{table}[htbp]
\centering\small
\caption{Three-way ANOVA within MysteryPath: 5 architectures $\times$ 2 bonuses $\times$ 4 reward variants $\times$ $10$ seeds. The significant three-way term confirms the architecture$\times$bonus interaction is heavily influenced by reward structure.}
\label{tab:threeway}
\begin{tabular}{@{}lrrrr@{}}
\toprule
Effect & SS & df & $F$ & $\omega^2$ \\
\midrule
Architecture (A)                  & 5.732 & 4  & 842.9  & 0.274 \\
Bonus (B)                         & 3.975 & 1  & 2338.3 & 0.190 \\
Reward structure (C)              & 3.962 & 3  & 776.8  & 0.189 \\
A $\times$ B                      & 1.920 & 4  & 282.4  & 0.091 \\
A $\times$ C                      & 0.893 & 12 & 43.8   & 0.042 \\
B $\times$ C                      & 3.274 & 3  & 641.9  & 0.156 \\
\textbf{A $\times$ B $\times$ C}  & 0.553 & 12 & \textbf{27.1} & \textbf{0.025} \\
\midrule
residual                          & 0.612 & 360 & & \\
\bottomrule
\end{tabular}
\vspace{2pt}\par\footnotesize\raggedright
Reward structure drives as much variance as the bonus ($0.189$ vs.\ $0.190$), and the bonus$\times$reward term ($0.156$) heavily outweighs the architecture$\times$bonus term ($0.091$). The reward given matters as much as the bonus itself.
\end{table}

\begin{table*}[htbp]
\centering\small
\caption{Breaking down the architecture$\times$bonus interaction within MysteryPath. The interaction is massive in the three structurally sparse variants, but plummets in the dense variant.}
\label{tab:byvariant}
\begin{tabular}{@{}lccrl@{}}
\toprule
Reward variant & Classification & $F(\mathrm{A}\!\times\!\mathrm{B})$
 & partial $\omega^2$ & Dispersion ratio \\
\midrule
sparse     & struct.\ sparse & 226.4 & 0.900 & $63.2$ \\
penalty    & struct.\ sparse & 249.9 & 0.909 & undefined$^\dagger$ \\
distractor & struct.\ sparse & 76.4  & 0.751 & $136.5$ \\
aligned    & struct.\ dense  & 5.2   & 0.144 & $1.1$ \\
\bottomrule
\end{tabular}
\vspace{2pt}\par\footnotesize\raggedright
$^\dagger$In the penalty variant, no-bonus dispersion is functionally zero due to the freeze, making the ratio undefined. The aligned interaction's partial $\omega^2$ of $0.144$ is substantially lower than the $0.75$--$0.91$ seen elsewhere.
\end{table*}

\paragraph{On the aligned variant, E3B is actively harmful.}
On the aligned variant, E3B drops average success from $0.486$ to $0.386$ across all five memory architectures. When there is no misranking to correct (potentially dense reward), exploration pressure pulls the policy away from a well-shaped objective. NovelD, pushing only $0.3\times$ the per-step penalty scale, increases average success slightly to $0.512$. E3B's larger magnitude ($2.6\times$) aggressively disrupts the well-shaped dense objective.

\paragraph{Sample efficiency: the bonus does more than raise the ceiling.}
Table~\ref{tab:samples} tracks the environment steps required to first \emph{sustain} an absolute success threshold (mean of five consecutive evaluations).

\begin{table*}[htbp]
\centering\small
\caption{Environment steps (millions) to first \emph{sustain} the threshold (median over successful seeds). $(k/n)$ indicates successful seeds. Thresholds are set well above the no-bonus plateau: $0.30$ (MysteryPath), $0.75$ (MemoryS13), and $0.50$ (TinyReproduce).}
\label{tab:samples}
\begin{tabular}{@{}llccc@{}}
\toprule
Slice & Arch & none & E3B & NovelD \\
\midrule
MPG           & GRU           & never (0/10)  & 16.8 (1/10)  & 15.6 (2/10) \\
sparse        & LSTM          & never (0/10)  & never (0/10) & 14.2 (3/10) \\
$\theta{=}0.30$ & RetNet      & 19.4 (1/10)   & \textbf{4.7} (10/10) & 5.7 (10/10) \\
              & GatedDeltaNet & never (0/10)  & \textbf{5.8} (10/10) & 4.4 (10/10) \\
              & Mamba-2       & 17.6 (8/10)   & 11.6 (10/10) & 10.1 (10/10) \\
\midrule
S13           & GRU           & never (0/10)  & 6.7 (10/10)  & 5.1 (9/10) \\
sparse        & LSTM          & never (0/10)  & 7.9 (10/10)  & 3.2 (10/10) \\
$\theta{=}0.75$ & RetNet      & never (0/10)  & 4.3 (10/10)  & 1.6 (10/10) \\
              & GatedDeltaNet & 12.6 (8/10)   & \textbf{1.5} (10/10) & \textbf{0.7} (10/10) \\
              & Mamba-2       & 5.3 (9/10)    & \textbf{1.3} (10/10) & \textbf{0.4} (10/10) \\
\midrule
Tiny          & GRU           & 2.2 (10/10)   & 2.1 (10/10)  & 2.0 (10/10) \\
sparse        & LSTM          & 1.5 (10/10)   & 1.4 (10/10)  & 1.5 (10/10) \\
$\theta{=}0.50$ & RetNet      & never & never & never \\
              & GatedDeltaNet & 9.5 (2/10)    & 6.9 (1/10)   & 6.9 (3/10) \\
              & Mamba-2       & never & never & never \\
\midrule
Tiny          & GRU           & 1.8 (10/10)   & 1.9 (10/10)  & 1.9 (10/10) \\
dense         & LSTM          & 1.2 (10/10)   & 1.2 (10/10)  & 1.4 (10/10) \\
$\theta{=}0.50$ & RetNet      & never & never & never \\
              & GatedDeltaNet & 8.5 (1/10)    & 9.2 (3/10)   & 7.7 (2/10) \\
              & Mamba-2       & never & never & never \\
\bottomrule
\end{tabular}
\end{table*}

On MysteryPath, the bonus changes the outcome from \emph{never} to \emph{reliably} sustaining success. On MemoryS13, GatedDeltaNet accelerates from 12.6M steps to 0.7M (NovelD), an $18\times$ speed-up. Mamba-2's $0.48$ probability of improvement under E3B masks the fact that it hits final performance a full order of magnitude faster. Equalization eliminates a search problem all cells struggle with. On TinyReproduce, the null pattern holds on the time axis as well.

\paragraph{The memoryless control as a falsification test.}
A policy lacking recurrent state cannot retain exploration findings. A coverage-based bonus should not help it.

\begin{table*}[htbp]
\centering\small
\caption{Memoryless control (mean $\pm$ std over $n{=}10$ seeds) and probability of improvement (PoI). On sparse MysteryPath, the bonus actively degrades the memoryless policy.}
\label{tab:memoryless}
\begin{tabular}{@{}lccccc@{}}
\toprule
Slice & none & E3B & PoI & NovelD & PoI \\
\midrule
MPG / sparse      & $0.062 \pm 0.010$ & $0.002 \pm 0.003$ & \textbf{0.00}
                  & $0.041 \pm 0.008$ & \textbf{0.02} \\
MPG / penalty     & $0.000 \pm 0.000$ & $0.001 \pm 0.001$ & 0.75
                  & $0.014 \pm 0.004$ & 1.00 \\
MPG / distractor  & $0.010 \pm 0.022$ & $0.001 \pm 0.001$ & 0.64
                  & $0.038 \pm 0.007$ & 0.82 \\
S13 / sparse $3\times3$ & $0.352 \pm 0.243$ & $0.498 \pm 0.011$ & 0.54
                  & $0.503 \pm 0.008$ & 0.65 \\
Tiny / sparse     & $0.000 \pm 0.000$ & $0.000 \pm 0.001$ & 0.55
                  & $0.000 \pm 0.000$ & 0.50 \\
Tiny / dense      & $0.000 \pm 0.000$ & $0.000 \pm 0.000$ & 0.50
                  & $0.000 \pm 0.000$ & 0.50 \\
\bottomrule
\end{tabular}
\end{table*}

On MysteryPath sparse, E3B crushes the memoryless control from $0.062$ to $0.002$ (PoI = $0.00$). NovelD yields $0.02$. On MemoryS13, the bonus boosts the memoryless control to the cue-blind chance rate of $0.5$, preventing collapse below chance but failing to help acquire the cue. MysteryPath serves as the definitive falsification test.

\paragraph{Robust aggregation.}
Table~\ref{tab:iqm} displays the interquartile mean (IQM) using a stratified bootstrap \citep{agarwal2021deep}, resistant to bimodal outlier seeds.

\begin{table*}[htbp]
\centering\small
\caption{Interquartile mean (IQM) across all five memory architectures ($50$ runs per entry) with 95\% stratified-bootstrap CIs. All three patterns remain solid under this robust aggregate.}
\label{tab:iqm}
\begin{tabular}{@{}lccc@{}}
\toprule
Slice & none & E3B & NovelD \\
\midrule
MPG / sparse            & 0.154 [0.149, 0.160] & 0.358 [0.345, 0.371] & 0.336 [0.328, 0.344] \\
MPG / aligned           & 0.459 [0.441, 0.478] & 0.354 [0.344, 0.367] & 0.489 [0.475, 0.502] \\
S13 / sparse $3\times3$ & 0.546 [0.520, 0.581] & 0.963 [0.941, 0.974] & 0.953 [0.919, 0.969] \\
Tiny / sparse           & 0.402 [0.382, 0.419] & 0.397 [0.376, 0.421] & 0.432 [0.403, 0.468] \\
Tiny / dense            & 0.411 [0.390, 0.428] & 0.399 [0.375, 0.424] & 0.421 [0.406, 0.440] \\
\bottomrule
\end{tabular}
\end{table*}

Amplification on MysteryPath and equalization on MemoryS13 hold up perfectly, with non-overlapping confidence intervals. TinyReproduce intervals overlap significantly, confirming the null.

\paragraph{Raw per-seed data.}
Table~\ref{tab:seedvectors} presents complete, sorted per-seed tail means. The bimodal split of MemoryS13 is evident: GRU runs without a bonus range from $.48$ to $.52$, while GRU runs with E3B stretch from $.75$ to $1.00$.

\begin{table*}[htbp]\centering\small
\caption{Sorted per-seed tail-mean success ($n{=}10$).}
\label{tab:seedvectors}
\begin{tabular}{@{}lll@{}}\toprule
Arch & Bonus & Per-seed tail-mean success (sorted) \\
\midrule
\multicolumn{3}{@{}l}{\emph{MPG / sparse}} \\
GRU & none & \texttt{.12, .12, .13, .13, .14, .14, .14, .15, .15, .16} \\
GRU & e3b & \texttt{.14, .17, .17, .18, .19, .19, .19, .19, .20, .21} \\
GRU & nvld & \texttt{.15, .17, .17, .19, .19, .19, .19, .19, .20, .21} \\
LSTM & none & \texttt{.13, .13, .14, .14, .14, .14, .14, .15, .15, .15} \\
LSTM & e3b & \texttt{.15, .15, .16, .17, .17, .17, .18, .18, .19, .21} \\
LSTM & nvld & \texttt{.17, .17, .18, .18, .19, .20, .20, .20, .21, .22} \\
RetNet & none & \texttt{.12, .13, .13, .15, .16, .16, .16, .17, .17, .23} \\
RetNet & e3b & \texttt{.48, .49, .53, .55, .55, .57, .57, .60, .62, .65} \\
RetNet & nvld & \texttt{.50, .51, .51, .51, .52, .54, .55, .55, .56, .60} \\
GatedDeltaNet & none & \texttt{.14, .14, .15, .16, .16, .17, .18, .18, .18, .19} \\
GatedDeltaNet & e3b & \texttt{.57, .59, .59, .61, .62, .62, .63, .64, .66, .71} \\
GatedDeltaNet & nvld & \texttt{.49, .53, .54, .57, .58, .61, .62, .63, .65, .66} \\
Mamba-2 & none & \texttt{.15, .18, .20, .20, .20, .21, .22, .22, .22, .23} \\
Mamba-2 & e3b & \texttt{.26, .28, .31, .35, .35, .36, .36, .36, .38, .41} \\
Mamba-2 & nvld & \texttt{.24, .29, .29, .30, .30, .31, .31, .32, .32, .33} \\
Memoryless & none & \texttt{.05, .05, .06, .06, .06, .06, .06, .07, .07, .08} \\
Memoryless & e3b & \texttt{.00, .00, .00, .00, .00, .00, .00, .00, .00, .01} \\
Memoryless & nvld & \texttt{.03, .03, .03, .04, .04, .04, .04, .04, .05, .05} \\
\midrule
\multicolumn{3}{@{}l}{\emph{S13 / sparse $3\times3$}} \\
GRU & none & \texttt{.48, .49, .49, .50, .50, .50, .51, .51, .51, .52} \\
GRU & e3b & \texttt{.75, .81, .87, .92, .95, .96, .98, .98, 1.00, 1.00} \\
GRU & nvld & \texttt{.52, .54, .74, .92, .94, .98, .98, .98, .98, 1.00} \\
LSTM & none & \texttt{.48, .48, .49, .49, .50, .50, .51, .51, .52, .53} \\
LSTM & e3b & \texttt{.95, .96, .96, .98, .98, .99, 1.00, 1.00, 1.00, 1.00} \\
LSTM & nvld & \texttt{.69, .90, .95, .95, .98, .99, .99, 1.00, 1.00, 1.00} \\
RetNet & none & \texttt{.44, .48, .49, .49, .49, .49, .50, .50, .50, .51} \\
RetNet & e3b & \texttt{.68, .76, .91, .94, .94, .95, .98, .98, .99, 1.00} \\
RetNet & nvld & \texttt{.53, .56, .62, .63, .73, .92, .94, .97, .98, .99} \\
GatedDeltaNet & none & \texttt{.52, .52, .62, .71, .75, .77, .77, .92, .99, .99} \\
GatedDeltaNet & e3b & \texttt{.88, .96, .97, .98, .98, .99, .99, 1.00, 1.00, 1.00} \\
GatedDeltaNet & nvld & \texttt{.92, .94, .99, .99, .99, .99, .99, 1.00, 1.00, 1.00} \\
Mamba-2 & none & \texttt{.55, .63, .85, .90, .92, .92, .93, .94, .96, .96} \\
Mamba-2 & e3b & \texttt{.38, .48, .51, .65, .79, .94, .94, .95, .97, .97} \\
Mamba-2 & nvld & \texttt{.80, .80, .88, .91, .91, .93, .94, .95, .96, .99} \\
Memoryless & none & \texttt{.00, .00, .00, .49, .50, .50, .50, .51, .51, .51} \\
Memoryless & e3b & \texttt{.48, .49, .49, .49, .49, .50, .50, .51, .51, .51} \\
Memoryless & nvld & \texttt{.49, .50, .50, .50, .50, .50, .51, .51, .51, .51} \\
\bottomrule\end{tabular}
\end{table*}

\section{Does the Bonus Change What Memory Encodes?}
\label{app:probe}

A logical mechanistic assumption is that the episodic bonus works by upgrading the recurrent state: pushing exploration provides a richer training signal, making the task-relevant latent more decodable, leading to better behavior. This section tests that hypothesis. \emph{It fails.} This negative mechanistic result contextualizes the positive behavioral results.

\paragraph{Pre-registration.}
We established six predictions (P1--P6) prior to fitting probes. Table~\ref{tab:probe-preregistered} grades the outcomes. 

\paragraph{Protocol.}
For each cell, we replay a fixed set of 200 evaluation episodes on the final checkpoint, recording the recurrent state $h_t$. We fit a logistic probe on $h_t$ to predict a task-relevant latent, tuning $\ell_2$ strength on a validation split. Data is split by \emph{episode} (70/15/15) to prevent leakage. For MysteryPath, the latent is ``is this adjacent, tested tile on-path?''. For MemoryS13, it is cue identity. Scope: 3 architectures $\times$ \{none, E3B\} $\times$ 3 (env, reward) slices, testing 2 seeds per cell.

\paragraph{Three controls, one of which fails informatively.}
We ran a shuffled-label null, an observation-only probe $\psi(o_t)$, and a randomly initialized cell. On MysteryPath, $\psi(o_t)$ scores $0.58$ because the environment visually flags tested tiles. The memory-specific value is $h_t$ relative to this baseline. On MemoryS13, an untrained cell decodes cue identity at $0.94$--$1.00$ due to classic reservoir behavior, rendering P4 untestable.

\begin{table}[htbp]\centering\small
\caption{Control check. The shuffled-label null passes. The other two controls fail informatively on one environment each.}
\label{tab:probe-controls}
\begin{tabular}{@{}lccl@{}}\toprule
Control & MysteryPath & MemoryS13 & Verdict \\
\midrule
shuffled label      & 0.48--0.51 & ${\approx}0.50$ & pass \\
obs-only $\psi(o_t)$ & \textbf{0.58} & 0.53 & MPG fails \\
random-init cell    & 0.55--0.60 & \textbf{0.94--1.00} & S13 fails \\
\bottomrule\end{tabular}
\vspace{2pt}\par\footnotesize\raggedright
Of 36 shuffled-label fits, only three stray from $0.5$ by more than $0.08$; all are \emph{below} chance, pointing to small-sample noise rather than leakage.
\end{table}

\paragraph{Result: the bonus does not improve encoding.}

\begin{table}[htbp]\centering\small
\caption{Impact of E3B on decodability ($\Delta\mathrm{AUC}$) with 95\% CIs. No cell shows the expected increase; three show a significant \emph{decrease}.}
\label{tab:probe-delta}
\begin{tabular}{@{}llcc@{}}\toprule
Variant & Arch & $\Delta$AUC & 95\% CI \\
\midrule
MPG / sparse  & GRU           & $-0.031$ & $[-0.034, -0.028]$ \\
MPG / sparse  & LSTM          & $-0.051$ & $[-0.062, -0.040]$ \\
MPG / sparse  & GatedDeltaNet & $+0.016$ & $[-0.001, +0.033]$ \\
\midrule
MPG / aligned & GRU           & $+0.003$ & $[-0.004, +0.009]$ \\
MPG / aligned & LSTM          & $+0.001$ & $[-0.009, +0.011]$ \\
MPG / aligned & GatedDeltaNet & $-0.020$ & $[-0.027, -0.014]$ \\
\midrule
S13 / sparse  & GRU           & $+0.048$ & $[-0.031, +0.126]$ \\
S13 / sparse  & LSTM          & $+0.086$ & $[-0.001, +0.172]$ \\
S13 / sparse  & GatedDeltaNet & $+0.000$ & $[+0.000, +0.001]$ \\
\bottomrule\end{tabular}
\end{table}

\paragraph{The critical disconnect.}
On MysteryPath sparse, GatedDeltaNet's success jumps from $0.17$ to $0.62$ with E3B (Table~\ref{tab:seedvectors}), but path decodability barely moves ($+0.016$). The bonus does not turn the recurrent state into a better encoder.

\paragraph{The complementary disconnect on MemoryS13.}
Without a bonus, GRU and LSTM decode cue identity at $0.928$ and $0.876$ but fail the task on every seed (Table~\ref{tab:s13binary}). Forming the memory is not the bottleneck; using it is. The penalty arm corroborates this: the frozen agent never reaches the goal (Table~\ref{tab:firstgoal}), proving the failure lies in sampling, not encoding. The bonus provides behavioral coverage, not representational quality.

\begin{figure*}[htbp]\centering
\includegraphics[width=\linewidth]{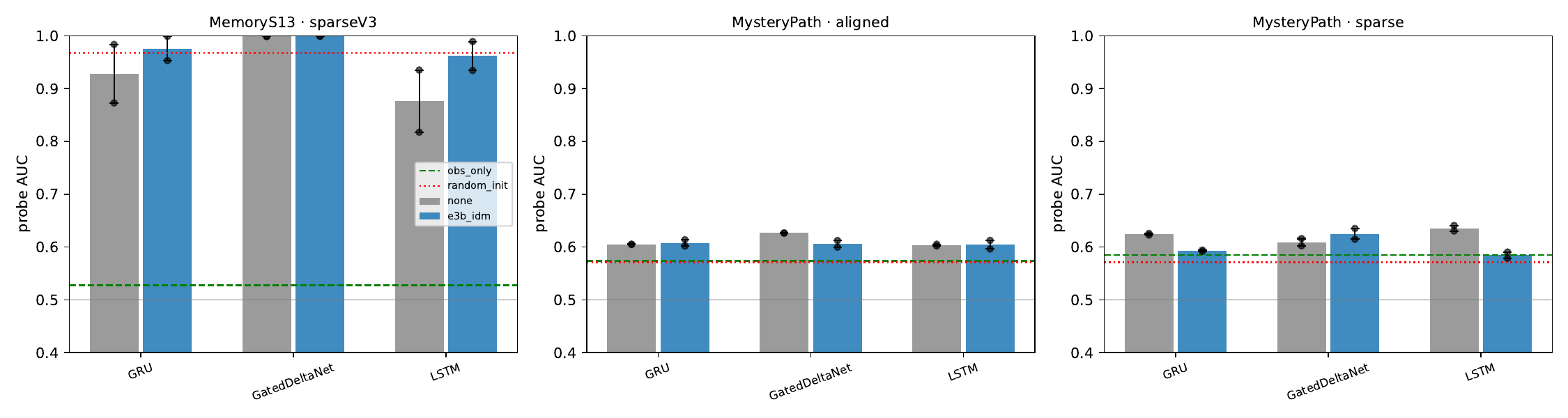}
\caption{Probe AUC by architecture, grouped by training signal. Dashed lines mark obs-only and random-init controls. MemoryS13 random-init is saturated.}
\label{fig:probe-f1}
\end{figure*}

\begin{figure}[htbp]\centering
\includegraphics[width=\linewidth]{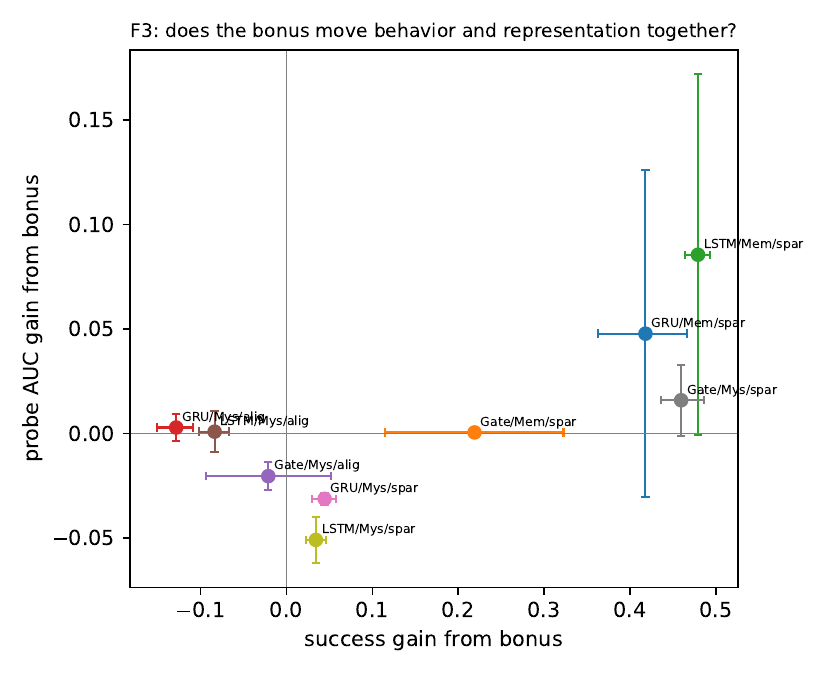}
\caption{Success gain vs.\ decodability gain under E3B. The largest behavioral gains occur with near-zero decodability gain.}
\label{fig:probe-f3}
\end{figure}

\begin{figure}[htbp]\centering
\includegraphics[width=\linewidth]{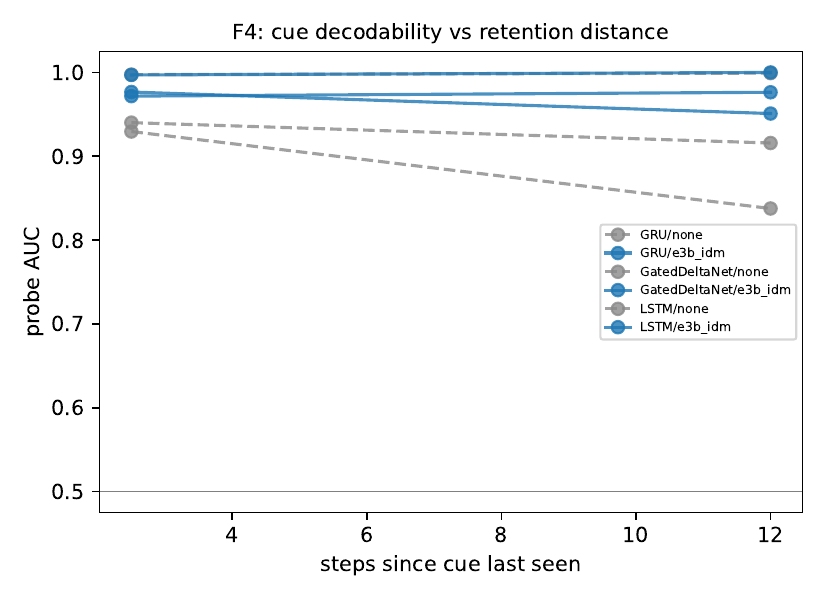}
\caption{MemoryS13 probe AUC vs.\ steps since cue was visible. Decodability remains high during traverse for all cells. Retention is not the separator.}
\label{fig:probe-f4}
\end{figure}

\begin{table*}[htbp]\centering\small
\caption{Pre-registered predictions vs.\ actual outcomes. Two misses, two hits, one split, two untestable.}
\label{tab:probe-preregistered}
\begin{tabular}{@{}lp{4.9cm}p{4.4cm}@{}}\toprule
ID & Prediction & Outcome \\
\midrule
P1 & MPG/sparse: bonus increases path decodability
   & \textbf{Miss.} Decreased for GRU/LSTM, flat for GDN. \\
P2 & Increase is larger for GDN than GRU/LSTM
   & \textbf{Hit on ordering only.} GDN $>$ GRU $>$ LSTM, but no actual increase. \\
P3 & MPG/aligned: bonus does \emph{not} increase decodability
   & \textbf{Hit.} All effects ${\approx}0$ or negative. \\
P4 & S13: bonus increases cue decodability
   & \textbf{Not evaluable.} Probe saturated by reservoir dynamics. \\
P5 & Obs-only control sits at chance
   & \textbf{Split.} Hit on S13 ($0.53$); missed on MPG ($0.58$). \\
P6 & Probe AUC correlates with success
   & \textbf{Not trustworthy.} Correlation is an artifact of between-environment differences. \\
\bottomrule\end{tabular}
\end{table*}

\paragraph{Limitations.}
Two seeds per cell yields directional estimates. The MysteryPath obs-only control inflates the baseline. The MemoryS13 probe is saturated. Linear probes establish only a lower bound on decodability. However, the central disconnect remains: massive behavioral shifts occur alongside near-zero representational ones.

\section{Summary Tables}
\label{app:summary}

\begin{table}[htbp]\centering\small
\caption{Evaluated architectures by memory mechanism class.}
\label{tab:arch}
\begin{tabular}{@{}lll@{}}
\toprule
Architecture & Mechanism & Known characteristics \\
\midrule
GRU & gated recurrence & robust; limited capacity \\
LSTM & gated recurrence & robust; limited capacity \\
RetNet & retention (decay) & high capacity; copying gap \\
GatedDeltaNet & gated linear attention & high capacity \\
Mamba-2 & selective SSM & high capacity; copying gap \\
Memoryless & none (control) & lower bound \\
\bottomrule
\end{tabular}
\end{table}

\paragraph{Why we left out Transformers.}
We limited the lineup to architectures maintaining a fixed-size recurrent state updated online. A growing attention context fundamentally alters the training stack. Given attention's copying advantage \citep{jelassi2024repeat}, Transformers would likely dominate TinyReproduce. Evaluating them fairly remains future work.

\section{Additional Notes for the Careful Reader}
\label{app:faq}

\paragraph{Why does sparse TinyReproduce show a null result despite potential sparsity?}
The observation stream operates on a fixed schedule. Intrinsic and extrinsic returns rank policies identically, leaving the bonus with no power to reorder them (redundancy supersedes sparsity).

\paragraph{Why do some architectures perform above chance on MemoryS13 without a bonus?}
GatedDeltaNet and Mamba-2 escape the cue-blind optimum in most seeds via entropy-driven exploration. The bimodal seed distribution (Fig.~\ref{fig:seed-strip}) suggests this relies on lucky early trajectories. The bonus removes this lottery for failing cells but does little for those already succeeding.

\paragraph{Is the TinyReproduce failure for RetNet/Mamba-2 a tuning issue?}
The failure persists across tested learning rates ($10^{-4}$ and $10^{-3}$) and mirrors documented copying gaps \citep{jelassi2024repeat}. The null bonus effect held across all configurations.

\paragraph{Does the memoryless decline under E3B directly support the theory?}
It aligns perfectly: E3B redirects the policy toward novel states, but without memory, the agent cannot convert this into return and loses its baseline unbiased-wandering return. The theory describes what a bonus \emph{can} change, not the optimization dynamics of an incapable policy.

\section{Compute and Reproducibility}
\label{app:compute}

Runs operated as single-GPU jobs across a heterogeneous cluster.

\begin{table}[htbp]\centering\small
\caption{Measured compute. Median wall-clock time across finished runs; GPU-hours = median time $\times$ number of runs. The 1{,}800 runs match the released per-seed records.}
\label{tab:compute}
\begin{tabular}{@{}lcccc@{}}
\toprule
Arm & Budget & Runs & Median & GPU-h \\
\midrule
MPG sparse / penalty / aligned & 20M & 540 & 13.6\,h & 7{,}344 \\
MPG distractor                 & 20M & 180 & 11.9\,h & 2{,}142 \\
S13 sparse $3{\times}3$ / $7{\times}7$ / penalty & 20M & 540 & 8.5\,h & 4{,}590 \\
S13 distractor                 & 20M & 180 & 7.4\,h  & 1{,}332 \\
TinyReproduce dense / sparse   & 10M & 360 & 3.9\,h  & 1{,}404 \\
\midrule
\textbf{Reported matrix}       &     & \textbf{1{,}800} & & \textbf{16{,}812} \\
\bottomrule
\end{tabular}
\vspace{2pt}\par\footnotesize\raggedright
Total project cost (including pilots and calibrations) was roughly 21{,}200 GPU-hours.
\end{table}

Code, per-seed tail means, evaluation curves, and analysis scripts will be released upon publication.

\section{Formal Results and Proofs}
\label{app:proofs}

\subsection{Encoders Need Not Exist in General}

Encoding is demanding even in expectation. The following construction shows that
a Markovian reward over three states can require more machine states than any
finite automaton has.

\begin{proposition}
\label{prop:horizon}
There is a POMDP $\mathcal{M}$ with $|S| = 3$, $|O| = 3$, $|A| = 3$ and a
Markovian reward, such that (i) in unbounded horizon, no finite RM over
observations matches its expected return for all observation-history policies;
and (ii) under any horizon $H$, \emph{every} RM over observations that does so
has at least $H - 1$ states.
\end{proposition}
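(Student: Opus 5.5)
The plan is to build a ``hidden coin with noisy peeks'' POMDP in which the posterior keeps sharpening with every peek. Any machine that matches expected returns for all policies must then store that posterior, which takes unboundedly many distinct states.

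\emph{Construction.} Let $S=\{s_0,s_1,s_d\}$, $A=\{c,g_0,g_1\}$ and $O=\{o^0,o^1,o^\bot\}$. The initial state is $s_0$ or $s_1$ with probability $\tfrac12$ each. Action $c$ leaves the state fixed, and either guess $g_i$ moves to the absorbing state $s_d$. In $s_z$ the agent observes $o^z$ with probability $q\in(\tfrac12,1)$ and the other symbol otherwise; in $s_d$ it always observes $o^\bot$. Guess $g_i$ pays $1$ if the state is $s_i$, and every other reward is $0$. This reward is Markovian. For the policy that peeks $k-1$ times, seeing $o^1$ each time, Bayes' rule gives posterior $\Pr(s_1\mid\cdot)=\sigma(k\lambda)$, where $\lambda=\ln\frac{q}{1-q}>0$. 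This value is strictly increasing in $k$.

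\emph{Pinning rewards by induction.} Fix any encoding POMDPRM. Let $x_0=u_0,x_1,\dots$ be its states along the all-$o^1$ trace under repeated $c$. Consider the policies $\pi^i_k$: peek $k-1$ times, then play $g_i$. Their returns are supported on a trace of length $k$, so matching them pins the machine's guess rewards. I will show by induction on $k$ that $\delta_r(x_{k-1},L(o^1,c,o^1))=0$ and that the guess rewards at $x_{k-1}$ equal $1-\sigma(k\lambda)$ and $\sigma(k\lambda)$.

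The delicate point is that $\pi^i_k$ averages over all observation sequences, not only the all-$o^1$ trace. I plan to handle this by comparing the two guess policies and applying the history-localizing subtraction used in the proof of Lemma~\ref{lem:cueperfect}. The two policies agree everywhere except at the chosen history: on that history they guess $g_0$ versus $g_1$, and off it they do the same thing. Subtracting their returns isolates the difference of the two guess rewards at $x_{k-1}$. Since the two guess rewards sum to $1$, the difference pins both values. A similar comparison, between peeking once more and stopping at that history, shows that the peek rewards on the trace contribute consistently and are forced to vanish.

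\emph{Distinctness and counting.} Suppose $x_k=x_j$ for some $j<k$. Then both states carry the same guess rewards, yet those rewards are pinned to $\sigma((k{+}1)\lambda)$ and $\sigma((j{+}1)\lambda)$ respectively. These differ because $\sigma$ is strictly increasing, a contradiction. Hence $x_0,\dots,x_{k-1}$ are pairwise distinct for every $k$, so no finite $U$ works at unbounded horizon, by pigeonhole. With horizon $H$, guesses are possible up to step $H-1$, which makes $x_0,\dots,x_{H-2}$ distinct and gives $|U|\ge H-1$.

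The main obstacle is the localization step. Encoding is an equality of expected returns over whole policies, so I must justify that it pins rewards on one specific history. The tool for this is the deterministic-policy subtraction, together with positivity of that history's probability.
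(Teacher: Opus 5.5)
Your construction and overall route are the paper's. You pin what the machine must pay for a guess at the state reached along the all-$o^1$ peek trace, conclude that the trace states $x_0,x_1,\dots$ are pairwise distinct because $\sigma$ is strictly increasing, and finish by pigeonhole. Your explicit localization improves on the paper's proof. That proof asserts that the returns of the non-adaptive policies $\pi_1^0,\pi_1^1$ are $1-\sigma(\lambda)$ and $\sigma(\lambda)$, but unconditionally both equal $\tfrac12$. It is your subtraction of two policies that differ only at the (positive-probability) trace history that legitimately conditions on $o_0=\dots=o_{k-1}=o^1$.

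One step is false, however, and the paper over-claims in the same way. Encoding does not force the two guess rewards at $x_{k-1}$ to sum to $1$, nor the peek rewards on the trace to vanish. Take any $\Phi$ on $U$ with $\Phi(u_0)=0$ and replace $\delta_r(u,\sigma)$ by $\delta_r(u,\sigma)+\gamma\Phi(\delta_u(u,\sigma))-\Phi(u)$. The extra terms telescope to $-\Phi(u_0)=0$ on every trajectory, so the new machine still encodes. Yet taking $\Phi$ supported on $x_1$ shifts the peek reward on the edge $x_0\to x_1$ by $\gamma\Phi(x_1)$ and changes the sum of the guess rewards at $x_1$.

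Your subtraction only ever pins a difference, and a difference is all you need. Let $G_i(x)$ be the machine's discounted reward from playing $g_i$ at state $x$ onward, under fixed post-guess behaviour; this is just the guess-edge reward if the guess ends the episode. Set $\Psi(x):=G_0(x)-G_1(x)$. The subtraction gives $\Psi(x_{k-1})=1-2\sigma(k\lambda)$, which is strictly decreasing in $k$. So $x_k=x_j$ with $j<k$ would assign $\Psi$ two different values at one state. This is the same $\Psi$ device used in Lemma~\ref{lem:cueperfect}. Replace the \emph{sum to $1$} and \emph{peek rewards vanish} claims with this and the argument goes through.

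For (ii), keep the guess terminal, as the construction intends. If the absorbing phase ran on to the horizon, $G_i$ would also depend on the number of steps remaining, and $\Psi$ would no longer be a function of the state alone.
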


\begin{proof}
Let $\mathcal{M}$ have states $S = \{s_0, s_1, s_d\}$, actions
$A = \{c, g_0, g_1\}$, and observations $O = \{o^0, o^1, o^\bot\}$. The start
state is $s_0$ or $s_1$ with equal probability. Action $c$ keeps the state fixed,
while $g_0$ and $g_1$ move to the absorbing state $s_d$. In state $s_z$ the
observation equals $o^z$ with probability $q$ and the other symbol with
probability $1-q$, for a fixed $\tfrac12 < q < 1$, and $s_d$ always shows
$o^\bot$. The reward of guess $g_i$ is $1$ if the state was $s_i$ and $0$
otherwise, and all other rewards are $0$. In plain terms, a hidden coin, noisy
peeks, and a guess that ends the episode. By construction the reward is
Markovian.

For any policy whose first $k-1$ steps choose action $c$,
\begin{align*}
\Pr(s_1 \mid o_0 = \dots = o_{k-1} = o^1)
&= \frac{\frac12 q^k}{\frac12 q^k + \frac12 (1-q)^k}
= \sigma(k\lambda),
\end{align*}
where $\sigma(x) := \frac{e^x}{1+e^x}$ and
$\lambda := \ln\frac{q}{1-q} > 0$. Since
$\sigma'(x) = \sigma(x)(1 - \sigma(x)) > 0$, each additional matching peek makes
the bettor strictly more confident, without bound in $k$.

Suppose a POMDPRM $\mathcal{T}$ with
$\mathcal{R}_{POA} = \tuple{U, u_0, \delta_u, \delta_r}$ encodes $\mathcal{M}$.
Consider the two policies $\pi_k^0$ and $\pi_k^1$ that take $c$ for the first
$k-1$ steps and then take $g_0$ and $g_1$ respectively, and consider the trace on
which all observations of the first $k-1$ steps equal $o^1$. Let
$x_0, x_1, x_2, \dots$ be the machine states along that trace, with $x_0 = u_0$
by definition. For $k = 1$ the expected returns of $\pi_1^0$ and $\pi_1^1$ are
$1 - \sigma(\lambda)$ and $\sigma(\lambda)$, so matching them forces
$\delta_r\big(x_0, L(o^1, g_0, o^\bot)\big) = 1 - \sigma(\lambda)$ and
$\delta_r\big(x_0, L(o^1, g_1, o^\bot)\big) = \sigma(\lambda)$.

We claim that $x_0, \dots, x_{k-1}$ are pairwise distinct with
$\delta_r\big(x_{k-1}, L(o^1, g_i, o^\bot)\big)$ equal to $1 - \sigma(k\lambda)$
for $i = 0$ and $\sigma(k\lambda)$ for $i = 1$, by induction on $k$. Suppose it
holds up to $k$. First, $\delta_r\big(x_j, L(o^1, c, o^1)\big) = 0$ for
$0 \le j \le k-1$, as otherwise the expected returns of $\pi_j^0, \pi_j^1$ would
not match. Now $x_k = \delta_u\big(x_{k-1}, L(o^1, c, o^1)\big)$. If $x_k = x_j$
for some $j \le k-1$, the guess rewards at $x_k$ and $x_j$ coincide; but the
returns of $\pi_{k+1}^0, \pi_{k+1}^1$ pin the former to
$1 - \sigma\big((k{+}1)\lambda\big), \sigma\big((k{+}1)\lambda\big)$ while those
of $\pi_{j+1}^0, \pi_{j+1}^1$ pin the latter to
$1 - \sigma\big((j{+}1)\lambda\big), \sigma\big((j{+}1)\lambda\big)$, and
$\sigma$ is strictly increasing, a contradiction.

For any $k + 1 > |U|$ the trace exhibits $k$ distinct machine states, so by the
pigeonhole principle no finite RM encodes the reward at unbounded horizon. If the
maximum horizon is $H$ and the RM encodes the reward, the states
$x_0, \dots, x_{H-2}$ are pairwise distinct, so $|U| \ge H - 1$.
\end{proof}

\subsection{The Belief Filter Is a Perfect Encoder}

The impossibility above disappears when the observations leave only finitely many
posteriors over the hidden pair. In that case an encoding machine exists, is
computable, and is perfect.

\begin{proposition}[Belief filter]
\label{prop:filter}
Let $\mathcal{M}$ be a POMDP with $\gamma \in (0,1)$ such that some MDPRM
$\tuple{S, A, p, \gamma, \mu, P', L', \mathcal{R}_{P'SA}}$, with
$\mathcal{R}_{P'SA} = \tuple{U', u_0', \delta_u', \delta_r'}$, encodes
$\mathcal{M}$, and suppose the set of posteriors over $Z := S \times U'$
reachable at positive probability is finite. Then the belief filter over $Z$,
whose reward on each transition is the average of the latent reward given the
observable past, is an explicitly computable POMDPRM that encodes $\mathcal{M}$
and is perfect w.r.t.\ its labelling.
\end{proposition}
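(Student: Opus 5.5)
The plan is to let the machine be the Bayes filter itself. Write $z_t := (s_t, u_t)$ for the pair of hidden state and latent machine state of the given MDPRM, with $u_{t+1} = \delta_u'(u_t, L'(s_t,a_t,s_{t+1}))$. Under the MDPRM hypothesis $z_t$ evolves as a controlled Markov chain. The encoding hypothesis, read step by step on reachable trajectories, makes $r_t = \delta_r'(u_t, L'(s_t,a_t,s_{t+1}))$ a fixed function of $(z_t, a_t, s_{t+1})$. The construction then has five steps.

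\textbf{Policy-independence of the filter.} For a positive-probability observable history $h$, define $b_h(z) = \Pr(z_t = z \mid h)$. Expanding the joint probability of a latent path and $h$ gives environment factors times $\prod_j \pi(a_j \mid h_j)$. These policy factors depend on $h$ only, so they cancel in the conditional. The same cancellation shows two further facts:
\begin{itemize}
\item the one-step predictive $\Pr(o_{t+1}=o' \mid h_t, a_t)$ is a fixed function $N(b_{h_t}, a_t, o')$;
\item the Bayes update is a fixed map $b \mapsto F(b, a, o')$: propagate through $p$ and $\delta_u'$, weight by $\omega(o'\mid s')$, and normalize by $N$.
\end{itemize}
Every reachable posterior is therefore obtained by iterating $F$ from first-step posteriors built from $\mu$. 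By hypothesis the resulting set $\widehat B$ is finite.

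\textbf{Building the POMDPRM.} Choose $P$ large enough to admit an injection $\mathrm{code}: O\times A\times O \to 2^P$, and set $L=\mathrm{code}$ so that labels carry the full transition. The states are an initial state $\iota$, the elements of $\widehat B$, and a sink $\bot$ for zero-probability updates. The transition function is $\delta_u(b,\mathrm{code}(o,a,o')) = F(b,a,o')$. The reward $\delta_r(b,\mathrm{code}(o,a,o'))$ is the posterior mean of $\delta_r'(u, L'(s,a,s'))$, weighted by $b(s,u)\,p(s'\mid s,a)\,\omega(o'\mid s')$ and normalized by $N$. From $\iota$ the same formulas are used with $\mu$ in place of $b$. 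Computability follows from forward closure from the first-step posteriors. Each table entry is a finite rational expression in the model data, and the closure terminates because $\widehat B$ is finite.

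\textbf{Encoding.} By induction on $t$, the machine state satisfies $x_t = b_{h_t}$ on every positive-probability history. Hence $\tilde r_t = \mathbb E[r_t \mid h_t, a_t, o_{t+1}]$. The tower property over the countably many triples gives $\mathbb E^\pi \tilde r_t = \mathbb E^\pi r_t$ for every $t$ and every policy $\pi$. Both reward sequences are bounded by $\max|\delta_r'|$ and $\gamma<1$, so we may sum over $t$ and interchange sum and expectation, which yields Definition~\ref{def:encode}.

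\textbf{Perfection.} Given $h_t$ and $a_t$, the joint law of $(o_{t+1}, r_t)$ is obtained by pushing $b_{h_t}$ one step through $p$, $\omega$, and the latent reward table. It therefore depends only on $(x_t, a_t)$, which gives Eq.~\ref{eq:perfect_rm}. Here $r_t$ means the environment reward. Its conditional law is determined by the joint law of $(s_t, u_t, s_{t+1}, o_{t+1})$ given $h_t, a_t$, and that joint law is a function of $b_{h_t}$ alone.

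\textbf{Main obstacle.} The delicate step is the first one: establishing that $b_h$, $N$, and $F$ are genuinely policy-free even for stochastic history-dependent policies, and that $x_t = b_{h_t}$ holds. One subtlety is that the MDPRM reward equality holds only on reachable trajectories. I would handle this by noting that the posterior mean puts weight only on positive-probability latent transitions. A second subtlety is zero-probability updates, which the sink $\bot$ absorbs without affecting any expectation.
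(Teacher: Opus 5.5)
Your proposal is correct and follows the paper's proof essentially step for step. Both use the same policy-free Bayes map over $Z=S\times U'$, obtained by cancelling the policy factors. Both build the same machine (states $\iota$, $\widehat B$, $\bot$, injective labelling, posterior-mean reward) by forward closure. Both get encoding from the tower property plus boundedness, and perfection from the one-step pushforward of $b_{h_t}$.

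One detail you share with the paper should be stated more precisely. Out of $\iota$, the label $\mathrm{code}(o_0,a_0,o_1)$ carries $o_0$, so the first-step posterior should be $\mu(s)\,\omega(o_0\mid s)$ renormalized, with $u=u_0'$, rather than $\mu$ itself. The induction $x_t=b_{h_t}$ then starts at $t=1$.
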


\begin{proof}
Write $z_t := (s_t, u_t)$ for the pair of the hidden state and the latent machine
state, where $u_{t+1} = \delta_u'\big(u_t, L'(s_t, a_t, s_{t+1})\big)$. The MDPRM
hypothesis makes $z_t$ a Markov chain and makes
$r_t = \delta_r'\big(u_t, L'(s_t, a_t, s_{t+1})\big)$ a fixed function of
$(z_t, a_t, s_{t+1})$ on every trajectory.

The machine tracks the Bayes posterior over this pair. For an observable history
$h$ of positive probability, let $b_h(z)$ be the posterior probability of
$z_t = z$ given $h$. This posterior does not depend on the policy that produced
$h$, because the probability of any latent path together with $h$ factors into
environment terms times the policy terms $\prod_j \pi_j(a_j \mid h_j)$, the
policy terms depend on $h$ alone, and they cancel between the numerator and the
denominator of the conditional. By the same cancellation, the predicted
probability of the next observation, $\Pr(o_{t+1} = o' \mid h_t, a_t)$, is a
fixed function $N(b_{h_t}, a_t, o')$ of the current posterior, obtained by
propagating $b_{h_t}$ through $p$ and $\omega$; and one step of Bayes updating
carries $b_h$ to a posterior $F(b_h, a, o')$ that again depends only on
$(b_h, a, o')$, obtained by propagating $b_h$ through $p$ and the latent machine
update, weighting by $\omega(o' \mid s')$, and renormalizing by $N$. Consequently
every reachable posterior arises from a first-step posterior (the same formula
with $\mu$ in place of $b$) by iterating $F$, and by hypothesis the set
$\widehat{B}$ of reachable posteriors is finite.

The machine is the filter itself. Choose $P$ so that some injection
$\mathrm{code} : O \times A \times O \to 2^{P}$ exists. The states are the initial $\iota$, the posteriors in
$\widehat{B}$, and a sink $\bot$ for updates of probability zero, which loops
with reward $0$; the transitions apply $F$; and the reward on a transition is the
posterior mean of the latent reward,
\begin{align*}
&\delta_r\big(b, L(o, a, o')\big)\\
  &= \frac{\sum_{(s,u) \in Z} \sum_{s' \in S}
  b(s,u)\, p(s' \mid s, a)
  \omega(o' \mid s')\,
  \delta_r'\big(u, L'(s, a, s')\big)}{N(b, a, o')}
\end{align*}
with $\mu$ in place of $b$ out of $\iota$. Every table is computable in exact
arithmetic by forward closure from the first-step posteriors, terminating because
$\widehat{B}$ is finite.

The machine encodes. By induction on the update rule, on every history of
positive probability the machine state equals the posterior, $x_t = b_{h_t}$, so
its reward is $\tilde r_t = \mathbb{E}\big[r_t \mid h_t, a_t, o_{t+1}\big]$.
Taking total expectations over the countably many triples $(h_t, a_t, o_{t+1})$
gives $\mathbb{E}^\pi[\tilde r_t] = \mathbb{E}^\pi[r_t]$ for every $t$ and
policy; all rewards are bounded by the largest entry of $\delta_r'$, so both
discounted sums converge absolutely and the returns agree.

The machine is perfect. Given $h_t$ and $a$, the joint distribution of
$(o_{t+1}, r_t)$ is obtained by propagating $b_{h_t}$ one step through $p$,
$\omega$, and the latent reward table, so it is a fixed function of
$(b_{h_t}, a) = (x_t, a)$. This is Eq.\ (3) of the main text.
\end{proof}

\subsection{The Reference Kernel of the Greedy RM Policy}

\begin{remark}[The reference kernel, and a caution about coarse machines]
\label{rem:generalkernel}
Definitions 4 and 5 of the main text apply to any POMDPRM $\mathcal{T}$ encoding
$\mathcal{M}$, perfect or not. Fix the uniform reference policy $\pi_0$ and let
$d(o, u) = \sum_{t \ge 0} \gamma^t \Pr^{\pi_0}(o_t = o, x_t = u)$ be the
discounted occupancy of the pair $(o, u)$. For pairs with $d(o, u) > 0$, let
$\Pr_{\pi_0}(o' \mid o, u, a)$ be the weighted average of the history
conditionals $\Pr(o_{t+1} = o' \mid h_t, a)$ over all times $t$ and histories
$h_t$ with $(o_t, x_t) = (o, u)$, each weighted by
$\gamma^t \Pr^{\pi_0}(h_t) / d(o, u)$; for $d(o, u) = 0$ let the kernel be
uniform. The history conditional does not depend on the policy, by the
cancellation argument of the belief-filter proof applied to the plain POMDP, so
the kernel is well defined and depends only on the reference policy and
$\gamma$.

When $\mathcal{T}$ is perfect, $\Pr(o_{t+1} = o' \mid h_t, a)$ is constant across
the histories in each $(o, u)$ cell (sum Eq.\ (3) of the main text over reward
values), so the kernel equals that common value. It is then independent of the
reference policy, the discounting, and $t$, and coincides with the canonical
$\Pr(o_{t+1} \mid o_t, x_t, a_t)$. Since the full-support $\pi_0$ reaches every
pair that any policy reaches, positive occupancy coincides with reachability.

A caution accompanies the generality. A machine too coarse cannot register a
misranking its state does not represent. On the one-state MysteryPath machine of
Proposition~\ref{prop:nonperfect}, both actions induce the same successor
distribution from every pair, so every $\phi$-greedy policy ties everywhere and
$r$ comes out potentially dense w.r.t.\ it, even though this reward undervalues
exactly the probing actions. Potential density w.r.t.\ a non-perfect encoder
therefore certifies nothing, and the classification only reads it where the
reward is structurally dense, where every encoder is perfect and the kernel is
canonical.
\end{remark}

\subsection{Classification of the MysteryPath Abstraction}

The recipe from the main text asks for one machine that forgets and still
encodes. The one-state goal machine is that witness, and the two histories ``went
left, fell'' and ``went right, fell'' show it is not perfect.

\begin{proposition}[Encoding does not imply perfection; the MysteryPath
abstraction is structurally sparse]
\label{prop:nonperfect}
There is a POMDP $\mathcal{M}$ (a one-cell abstraction of MysteryPath) and a
POMDPRM $\mathcal{T}$ encoding $\mathcal{M}$ whose RM has a single non-terminal
state, reproduces the reward step by step, and is not perfect w.r.t.\ its
labelling; moreover no POMDPRM with a single non-terminal state that encodes
$\mathcal{M}$ is perfect. By Definition 3 of the main text, $r$ is structurally
sparse.
\end{proposition}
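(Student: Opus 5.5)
The plan is to build an explicit one-cell abstraction of MysteryPath, exhibit the one-state goal machine as an encoder, and then defeat perfection for any single-state encoder with two histories that look identical but predict different futures.

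\emph{Construction of $\mathcal{M}$.} A hidden trap side $z \in \{L, R\}$ is drawn uniformly at random. The hidden states are:
\begin{itemize}
\item $(\mathrm{start}, z)$, showing observation $o_S$;
\item $(\mathrm{fallen}, z)$, showing observation $o_F$;
\item a goal state $\mathrm{done}_1$, showing observation $o_G$;
\item an absorbing state $\mathrm{done}_2$, showing observation $o_\bot$.
\end{itemize}
The actions are $a_L$ and $a_R$. From a start or fallen state with hidden side $z$, action $a_z$ leads to $(\mathrm{fallen}, z)$ with reward $0$, and the other action leads to $\mathrm{done}_1$ with reward $+1$. After that, $\mathrm{done}_1$ moves to $\mathrm{done}_2$ and all further rewards are $0$. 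The essential property is that a fall does not reset $z$. This mirrors the fact that off-path steps reset position but not the path, so the observation $o_F$ forgets which side was tried.

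\emph{Encoding.} Take $P=\{\mathsf G\}$, with $L(o,a,o')=\{\mathsf G\}$ iff $o'=o_G$. Use a single state $u$ with $\delta_u\equiv u$, $\delta_r(u,\{\mathsf G\})=1$ and $\delta_r(u,\varnothing)=0$. Reward $+1$ occurs exactly on transitions into $\mathrm{done}_1$, and $o_G$ appears only on arrival there. Hence $\tilde r_t=r_t$ on every trajectory, so the expected returns agree for every observation-history policy. This gives Definition~\ref{def:encode} and also step-by-step reproduction.

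\emph{Non-perfection of every single-state encoder.} This is the main step. Let $\mathcal{T}'$ be any encoder whose machine has one non-terminal state. Along histories that have not yet reached the goal, $x_t$ is constant. Perfection would therefore make $\Pr(o_{t+1},r_t\mid h_t,a_t)$ depend on $h_t$ only through $(o_t,a_t)$.

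Consider the two histories $h=(o_S,a_L,o_F)$ and $h'=(o_S,a_R,o_F)$. Both have positive probability under, say, the uniform policy. Both end in the same cell $(o_F,u)$, and we take the same action $a_L$ after each.
\begin{itemize}
\item Under $h$, the fall reveals $z=L$, so $a_L$ falls again: $o_{t+1}=o_F$ and $r_t=0$ with probability $1$.
\item Under $h'$, the fall reveals $z=R$, so $a_L$ reaches the goal: $o_{t+1}=o_G$ and $r_t=1$ with probability $1$.
\end{itemize}
These conditionals differ, so Eq.~\ref{eq:perfect_rm} fails for $\mathcal{T}'$. In particular it fails for $\mathcal{T}$ itself.

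One subtlety needs care: the terminal-state convention. Absorbing states counted in $F$ must not allow a "single non-terminal state" machine to sneak in extra memory. This is not a problem, because reaching $o_F$ triggers no terminal transition, so both histories sit in the unique non-terminal state.

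\emph{Conclusion.} We have an encoder, namely $\mathcal{T}$, that is not perfect. By Definition~\ref{def:structural}, a single non-perfect encoder suffices, so $r$ is structurally sparse.
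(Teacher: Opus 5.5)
Your proposal is correct and is essentially the paper's proof: the same hidden-trap-side construction, the same one-state goal machine with $L(o,a,o')=\{\mathsf G\}$ iff $o'=o_G$, and the same pair of histories $(o_S,a_L,o_F)$ and $(o_S,a_R,o_F)$, which share the cell $(o_F,u,a_L)$ but predict $o_F$ versus $o_G$. Your terminal-state remark goes slightly beyond the paper, which omits it, and it is best justified by encoding itself: a machine that entered a zero-reward terminal state after a fall would give equal returns to two policies that differ only in the next action, whereas the environment returns differ.
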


\begin{proof}
A hidden trap side $z \in \{L, R\}$ is drawn uniformly. The states are
$(\mathrm{start}, z)$ with observation $o_S$, $(\mathrm{fallen}, z)$ with
observation $o_F$, $\mathrm{done}_1$ with observation $o_G$, and $\mathrm{done}_2$
with observation $o_\bot$, and the actions are $\{a_L, a_R\}$. From either
start-type or fallen-type state with hidden $z$, action $a_z$ leads to
$(\mathrm{fallen}, z)$ with reward $0$, and action $a_{\bar z}$ leads to
$\mathrm{done}_1$ with reward $+1$. $\mathrm{done}_1$ moves to $\mathrm{done}_2$
under any action, $\mathrm{done}_2$ is absorbing, and all rewards from the done
states are $0$.

The machine $\mathcal{T}$ has one non-terminal state $u$, with
$P = \{\mathsf G\}$, $L(o,a,o') = \{\mathsf G\}$ iff $o' = o_G$,
$\delta_u \equiv u$, $\delta_r(u, \{\mathsf G\}) = 1$, and
$\delta_r(u, \varnothing) = 0$. The environment gives $+1$ exactly on transitions
into $\mathrm{done}_1$, whose arrival observation is $o_G$, and $o_G$ occurs at
no other time, so $\tilde r_t = \mathbbm{1}[o_{t+1} = o_G] = r_t$ on every
trajectory and $\mathcal{T}$ encodes.

Now let $\mathcal{T}'$ be any encoder with a single non-terminal state. Its RM
state $x_t$ is constant, so perfection would force
$\Pr(o_{t+1}, r_t \mid h_t, a_t)$ to depend on $h_t$ only through $o_t$. The
histories $h = (o_S, a_L, o_F)$ and $h' = (o_S, a_R, o_F)$ both have positive
probability under the uniform policy and share
$(o_t, x_t, a_t) = (o_F, u, a_L)$. But $h$ reveals $z = L$, so $a_L$ falls again
with probability $1$, while $h'$ reveals $z = R$, so $a_L$ reaches the goal with
probability $1$. The conditional distributions differ, so no single-state machine
is perfect.
\end{proof}

\subsection{Classification of the MemoryS13 Cue Abstraction}

The raw environment is \emph{not} structurally dense. The cue machine of
Figure 2(b) of the main text encodes the raw environment as well, yet on the raw
views it is not perfect, because inside a corridor whose cells share a view the
next observation depends on the unobserved position, and by Definition 3 one
non-perfect encoder makes the reward structurally sparse. That failure belongs to
the dynamics, since the reward never distinguishes corridor positions. The
abstraction below removes exactly this view-aliasing and nothing else, and the
density claim is about the abstraction. Here the two-history recipe runs in
reverse. Any machine that tries to forget the cue and still match the returns of
policies that differ only in the final pick is caught, since matching pins its
remaining discounted reward at the shared state to two different values at once.

\begin{lemma}[Density certificate for the cue abstraction]
\label{lem:cueperfect}
Let $\mathcal{M}_{\mathrm{cue}}$ be the cue abstraction of MiniGrid-MemoryS13. A
hidden cue $c \in \{A,B\}$ is uniform; the locations are start ($o_S$), cue room
($o_C^A$ or $o_C^B$, revealing $c$), junction ($o_J$), and end ($o_E$, then
absorbing $o_\bot$). At the start, $\mathrm{peek}$ moves to the cue room and
every other action to the junction; every action leaves the cue room for the
junction; at the junction $\mathrm{pick}_i$ moves to the end with reward
$\mathbbm{1}[i = c]$ while other actions self-loop with reward $0$; all other
rewards are $0$. Then every POMDPRM that encodes $\mathcal{M}_{\mathrm{cue}}$ is
perfect w.r.t.\ its labelling, so the abstraction's reward is structurally dense.
The three-state cue machine of Figure 2(b) of the main text is such an encoder,
and it does not reproduce the reward step by step.
\end{lemma}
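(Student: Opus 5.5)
The argument has three parts: any encoder's state at the junction determines what the history has revealed about the cue; this makes every observation--machine-state cell homogeneous, which is perfection; and the three-state cue machine is checked directly.

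\emph{Step 1: the junction state determines the cue class.} Fix an arbitrary encoder $\mathcal{T}$. For a positive-probability history $h$ ending at $o_J$, let $\kappa(h)\in\{A,B,\varnothing\}$ record which cue observation, if any, $h$ contains, and let $x(h)$ be the machine state after reading $h$. After a pick, the trajectory is deterministic ($o_E$, then $o_\bot$ forever). So the machine's discounted reward from the pick onward depends only on $x(h)$ and on which pick is played; write it $G_i(x)$ for $\mathrm{pick}_i$. Take two deterministic observation-history policies that agree everywhere except at $h$, where one plays $\mathrm{pick}_A$ and the other $\mathrm{pick}_B$. Their trajectories and label streams coincide off the event that $h$ occurs, and on that event they share the prefix, so the prefix contributions cancel. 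Applying Definition~\ref{def:encode} to both policies and subtracting gives
\[
\Psi\big(x(h)\big) := G_A\big(x(h)\big)-G_B\big(x(h)\big) = \Pr(c{=}A\mid h)-\Pr(c{=}B\mid h),
\]
after dividing by $\gamma^{|h|}\Pr(h)>0$. The right-hand side is $1$, $-1$ or $0$ according to $\kappa(h)$. Since $\Psi$ depends on the machine state alone, junction histories of different classes cannot share a machine state.

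\emph{Step 2: perfection, cell by cell.} I check Eq.~\ref{eq:perfect_rm} by showing that the conditional law of $(o_{t+1},r_t)$ given $(h_t,a_t)$ is constant across the histories in each cell $\{o_t=o,\ x_t=u\}$.
\begin{itemize}
\item At $o_S$ the only history is the empty one.
\item At $o_C^X$ every history has revealed $c=X$, and all actions move to $o_J$ with reward $0$.
\item At $o_J$, Step 1 gives a common $\kappa$ across the cell. Given $\kappa$, the posterior on $c$ is fixed, so a pick produces $o_E$ with a Bernoulli reward of fixed mean, and every other action self-loops deterministically with reward $0$.
\item At $o_E$ and $o_\bot$ everything is deterministic with reward $0$.
\end{itemize}
Since $\mathcal{T}$ was arbitrary, every encoder is perfect. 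Encoders exist by the next step, so Definition~\ref{def:structural} classifies the reward as structurally dense.

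\emph{Step 3: the cue machine of Figure~\ref{fig:rm-axis}(b).} Its states are $u_0,u_A,u_B$. Labels are chosen so that a cue-room arrival moves $u_0$ to $u_X$. The rewards are: $\mathrm{pick}_i$ pays $1$ from $u_i$, $0$ from $u_{\bar i}$, and $\tfrac12$ from $u_0$; everything else pays $0$. On each transition its reward equals $\mathbb{E}[r_t\mid h_t,a_t,o_{t+1}]$, so by the tower property the expected discounted returns agree for every policy, and the machine encodes. It does not reproduce the reward step by step, because a cue-blind pick from $u_0$ pays $\tfrac12$ while the environment pays $0$ or $1$.

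The main obstacle is the differencing argument in Step 1. The delicate points are constructing the two policies so that the return difference is supported exactly on the event $\{h \text{ occurs}\}$, and justifying that the post-pick machine reward depends only on $x(h)$ and the pick. The latter needs the determinism of the post-pick observations together with the fact that labels are functions of $(o,a,o')$. I would handle it by making the policies deterministic and identical after the pick, so the label streams after the pick coincide.
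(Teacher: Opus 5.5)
Your proposal is correct and follows the paper's proof essentially step for step: the same differencing of two deterministic policies that disagree only at the pick from $h$, giving $\Psi(x(h)) = \Pr(c{=}A\mid h)-\Pr(c{=}B\mid h)$; the same cell-by-cell check of Eq.~\ref{eq:perfect_rm}; and the same tower-property argument showing the cue machine encodes but does not reproduce the reward step by step. You also fix a common post-pick continuation so that $G_i$ depends only on $(x(h), i)$; this makes explicit a point the paper leaves implicit, since labels also depend on the post-pick actions.
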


\begin{proof}
Fix any encoder $\mathcal{T}$. For a positive-probability history $h$ ending at
the junction, let $\kappa(h) \in \{A, B, \varnothing\}$ record whether $h$
contains $o_C^A$, contains $o_C^B$, or contains neither, and let $x(h)$ be the
machine state after reading the labels of $h$.

We first show that $x(h)$ determines $\kappa(h)$. After a pick the future is
deterministic, so the machine's discounted reward from the pick onward is a
function of the state at the pick and of which pick was played; write $G_i(x)$
for this quantity when $\mathrm{pick}_i$ is played from state $x$, and
$\Psi(x) := G_A(x) - G_B(x)$. Compare the two deterministic policies that produce
$h$ along its prefix, act identically at every other history, and at $h$ play
$\mathrm{pick}_A$ and $\mathrm{pick}_B$ respectively. Away from $h$ they generate
identical trajectories and label streams, so both return differences are
supported on the event that $h$ occurs. On that event the environment returns
differ by
$\gamma^{|h|}\Pr(h)\big(\Pr(c{=}A \mid h) - \Pr(c{=}B \mid h)\big)$ while the
machine returns differ by $\gamma^{|h|}\Pr(h)\,\Psi(x(h))$, the shared prefix
cancelling. Encoding equates the two returns for both policies, so
$$\Psi\big(x(h)\big) = \Pr(c{=}A \mid h) - \Pr(c{=}B \mid h),$$
which equals $1$, $-1$, or $0$ according as $\kappa(h)$ is $A$, $B$, or
$\varnothing$. Since $\Psi$ is a function of the state alone, two junction
histories with different classes cannot share a machine state.

Perfection follows cell by cell. In a cell $\{o_t = o, x_t = u\}$ with $o = o_S$
the only history is the empty one; with $o = o_C^X$ every history has revealed
$c = X$; with $o = o_J$ all histories share $\kappa$ by the previous paragraph,
and given $\kappa$ the pick reward is Bernoulli with mean
$\Pr(c = i \mid \kappa)$ while everything else is deterministic; with
$o \in \{o_E, o_\bot\}$ the process is deterministic with reward $0$. In every
case the conditional distribution of $(o_{t+1}, r_t)$ is constant across the
histories in the cell, which is Eq.\ (3) of the main text.

Finally, the cue machine encodes $\mathcal{M}_{\mathrm{cue}}$, since its reward
on every transition is the average environment reward given the history and the
tower property gives equal expected returns for every policy; it does not
reproduce the reward step by step, since from its no-cue state a pick is rewarded
a fixed $\tfrac12$ while the environment gives $0$ or $1$.
\end{proof}

\begin{proof}[Computation (potential sparsity of the cue abstraction)]
Take the cue machine, with rewards $1$ and $0$ for matching and mismatching picks
from $u_A, u_B$ and $\tfrac12$ for either pick from $u_0$. At $(o_S, u_0)$ all
one-step machine rewards are $0$, so the greedy policy mixes uniformly over the
four actions, peeking with probability $\tfrac14$ and reaching the junction
cue-blind with probability $\tfrac34$; at $(o_J, u_0)$ it picks uniformly; at
$(o_J, u_X)$ it plays $\mathrm{pick}_X$; at the cue room all actions give $0$ and
lead to the junction. Hence
$$J(\pi) = \tfrac34 \cdot \gamma \cdot \tfrac12
+ \tfrac14 \cdot \gamma^2 = \tfrac38\gamma + \tfrac14\gamma^2.$$
With $\phi = V^\star$ as in the proof of Theorem 1,
$J(\pi^{V^\star}) = J^\star = \gamma^2$ for $\gamma > \tfrac12$, since peek, go,
pick correctly then beats guessing immediately, worth $\gamma \cdot \tfrac12$,
and
$$J(\pi^{V^\star}) - J(\pi) = \tfrac38\gamma\,(2\gamma - 1) > 0,$$
so the abstraction's reward is potentially sparse w.r.t.\ the cue machine.
\end{proof}

\subsection{Classification of the TinyReproduce Abstraction}

TinyReproduce invites a shortcut. A mistake ends the episode, so one might hope a
machine could simply reward surviving play steps and forget the tokens. The
shortcut is not available, because episode end is a consequence of correctness,
not an input the machine can read. The labelling function sees only
$(o_t, a_t, o_{t+1})$, the play observation is one constant view, and the final
step shows that same view whether the token was right or wrong. Two dictations
that agree on the tokens queried so far but disagree on the current target give,
under the same played token, identical $(o, a, o')$ and hence identical labels
with different rewards. The only way to reward a token correctly is to already
know it, and knowing every token at its query means carrying the dictated
sequence through the play phase, as the sequence machine of Figure 2(c) of the
main text does. The lemma shows every encoder is forced into this shape.

\begin{lemma}[Density certificate for the dictation abstraction]
\label{lem:tinydense}
Consider the dictation abstraction of TinyReproduce matching the implementation.
A sequence $s \in \Sigma^k$ ($k \ge 2$) drawn uniformly is shown token by token
at times $0, \dots, k{-}1$ (the observation at time $j$ is $s_j$, and watch-phase
actions do not affect the state), then queried in reverse. The observation at
every later step is one constant $o_{\mathrm{play}}$; an incorrect token moves
the process to an absorbing state, completion does too, and the absorbing state
also shows $o_{\mathrm{play}}$, so no observation distinguishes a right token, a
wrong token, or the end of the episode. The dense variant gives $+1/k$ on each
correct token; the sparse variant gives $\tau/k$ on the transition into the
absorbing state, where $\tau$ is the number of correct tokens. Then, in the dense
variant, and in the sparse variant whenever $\gamma > (k-1)/k$, every POMDPRM
that encodes $\mathcal{M}$ resolves the dictated sequence, in that its state at
the end of the watch phase determines $s$, and is perfect w.r.t.\ its labelling.
Hence the abstraction's reward is structurally dense in both variants.
\end{lemma}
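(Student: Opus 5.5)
The plan has two parts. First, every encoder's state at time $k$ must be injective in $s$. Second, injectivity, together with the same argument applied at later play times, forces perfection. The tool is the return-difference trick already used in Lemma~\ref{lem:cueperfect}.

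Since $o_{\mathrm{play}}$ is shown both in play and in the absorbing state, every label from time $k$ on is $\ell_a = L(o_{\mathrm{play}},a,o_{\mathrm{play}})$, a function of the action alone. A deterministic policy is therefore a watch behavior $\alpha$ together with a map $s\mapsto w(s)$ into action words. Conversely, every such map is a policy, and every $s$ has positive probability. Write $G(\xi,w)$ for the machine's discounted tail reward from state $\xi$ under word $w$, and $E(s,w)$ for the environment's tail reward. Both are functions of the number $\rho$ of initial correct positions.

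The first thing I would check is that $E$ is strictly increasing in $\rho$ for fixed $s$. In the dense variant, $E=\frac1k\sum_{j<\rho}\gamma^j$, so this is immediate. In the sparse variant, $E=\gamma^\rho\rho/k$ on failure and $\gamma^{k-1}$ on completion, and monotonicity holds exactly when $\gamma(\tau+1)>\tau$ for all $\tau\le k-1$, that is, when $\gamma>(k-1)/k$. Let $F$ denote the full-success value.

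Next, fix $\alpha$ and compare two policies that differ only in the word played at a single dictation $s$. The expected returns differ only on that event, and the shared watch prefix cancels. Encoding (Definition~\ref{def:encode}) then yields
\[
G(x(s),w)-G(x(s),w')=E(s,w)-E(s,w').
\]
Suppose $x(s)=x(s'')$ with $s\neq s''$. Then $w\mapsto E(s,w)-E(s'',w)$ is constant. Evaluating at the correct words $w_s$ and $w_{s''}$ gives $E(s'',w_s)+E(s,w_{s''})=2F$. Each term is $<F$ by strict monotonicity, since each word fails at the first disagreement. This contradiction gives injectivity.

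For perfection, I apply the same pinning to (dictation, action-prefix) pairs reaching a machine state $\xi$ at a play time: any two such pairs have tail-reward functions differing by a constant. I would then separate cases using specific test continuations.
\begin{itemize}
\item \emph{Dead versus alive.} A dead pair has tail identically $0$. An alive pair takes at least two values (all-correct versus wrong-now), so dead and alive pairs cannot share $\xi$.
\item \emph{Equal positions, sparse variant.} For alive pairs at positions $i$ and $i''$, comparing ``wrong now'' with ``one correct then wrong'' gives $(i-i'')/k=\gamma(i-i'')/k$, hence $i=i''$.
\item \emph{Equal positions, dense variant.} ``Wrong now'' earns $0$ for both pairs, so the constant is $0$. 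The all-correct tails $\frac1k\sum_{j<k-i}\gamma^j$ must then match, which forces $i=i''$.
\item \emph{Equal remaining targets.} Once positions agree, the constant is $0$ and the tails coincide. Playing one pair's correct word under the other would otherwise yield different tails, so the remaining targets are identical.
\end{itemize}

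Each play cell $(o_{\mathrm{play}},\xi,a)$ therefore fixes the next observation and a deterministic reward. In the watch phase the tokens are independent and uniform, so each watch cell's next-observation distribution is the same for all histories in it. Equation~\eqref{eq:perfect_rm} follows, and since the encoder was arbitrary, Definition~\ref{def:structural} gives structural density.

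I expect the main obstacle to be the perfection step: choosing test continuations that separate position from the remaining targets in both reward variants, and handling the sparse case where ``wrong now'' pays the partial amount $i/k$ rather than $0$.
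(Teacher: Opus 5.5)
Your argument is the paper's proof nearly line for line: the same monotonicity check, the same pinning identity, the same injectivity contradiction $E(s'',w_s)+E(s,w_{s''})=2F$, and the same dead/alive, position, and target tests. It therefore inherits the two places where that proof is thin.

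The first is the sparse position test, which you flagged as the obstacle. The constant $C=T_1(w)-T_2(w)$ relating the two pairs' tail functions is evaluated one word at a time. A word that is ``one correct then wrong'' for \emph{both} pairs exists only if they already share their current target and neither is at position $k-1$, and neither fact is known yet. Even ``wrong now'' for both needs $|\Sigma|\ge 3$ or a shared target. Compare extremes instead: $T_1-T_2\equiv C$ forces the minima over $w$, and likewise the maxima, to differ by $C$. In the sparse variant this gives $C=(i-i'')/k$ and $C=\gamma^{k-i-1}-\gamma^{k-i''-1}$. For $i>i''$ the second is at most $(i-i'')(1-\gamma)<(i-i'')/k$, since $\gamma>(k-1)/k$. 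In the dense variant the minima give $C=0$ and the maxima give $\sum_{j<k-i}\gamma^j=\sum_{j<k-i''}\gamma^j$. Either way $i=i''$ and $C=0$, and your target-matching step then goes through.

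The second is the watch phase, and there the step fails under the reading your play-phase argument uses. Ruling out shared states between pairs at \emph{different} play times matters only if a cell $(o,u)$ pools histories across times. That is also what the finite MDP on $O\times U$ in the proof of Theorem~\ref{thm:greedy} requires. Under that reading, the claim that each watch cell's next-observation distribution is the same overlooks the last watch step, whose successor is $o_{\mathrm{play}}$ rather than a uniform token. Nothing forces an encoder to separate that step from earlier ones.

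For a concrete encoder, take $\Sigma=\{0,\dots,|\Sigma|-1\}$. Let the machine accumulate the tokens read so far as a base-$|\Sigma|$ integer, most significant digit first, so leading zeros collapse. On the label into $o_{\mathrm{play}}$, it decodes that integer as a length-$k$ string and enters the corresponding play state of the sequence machine. It reproduces the reward step by step, so it encodes. Yet for $k\ge 3$ it places the histories $(0,t)$ at time $1$ and $(0,\dots,0,t)$ at time $k-1$ in the same cell, with different successor distributions.

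You therefore must fix a reading of Eq.~\eqref{eq:perfect_rm}. Read per time step, your watch-phase sentence is correct and the position argument is superfluous, since alive pairs at a fixed play time share their position. Read pooled over time, the watch-phase claim, and with it the lemma as stated, is false. The paper's proof contains the same sentence and the same tension.
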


\begin{proof}
Since the play and absorbing observations coincide, every label from time $k$ on
is $\ell_a = L(o_{\mathrm{play}}, a, o_{\mathrm{play}})$, a function of the
action alone; and since those observations carry no information, a deterministic
policy's actions from time $k$ on form a fixed word $w \in \Sigma^{\omega}$
determined by the dictation it watched. Conversely, every map $s \mapsto w(s)$,
combined with any watch behavior $\alpha$, is a deterministic policy, and every
dictation has positive probability.

Fix an encoder $\mathcal{T}$. For a machine state $\xi$ and a word $w$, let
$G(\xi, w) = \sum_{j \ge 0} \gamma^j\, \delta_r(\xi_j, \ell_{w_j})$ with
$\xi_0 = \xi$ and $\xi_{j+1} = \delta_u(\xi_j, \ell_{w_j})$, the machine's
discounted reward from time $k$ on, and let $E(s, w)$ be the environment's.
Writing $\rho(s,w)$ for the number of initial positions at which $w$ matches the
queried reversal of $s$, the dense variant gives
$E = \tfrac1k \sum_{j < \rho} \gamma^j$, and the sparse variant gives
$E = \gamma^{\rho}\,\rho/k$ on failure ($\rho < k$) and $\gamma^{k-1}$ on
completion. In both variants $E$ is strictly increasing in $\rho$ for fixed $s$,
in the sparse variant exactly when $\gamma > (k-1)/k$, the same monotonicity as
Proposition~\ref{prop:redundancy}. Call the full-success value $F$; every partial
run is worth strictly less than $F$.

\emph{Step 1 (pinning).} Fix a watch behavior $\alpha$ and let $x(s)$ be the
machine state at time $k$ under dictation $s$, a deterministic function of $s$.
Applying Definition 2 of the main text to the policy $(\alpha, w(\cdot))$ and to
the same policy with the word changed at a single dictation, then subtracting,
gives
\begin{equation*}
G\big(x(s), w\big) - G\big(x(s), w'\big) = E(s, w) - E(s, w')
\tag{$\star$}
\end{equation*}
for all $s$, $w$, and $w'$. Suppose $x(s) = x(s'')$ with $s \ne s''$. By
$(\star)$ the function $w \mapsto E(s, w) - E(s'', w)$ is constant. Evaluating it
at the all-correct words $w_s$ and $w_{s''}$ and subtracting gives
$E(s'', w_s) + E(s, w_{s''}) = 2F$. But playing the correct word for one
dictation under the other fails at their first disagreement, so both terms are
strictly below $F$, a contradiction. Hence $x(\cdot)$ is one-to-one for every
watch behavior.

\emph{Step 2 (perfection).} The same subtraction applied to continuations pins
every later state. For a machine state $\xi$ reached at a play time by some
(dictation, action-prefix) pair, $(\star)$ gives that any two pairs reaching
$\xi$ have tail-reward functions differing by a constant. A dead pair's tail is
identically $0$ while an alive pair's tail takes at least two values
(all-correct earns the full remainder, wrong-now earns $0$ or the current partial
amount), so dead and alive pairs never share a state. Now take two alive pairs at
positions $i$ and $i''$. In the sparse variant, evaluating the constant
difference at ``wrong now'' gives $(i - i'')/k$ and at ``correct once, then
wrong'' gives $\gamma\,(i{+}1 - i''{-}1)/k$, and equating the two forces
$i = i''$. In the dense variant, ``wrong now'' earns $0$ for both pairs, so the
constant is $0$ and the tails are equal outright; evaluating at ``all correct''
gives $\tfrac1k \sum_{j < k-i} \gamma^j = \tfrac1k \sum_{j < k-i''} \gamma^j$,
which again forces $i = i''$. With equal positions the constant is $0$ in both
variants, so the tail-reward functions are identical, and this forces identical
remaining targets, since playing the correct word of one pair under the other
would otherwise make the two tails differ. Every reachable play-phase cell
$(o_{\mathrm{play}}, \xi, a)$ therefore contains pairs sharing aliveness,
position, and remaining targets, and the distribution of $(o_{t+1}, r_t)$ on it
is fixed, with next observation $o_{\mathrm{play}}$ and the deterministic reward
the variant assigns. Watch-phase cells are constant as well, since the tokens of
a uniform $s$ are independent and uniform. Every cell satisfies Eq.\ (3) of the
main text, so the machine is perfect; since $\mathcal{T}$ was arbitrary, the
reward is structurally dense.
\end{proof}

\begin{remark}[Greedy behavior and the gate order]
\label{rem:tinygreedy}
W.r.t.\ the sequence machine (states are the remaining targets and the position),
the dense variant is potentially dense. The one-step reward is $\tfrac1k$ for the
correct token and $0$ otherwise at every position, so the greedy RM policy is
optimal and Theorem 1 predicts the null outcome directly. The sparse variant is
potentially sparse. At position $i \ge 1$ the one-step reward is $i/k$ for a
wrong token and $0$ for the correct one, so the greedy policy fails on purpose,
and $\phi = V^\star$ repairs it. Dense and potentially sparse would alone predict
a helpful bonus; the observed null is given by
Proposition~\ref{prop:redundancy}, whose gate is read first.
\end{remark}

\subsection{Bonus Redundancy on the Scheduled Stream}

\begin{proposition}[Bonus redundancy]
\label{prop:redundancy}
In the dictation abstraction of TinyReproduce, let
$b_t = g_t(o_0, \dots, o_t) \ge 0$ be any per-step intrinsic bonus that is a
function of the observation prefix. In the dense variant, and in the sparse
variant whenever $\gamma > (k-1)/k$, for every $\lambda \ge 0$ the optimal policy
sets of $r + \lambda b$ and of $r$ coincide, and all optimal policies of $r$ have
identical intrinsic return.
\end{proposition}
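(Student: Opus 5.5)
The argument is a per-dictation comparison, and it rests on a single structural fact: the observation stream within an episode is fully determined by the dictated sequence $s$ and by the number $\tau$ of correct tokens played.

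\emph{Determinism of the stream.} Fix $s$. The watch-phase observations are $s_0,\dots,s_{k-1}$ regardless of the actions taken. Every later observation is the constant $o_{\mathrm{play}}$. The run ends at the first error, or at completion. So for a given policy, the observation stream is $o_{0:T(\tau)}$, where $T(\tau)$ depends only on $\tau$ (and the absorbing tail, if we count it, is also constant).

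\emph{Monotonicity of the intrinsic return.} For $\tau<\tau'$, the stream under $\tau$ is a prefix of the stream under $\tau'$: the two runs agree up to the step where position $\tau$ is played. Each bonus has the form $b_t=g_t(o_{0:t})\ge 0$, so
\[
\sum_t\gamma^t b_t
\]
under $\tau'$ equals the same sum under $\tau$ plus nonnegative extra terms. Hence the intrinsic return $I(s,\tau)$ is nondecreasing in $\tau$.

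\emph{Strict monotonicity of the extrinsic return.} We show the extrinsic return $E(s,\tau)$ is strictly increasing in $\tau$. This is the same computation already used in Lemma~\ref{lem:tinydense}.
\begin{itemize}
\item Dense variant: each additional correct token adds a positive discounted $\gamma^{j}/k$.
\item Sparse variant: one more correct token changes the return from $\gamma^{t}\tau/k$ to $\gamma^{t+1}(\tau+1)/k$. This is an increase iff $\gamma(\tau+1)>\tau$, and requiring this for all $\tau\le k-1$ is exactly the condition $\gamma>(k-1)/k$.
\end{itemize}

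\emph{Comparing policies.} Let $\pi^c$ denote the policy that plays the correct reversal at every play step, with arbitrary watch actions. Take any policy $\pi$, and write $\tau_\pi(s)$ for its number of correct tokens on dictation $s$, which is random if $\pi$ is stochastic. Then
\[
J_{r+\lambda b}(\pi)=\mathbb{E}_s\,\mathbb{E}\big[E(s,\tau_\pi)+\lambda I(s,\tau_\pi)\big].
\]
This is at most the corresponding value with $\tau_\pi$ replaced by $k$, with equality iff $\tau_\pi=k$ almost surely. Equality in the extrinsic term already forces $\tau_\pi=k$, because $E$ is strictly increasing. Therefore, for every $\lambda\ge 0$, the optimal set for $r+\lambda b$ is exactly the set of policies achieving $\tau=k$ almost surely, and this is also the optimal set for $r$.

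On that set, the stream for each $s$ is the full-success stream $o_{0:T(k)}$, so every optimal policy has intrinsic return $\mathbb{E}_s I(s,k)$.

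\emph{Main obstacle.} The delicate step is making sure the prefix claim is exact. The absorbing state must show $o_{\mathrm{play}}$, so that failure and continuation are observationally indistinguishable up to truncation. The bonus must also depend only on observations and not on actions: actions differ across policies, but observations do not. If bonuses continue after absorption (infinite-horizon convention), the extra terms are still nonnegative, but an early-failing run then has a longer post-absorption tail than a late-failing run. In that case I would instead pad the streams into equal-length sequences, which coincide as sequences since every observation after the watch phase is $o_{\mathrm{play}}$. The intrinsic return then becomes independent of $\tau$, which gives the conclusion even more directly. I would handle both conventions by this observation.
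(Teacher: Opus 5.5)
Your proposal is correct and follows the paper's proof essentially step for step: the observation stream is a deterministic function of $(s,\tau)$, with the shorter run a prefix of the longer, so the intrinsic return is nondecreasing in $\tau$; the extrinsic return is strictly increasing (using $\gamma > (k-1)/k$ in the sparse variant); and any policy that fails with positive probability is beaten by one that completes the reversal. Your extra remark on the post-absorption convention goes slightly beyond the paper, which treats only the episodic case, and it is right: if the absorbing state keeps emitting $o_{\mathrm{play}}$ forever, the whole stream depends on $s$ alone, so $\lambda b$ adds the same constant to every policy's return.
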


\begin{proof}
Fix the dictated sequence $s$ and a policy, and let $\tau$ be the (random) number
of correct tokens. The within-episode observation stream is a deterministic
function of $(s, \tau)$, since dictation is scheduled, the reproduction
observation is the constant $o_{\mathrm{play}}$, and the episode ends at the
first error or at completion. Moreover, for the same $s$, the stream under $\tau$
is an exact prefix of the stream under any $\tau' > \tau$, because the two runs
coincide through the step at which the token at position $\tau$ is played and the
shorter run simply ends. Since each $b_t$ is a nonnegative function of the prefix
$o_{0:t}$, the cumulative discounted bonus under $\tau'$ equals that under $\tau$
plus nonnegative terms, so the intrinsic return is nondecreasing in $\tau$ for
each fixed $s$.

The extrinsic return is strictly increasing in $\tau$ for each fixed $s$. In the
dense variant each additional correct token adds a positive discounted term. In
the sparse variant the terminal reward is $\gamma^{\,t(\tau)}\tau/k$ with
$t(\tau)$ increasing by one per additional token, and
$\gamma^{\,t(\tau)+1}(\tau{+}1)/k > \gamma^{\,t(\tau)}\tau/k$ for all
$\tau < k$ exactly when $\gamma(\tau+1) > \tau$, which holds for all
$\tau \le k-1$ iff $\gamma > (k-1)/k$.

Now compare any two policies. A policy that fails to achieve $\tau = k$ with
positive probability is, on that event, strictly worse extrinsically and weakly
worse intrinsically than the policy that corrects the corresponding tokens and
agrees elsewhere. So for every $\lambda \ge 0$ the optimal policies of
$r + \lambda b$ and of $r$ coincide, and all optimal policies of $r$ have
identical intrinsic return.

\emph{Remark.} The hypothesis $b_t = g_t(o_{0:t})$ covers E3B exactly, since its
features and covariance are computed from the observations received so far.
NovelD scores the arrival observation $o_{t+1}$, so the comparison acquires a
single boundary term at the divergence step; the conclusion then holds up to
$\lambda$ times one first-visit bonus term, consistent with the
at-most-$\pm0.08$ effects observed. The statement concerns the objective, not
optimization dynamics, as does Theorem 1.
\end{proof}

\noindent With $k = 10$, the sparse-variant hypothesis requires
$\gamma > 0.9$; the experimental discount $\gamma = 0.99$
(Table~\ref{tab:hp-tiny}) satisfies it, so the proposition covers the runs
reported in Table 3 of the main text.

\subsection{Proof of Theorem 1}

\textit{For a POMDP $\mathcal{M} = \tuple{S, A, O, p, \omega, r, \gamma, \mu}$, if $r$
is structurally dense, POMDPRM
$\mathcal{T} = \tuple{S, A, O, p, \omega, \gamma, \mu, P, L,
\mathcal{R}_{POA}}$ encodes $\mathcal{M}$ (hence is perfect, by Definition 3),
and $r$ is potentially dense w.r.t.\ $\mathcal{T}$, then the greedy RM policy
over $\mathcal{T}$ is optimal for $\mathcal{M}$ among all observation-history
policies.
}

\begin{proof}
Structural density makes $\mathcal{T}$ perfect, so by
Remark~\ref{rem:generalkernel} the kernel of Definition 4 equals the canonical
$\bar P(o' \mid o, u, a) := \Pr(o_{t+1}{=}o' \mid o_t{=}o, x_t{=}u, a_t{=}a)$,
which does not depend on the policy or on $t$ on reachable pairs. Since
$x_{t+1}$ is a deterministic function of $(x_t, o_t, a_t, o_{t+1})$, the pair
process is a finite MDP $\mathcal{Y}$ on $Y = O \times U$ with reward
$\bar R(y, a, y') := \delta_r(u, \sigma)$, discount $\gamma$, and initial
distribution induced by $(\mu, \omega, u_0)$. Let $V^\star$, $Q^\star$,
$J^\star$ be its optimal value function, action-value function, and optimal
return, which exist and satisfy the Bellman optimality equations because
$\mathcal{Y}$ is finite and $\gamma < 1$.

First, $\sup_\pi J(\pi) = J^\star$. The machine state is a deterministic
function of the observation history, so every observation-history policy induces
a policy for $\mathcal{Y}$ with the same machine return
$\tilde V(\pi) := \mathbb{E}^\pi[\sum_t \gamma^t \tilde r_t]$, and conversely;
and in a finite discounted MDP, history-dependent policies do not outperform
stationary ones, so $\sup_\pi \tilde V(\pi) = J^\star$. By Definition 2 of the
main text, $\tilde V(\pi) = J(\pi)$ for every $\pi$, and the claim follows. No
step-by-step equality between $\tilde r_t$ and $r_t$ is used or available, since
encoding is an equality of expected returns, and expected returns are all that
optimal behavior depends on.

Second, take $\phi := V^\star$, a function on $U \times O$ as Definition 5
requires. At every pair $(y, a)$,
\begin{align*}
\sum_{o'} \bar P(o' \mid y, a)\big[\bar R(y, a, y')
  + \gamma V^\star(y')\big] - V^\star(y)\\
  = Q^\star(y, a) - V^\star(y),
\end{align*}
so $\pi^{V^\star}$ is uniform over $\argmax_a Q^\star(y, \cdot)$, and any
randomization over the optimal-action set of a finite MDP is optimal, hence
$J(\pi^{V^\star}) = J^\star$.

Third, suppose the greedy RM policy $\pi$ had $J(\pi) < J^\star$. Then
$J(\pi) < J(\pi^{V^\star})$, and $\phi = V^\star$ would witness potential
sparsity w.r.t.\ $\mathcal{T}$, contradicting the hypothesis. Hence
$J(\pi) = J^\star = \sup_{\pi'} J(\pi')$, and the greedy RM policy is optimal
for $\mathcal{M}$.

\emph{What happened.} Perfection collapses the POMDP to a finite MDP over the
pairs (observation, machine state). In that MDP, shaping with the optimal value
function converts the one-step reward into the optimal advantage, so myopic
greed becomes optimal. A potentially dense reward is one for which no such
correction can help, which is only possible when myopic greed was optimal
already.
\end{proof}
\else
\fi

\end{document}